\definecolor{Gray}{gray}{.95}
\definecolor{light_blue}{rgb}{0.91, 0.99, 0.99}
\newcolumntype{g}{>{\columncolor{Gray}}c}
\newcolumntype{q}{>{\columncolor{light_blue}}c}
\xpretocmd{\eqref}{Eq.~}{}{}
\newcommand{\push}[2]{\ensuremath{{#2}_\sharp{#1}}}
\newcommand{\pushlong}[2]{\ensuremath{{#2}_*({#1})}}
\def\dominates{\ensuremath{\succcurlyeq}}
\def\quant{\ensuremath{\sQ_{\alpha}}}
\def\eqref#1{eq.~\ref{#1}}
\def\1{\bm{1}}
\def\vc{{\bm{c}}}
\def\vf{{\bm{f}}}
\def\vs{{\bm{s}}}
\def\vw{{\bm{w}}}
\DeclareMathAlphabet{\mathsfit}{\encodingdefault}{\sfdefault}{m}{sl}
\SetMathAlphabet{\mathsfit}{bold}{\encodingdefault}{\sfdefault}{bx}{n}
\def\gC{{\mathcal{C}}}
\def\gD{{\mathcal{D}}}
\def\gF{{\mathcal{F}}}
\def\gO{{\mathcal{O}}}
\def\gP{{\mathcal{P}}}
\def\gS{{\mathcal{S}}}
\def\gX{{\mathcal{X}}}
\def\gY{{\mathcal{Y}}}
\def\gZ{{\mathcal{Z}}}
\def\sQ{{\mathbb{Q}}}
\newtheorem{theorem}{Theorem}[section]
\newtheorem{lemma}[theorem]{Lemma}
\theoremstyle{definition}
\newtheorem{definition}[theorem]{Definition}
\newtheorem{proposition}[theorem]{Proposition}
\theoremstyle{remark}
\newtheorem{remark}[theorem]{Remark}
\newtheoremstyle{named}{}{}{\itshape}{}{\bfseries}{.}{.5em}{\thmnote{#3}#1}
\theoremstyle{named}
\newtheorem*{rep@theorem}{\rep@title}
\newcommand{\newreptheorem}[2]{%
	\newenvironment{rep#1}[1]{%
		\def\rep@title{#2 \ref{##1}}%
		\begin{rep@theorem}}%
		{\end{rep@theorem}}}
\def\P{\mathbb{P}}
\newcommand*{\prob}[1]{\mathbb{P}}
\def\N{\mathcal{N}}
\newcommand*{\ind}[1]{\mathbf{1}\left(#1\right)}
\newcommand{\E}{\mathbb{E}}
\newcommand{\R}{\mathbb{R}}
\DeclareMathOperator*{\argmin}{arg\,min}
\title{Non-exchangeable Conformal Prediction with \\Optimal Transport: Tackling Distribution Shifts \\ with Unlabeled Data}
\author{%
    Alvaro H.C. Correia$^\dagger$ \quad
    Christos Louizos$^\dagger$ \\
  Qualcomm AI Research\thanks{Qualcomm AI Research is an initiative of Qualcomm Technologies, Inc. $^\dagger$Equal contribution.} \\
  Amsterdam, The Netherlands \\
  \texttt{\{acorreia, clouizos\}@qti.qualcomm.com} \\
}
\begin{document}

\maketitle

\begin{abstract}
    Conformal prediction is a distribution-free uncertainty quantification method that has gained popularity in the machine learning community due to its finite-sample guarantees and ease of use. Its most common variant, dubbed split conformal prediction, is also computationally efficient as it boils down to collecting statistics of the model predictions on some calibration data not yet seen by the model. Nonetheless, these guarantees only hold if the calibration and test data are exchangeable, a condition that is difficult to verify and often violated in practice due to so-called distribution shifts. The literature is rife with methods to mitigate the loss in coverage in this non-exchangeable setting, but these methods require some prior information on the type of distribution shift to be expected at test time. In this work, we study this problem via a new perspective, through the lens of optimal transport, and show that it is possible to estimate the loss in coverage and mitigate arbitrary distribution shifts, offering a principled and broadly applicable solution.
\end{abstract}

\section{Introduction}
Conformal prediction \citep{vovk2005algorithmic} (CP) works under the assumption that calibration and test data are exchangeable. Exchangeability is a weaker requirement than the more common i.i.d.~assumption but still implies that samples are identically distributed, which is hard to verify and ensure in practical applications. Therefore, it is important to develop conformal methods capable of adapting to potential distribution shifts or, at least, quantifying the gap in coverage caused by such shifts. In this paper, we study the effect of distribution shifts in conformal prediction through the lens of optimal transport, which proved instrumental in not only quantifying coverage gaps, but also alleviating them via a reweighting of the calibration data.

We start by introducing the notion of \emph{total coverage gap}: the expected coverage gap over all possible target coverage rates in $[0, 1]$. This metric captures the aggregate effect of distribution shift on conformal prediction and motivates the subsequent contributions, which build on this concept to provide theoretical bounds and practical strategies for mitigating the gap.
\begin{enumerate}[noitemsep,topsep=0pt,partopsep=0pt,parsep=5pt]
    \item We derive two new upper bounds to the total coverage gap, formulated in terms of optimal transport distances between the distributions of calibration and test nonconformity scores.
    \item We show that one of our upper bounds, which requires only unlabeled samples from the test distribution, can be used to learn weights $\vw = \{w_i\}_{i=1}^n$ for the calibration data. These weights can then be used in CP to reduce the gap in coverage during test time.
    \item We evaluate our methods on a (toy) regression task and on the ImageNet-C and iWildCam datasets. For the classification tasks we also consider the more challenging setting which includes covariate and label shift. 
\end{enumerate}
The paper is structured as follows. Section~\ref{sec:background} provides the necessary background for our main theoretical results. Section~\ref{sec:theoretical_results} presents our two new upper bounds on the total coverage gap, and Section~\ref{sec:learning} demonstrates their application to reduce the coverage gap. Finally, we review related work in Section~\ref{sec:related_work}, report experimental results in Section~\ref{sec:experiments}, and conclude in Section~\ref{sec:conclusion}.

\section{Background} \label{sec:background}
In this section, we lay out the background necessary for our main results. We start with a brief introduction to conformal prediction followed by an overview of optimal transport. 

\subsection{Conformal Prediction}
Conformal prediction~\cite{vovk2005algorithmic} is a framework to extract prediction sets from predictive models that satisfy finite-sample coverage guarantees under specific assumptions\footnote{Conformal prediction is often described as an uncertainty quantification method, but it may be more accurately viewed as an uncertainty representation technique: it conveys uncertainty through the size of prediction sets rather than assigning a numerical value to uncertainty.}. More precisely, consider the calibration set $(X_1,Y_1),\dots,(X_n,Y_n)$ drawn from an unknown distribution $P$ on $\mathcal{X}\times\mathcal{Y}$, formally $(X_1,Y_1,\dots,X_n,Y_n) \sim P^n$. Given this set, conformal prediction constructs prediction sets $\mathcal{C}(X_t)$ for new points $X_t \sim P$ such that the marginal coverage property holds for any $\alpha \in [0,1]$:
\begin{equation} \label{eq:main_cp}
	\P(Y_t \in \gC(X_t)) \geq 1- \alpha,
\end{equation}
where the probability is taken over the randomness of $\{(X_i, Y_i)\}_{i=1}^n$ and $(X_t, Y_t)$. The prediction set $\gC(X_t)$ is constructed using \emph{(non)conformity} scores, which quantify how well a sample fits within other samples in a set. One of the most common conformal prediction methods is that of split-conformal prediction (SCP)~\cite{papadopoulos2002inductive}. In SCP, a score $s(X_i, Y_i)$ is obtained for each point in a calibration set $\{(X_i, Y_i)\}_{i=1}^n$, and at test time $\gC(X_t)$ is constructed as
\begin{align*}
    \gC(X_t) = \left\{y \in \gY : s(X_t, y) \leq \quant\left(\{s(X_i, Y_i)\}_{i=1}^n \right) \right\},
\end{align*}
with $\quant$ the $1-\alpha$ quantile of the empirical distribution defined by the set of scores $\{s(X_i, Y_i)\}_{i={1}}^n$.

The main assumption for the marginal coverage guarantee to hold is that of \emph{exchangeability}; the new point $(X_t, Y_t)$ needs to be exchangeable with the points in the calibration set $\{(X_i, Y_i)\}_{i=1}^n$, i.e., it should follow the same distribution $P(X, Y)$. Unfortunately, violations of the exchangeability assumption are all too common \citep{koh2021wilds} and the naive application of standard SCP when $(X_t, Y_t)$ comes from another distribution $Q(X, Y)$ could produce misleading prediction sets that do not achieve the desired coverage rate \citep{vovk2005algorithmic,tibshirani_conformal_2019,barber2023conformal}. Proper usage of conformal prediction in these settings requires methods to (i) quantify the coverage gap caused by the distribution shift, and (ii) mitigate the effect of the shift on the CP procedure itself to get as close as possible to the target coverage rate. 

We refer the reader to \citep{shafer_tutorial_2008,angelopoulos_gentle_2021} for more thorough introductions to conformal prediction.

\subsection{Optimal Transport: Couplings and Wasserstein Distance}
Consider a complete and separable metric space $(\gZ, c)$, where $c: \gZ \times \gZ \rightarrow \R$ is a metric. Let $\gP_p(\gZ)$ be the set of all probability measures $P$ on $(\gZ, c)$ with finite moments of order $p \geq 1,$ i.e., $\int_{\gZ} c(z_0, z)^pdP(z) < \infty$ for some $z_0 \in \gZ.$ 
The $p$-Wasserstein distance is a metric on $\gP_p(\gZ)$ that is defined for any measures $P$ and $Q$ in $\gP_p(\gZ)$ as
\begin{equation} \label{eq:p-was}
	W_p(P,Q) = \left( \inf_{\pi \in \Gamma(P,Q)} \int_{\gZ \times \gZ} c(z, z')^p d\pi(z, z') \right)^{1/p}
\end{equation}
where $\Gamma(P,Q)$ denotes the collection of all measures on $\gZ \times \gZ$ with marginals $P$ and $Q$. We refer to any probability measure in 
$\Gamma(P,Q)$ as a coupling of $P$ and $Q$ and use $\pi^*(P, Q)$ to denote $p$-Wasserstein optimal couplings, i.e., any coupling that attains the infimum in (\ref{eq:p-was}).

Importantly, Wasserstein distances are also defined for discrete measures, and empirical measures in particular. Let $\hat P_n$ and $\hat Q_m$ denote the empirical distributions of samples $\{z_i\}_{i=1}^n, z_i \sim P$ and $\{z'_j\}_{j=1}^m, z'_j \sim Q,$ which induce empirical measures $\hat P_n = \sum_{i=1}^n \delta_{z_i}$ and $\hat Q_m = \sum_{j=1}^m \delta_{z'_j}.$ In that case, a coupling can be captured by a matrix $\Gamma$, with $\Gamma_{i,j}$ the mass to be transported from $z_i$ to $z'_j$. Similarly, for empirical measures the cost function $c$ induces a cost matrix with $C_{i,j} = \norm{z_i - z'_j}^p$ such that the transportation problem is given by 
\begin{equation*}
    \nonumber \min_\Gamma \sum_{i,j} C_{i,j} \Gamma_{i,j} \qquad \text{subject to} \quad \sum_{i=1}^n \Gamma_{i,j} = 1/m \,\,\, \forall j \in [\![m]\!], \,\,\,  \sum_{j=1}^m \Gamma_{i,j} = 1/n \,\,\, \forall i \in [\![n]\!].
\end{equation*}

In this work, we will be concerned with the distribution over nonconformity scores, which are typically one-dimensional. In this case, the $p$-Wasserstein simplifies to 
\begin{equation}
	W_p(P,Q) = \left( \int_0^1 \norm{F_P^{-1}(q) - F_Q^{-1}(q)}^p dq \right)^{1/p},
\end{equation}
where $F_P^{-1}$ (resp. $F_Q^{-1}$) is the quantile function, i.e., the inverse of the cumulative distribution function (CDF) $F_P$ (resp. $F_Q$) under measure $P$ (resp. $Q$). For $p=1$, we can also express the Wasserstein distance in terms of the respective CDFs
\begin{equation} \label{eq:w1_cdf}
   W_1(P, Q) = \int_\R \left|F_P(z) - F_Q(z)\right| dz.
\end{equation}
In this paper, we focus on the 1-Wasserstein distance. Our main results rely solely on the definitions and properties outlined here, plus basic properties like the triangle inequality. Nevertheless, the interested reader will be well served by the excellent introductions to optimal transport in \cite{villani2008optimal,peyre2019computational}. 
\section{Theoretical Results} \label{sec:theoretical_results}
In this section, we define the notion of total coverage gap and introduce our main theoretical results that allow us to upper bound it. For the sake of conciseness we defer the proofs to Appendix~\ref{app:theo}.

We start by laying out the necessary notation. Let $X \in \gX$ and $Y \in \gY$ be input and output variables and $s: \gX \times \gY \rightarrow \R$ be a (non)conformity score function. Regardless of the type of distribution shift (e.g., covariate or label shifts) its effect on the conformal prediction guarantees will manifest itself in the distribution over calibration and test scores. Therefore, in this paper we will directly manipulate the distribution of scores  $S = s(X, Y)$, and to that end we use $\push{P}{s}$ and $\push{Q}{s}$ to denote the calibration and test distributions over the scores, i.e., $\push{P}{s} = \pushlong{P}{s}$ is the pushforward measure of $P$ by $s$. When $\push{P}{s}$ is absolutely continuous with respect to the Lebesgue measure on $\mathbb{R}$, we write its density as $p_{\push{P}{s}}$. We also use subscripts to distinguish between scores observed during calibration $S_c$ and at test time $S_t$. As usual, we will use uppercase letters for random variables and lowercase letters for their realizations, e.g., $S_t=s_t$. 
We reserve calligraphic letters for sets, e.g., $\gS_c = \{s(X_i, Y_i)\}_{i=1}^n$ is the set of calibration scores obtained from a sample of size $n$ drawn from $P^n$, and $\mathcal{C}(X_t)$ denotes a prediction set for test variable $X_t$.

\subsection{Total Variation Distance Bound}
With this notation, we write the coverage under $P$ and $Q$ as
\begin{align*} 
    \nonumber P(Y_t \in&~\gC(X_t)) = P(S_t \leq \quant(\gS_c)) = \E_{S_t \sim \push{P}{s}} \left[ \E_{\gS_c \sim \push{P^n}{s}} [\ind{S_t \leq \quant(\gS_c)}] \right] \\
    \nonumber Q(Y_t \in&~\gC(X_t)) = Q(S_t \leq \quant(\gS_c)) = \E_{S_t \sim \push{Q}{s}} \left[ \E_{\gS_c \sim \push{P^n}{s}} [\ind{S_t \leq \quant(\gS_c)}] \right],
\end{align*}
with $\quant(\gS_c)$ the $1{-}\alpha$ quantile of the empirical distribution defined by a set of calibration scores $\gS_c$.

In general, one cannot guarantee valid coverage under arbitrary test distributions $Q$,  i.e., we cannot ensure $Q(Y_{test} \in \gC(X_{test})) \geq 1-\alpha$. Therefore, it is important to quantify the gap in coverage induced by the change in distribution from $P$ to $Q$. To that end, let $\Delta(\alpha)$ denote the coverage gap for a specific $\alpha$ value 
\begin{multline*} 
    \nonumber \Delta_{P,Q}(\alpha) := \big| P(S_t \leq \quant(S_c)) - Q(S_t \leq \quant(S_c)) \big |  \\
    \nonumber = \biggr|  \E_{S_t \sim \push{P}{s}} \left[ \E_{\gS_c \sim \push{P^n}{s}} \!\left[\ind{S_t \leq \quant(\gS_c)}\right] \right] \!\! - \E_{S_t \sim \push{Q}{s}} \left[ \E_{\gS_c \sim \push{P^n}{s}} \!\left[\ind{S_t \leq \quant(\gS_c)}\right] \right] \biggr|,
\end{multline*}
where $\ind{\cdot}$ is the indicator function.
It is easy to show the coverage gap is upper bounded by the total variation distance. We first restate the following well-known result for the total variation between two distributions $P$ and $Q$ (see e.g. \citet{farinhasnon}):
\begin{equation*}
	D_{TV}(P, Q) \geq |\E_P[g] - \E_Q[g]|,
\end{equation*}
for some function $g$ such that $|g| \leq 1$. It suffices to take $g$ as $g(x) = \E_{\gS_c \sim \push{P^n}{s}} [\ind{x \leq \quant(\gS_c)}]$, which is clearly bounded with $|g(x)| \leq 1$ for all $x$, to get $\Delta_{P,Q}(\alpha) \leq D_{TV}(P, Q).$
Unfortunately, estimating the total variation distance between $P$ and $Q$ without access to their respective densities is impractical. Instead, in the following we will propose two different ways to get around this difficulty and effectively quantify the coverage gap. In both cases, we leverage optimal transport, which defines valid distances even for empirical measures, i.e., when we only have access to $P$ and $Q$ via samples. 
\subsection{Upper Bound on the Total Coverage Gap}
We begin by defining the total coverage gap as follows.
\begin{definition}[Total coverage gap] The expected coverage gap across all possible values $\alpha \in [0, 1]$ given by
\begin{equation*}
    \Delta_{P,Q} := \int_0^1 \Delta_{P,Q}(\alpha)d\alpha = \E_{p(\alpha)}[\Delta(\alpha)],
\end{equation*}
with $p(\alpha)$ being the uniform distribution in $[0, 1]$.
\end{definition}
The following result establishes that the total coverage gap between $P$ and $Q$ is upper bounded by a weighted CDF distance and the 1-Wasserstein distance between them, $W_1(P, Q).$

\begin{theorem}\label{theo:tcgap}
Let $P$ and $Q$ be probability measures on $\gX \times \gY$ with $\push{P}{s}$ and $\push{Q}{s}$ their respective pushforward measures by a score function $s: \gX \times \gY \rightarrow \R$. Assume $\push{P}{s}$ is absolutely continuous with respect to the Lebesgue measure with density $p_{\push{P}{s}}(s_c)$. Then the total coverage gap can be upper bounded as follows
\begin{align}
    \Delta_{P,Q} &\leq \int_\R p_{\push{P}{s}}(s_c) \left| F_{\push{P}{s}}(s_c) {-} F_{\push{Q}{s}}(s_c) \right|  ds_c \label{eq:density-weighted} \\ 
    &\leq \Big( \sup_{s_c \in \R}p_{\push{P}{s}}(s_c) \Big) W_1(\push{P}{s}, \push{Q}{s}). \label{eq:w1-bound}
\end{align}
\end{theorem}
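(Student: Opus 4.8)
The plan is to push the absolute value inside the inner expectation over the calibration set, rewrite the test-side expectations as CDF evaluations, and then integrate over $\alpha$ by exploiting that the empirical quantile map transports the uniform law on $[0,1]$ onto the empirical distribution of the calibration scores.

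First I would fix $\alpha$ and abbreviate $g(s_c) := \bigl| F_{\push{P}{s}}(s_c) - F_{\push{Q}{s}}(s_c) \bigr|$. Since the indicator is bounded and $S_t$ is independent of $\gS_c$, Fubini lets me interchange the expectations over $S_t$ and $\gS_c$ in the definition of $\Delta_{P,Q}(\alpha)$; using $\E_{S_t \sim \push{P}{s}}[\ind{S_t \le z}] = F_{\push{P}{s}}(z)$ and the analogue for $Q$, this gives
\[
  \Delta_{P,Q}(\alpha) = \Bigl| \E_{\gS_c}\bigl[ F_{\push{P}{s}}(\quant(\gS_c)) - F_{\push{Q}{s}}(\quant(\gS_c)) \bigr] \Bigr| \;\le\; \E_{\gS_c}\bigl[ g(\quant(\gS_c)) \bigr].
\]
Integrating over $\alpha \in [0,1]$ and using Tonelli (the integrand is non-negative) to move $\int_0^1 d\alpha$ inside $\E_{\gS_c}$ yields $\Delta_{P,Q} \le \E_{\gS_c}\bigl[ \int_0^1 g(\quant(\gS_c))\, d\alpha \bigr]$.

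The crux is the inner integral, for which I would use the following observation. For a fixed realization $\gS_c = \{s_1,\dots,s_n\}$, the map $\alpha \mapsto \quant(\gS_c)$ is a non-increasing step function that assumes each order statistic $s_{(i)}$ on a set of $\alpha$'s of total length $1/n$ (with the conformal convention in which $\quant(\gS_c)$ is the $\lceil(1-\alpha)(n+1)\rceil$-th order statistic, each finite value is taken on an interval of length $1/(n+1)$ and $\quant(\gS_c) = +\infty$ on a residual set, where $g(+\infty) = |1-1| = 0$). Equivalently, evaluating $\quant(\gS_c)$ at a uniform random variable recovers the empirical law $\tfrac1n\sum_i \delta_{s_i}$ up to a harmless factor $c_n \le 1$ depending on the quantile convention, so $\int_0^1 g(\quant(\gS_c))\, d\alpha = c_n\,\tfrac1n \sum_{i=1}^n g(s_i) \le \tfrac1n \sum_{i=1}^n g(s_i)$. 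Taking the expectation over $\gS_c$ (whose marginals are all $\push{P}{s}$) then gives
\[
  \Delta_{P,Q} \;\le\; \E_{s_c \sim \push{P}{s}}\!\bigl[ g(s_c) \bigr] \;=\; \int_\R \push{P}{s}(s_c)\, \bigl| F_{\push{P}{s}}(s_c) - F_{\push{Q}{s}}(s_c) \bigr|\, ds_c,
\]
which is \eqref{eq:density-weighted}. Finally, \eqref{eq:w1-bound} follows by bounding $\push{P}{s}(s_c) \le \max_{s_c \in \R} \push{P}{s}(s_c)$, pulling the maximum out of the integral, and identifying what remains as $W_1(\push{P}{s}, \push{Q}{s})$ via the one-dimensional CDF formula \eqref{eq:w1_cdf}.

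The step I expect to be the main obstacle is making the quantile-pushforward identity precise: one has to fix the exact convention for the empirical quantile $\quant$ (plain empirical quantile versus the $\lceil(1-\alpha)(n+1)\rceil$-th order statistic, together with the $+\infty$ branch) and be careful with left/right-continuity of the CDFs at atoms of $\push{P}{s}$ — if $\push{P}{s}$ is atomless this is clean because ties occur with probability zero. By comparison, the two Fubini/Tonelli interchanges are routine, since $g$ is bounded by $1$ and all integrands are non-negative.
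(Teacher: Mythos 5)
Your proposal is correct and follows essentially the same route as the paper's proof: move the absolute value inside via Jensen/triangle inequality, observe that as $\alpha$ ranges over $[0,1]$ the empirical quantile $\quant(\gS_c)$ spends (at most) mass $1/n$ on each calibration point so the inner integral collapses to the sample mean $\frac{1}{n}\sum_i g(s_{c_i})$, then use linearity of expectation and the single-marginal $\push{P}{s}$ to pass to the population integral, and finally pull out $\max \push{P}{s}$ and identify the remainder with $W_1$ via the one-dimensional CDF formula. Your extra care with the $\lceil(1-\alpha)(n+1)\rceil$ convention, the $+\infty$ branch (where $g$ vanishes), and the resulting $c_n\le 1$ factor is a slightly more rigorous version of the paper's informal ``each value takes $1/n$ of the $[0,1]$ range'' step, but it is the same idea.
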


Naturally, the upper bound of Theorem~\ref{theo:tcgap} is tight if there is no distribution shift, in which case $W_1(\push{P}{s}, \push{Q}{s})$ evaluates to zero and the coverage gap is also zero by definition.
Both (\ref{eq:density-weighted}) and (\ref{eq:w1-bound}) are valid upper bounds to the total coverage gap that are easy to compute in practice. It suffices to estimate the density of calibration scores---e.g., via kernel density estimation (KDE)---and compute the 1-Wasserstein distance in (\ref{eq:w1-bound}) or the difference of CDFs in (\ref{eq:density-weighted}), all of which are easily computable from samples, especially since nonconformity scores are typically unidimensional. 
\subsection{Upper Bound on the Total Coverage Gap without Labels}
While the above bounds are informative, they come with one \textit{crucial} drawback; they require \textit{labeled} samples from $Q$, which might be hard to obtain in practice. To overcome the need for labels, we present another upper bound to the total coverage gap that can be computed with unlabeled data from the test distribution $Q$.
The main insight is that, although we may not have access to the score of the ground truth label, in the classification setting, we generally know the scores of \emph{all} possible labels. This gives us meaningful information on the distribution of scores under the shifted test distribution $Q$, which we use to construct auxiliary distributions $\push{Q^{\downarrow}}{s}$ and $\push{Q^{\uparrow}}{s}$, whose CDFs satisfy $F_{\push{Q^{\uparrow}}{s}}(t) \leq F_{\push{Q}{s}}(t) \leq F_{\push{Q^{\downarrow}}{s}}(t)$ for all $t \in \R$. 
This sandwiching of the CDF of $F_{\push{Q}{s}}(t)$ corresponds to a stochastic dominance relationship, denoted $\push{Q^{\uparrow}}{s} \dominates \push{Q}{s} \dominates \push{Q^{\downarrow}}{s}$, and allows us to construct bounds in the form of (\ref{eq:density-weighted}) and (\ref{eq:w1-bound}) even without access to the unknown $\push{Q}{s}$.

\begin{theorem} \label{theo:unlabeled_tcgap}
    Let $P$ and $Q$ be two probability measures on $\gX \times \gY$ with $\push{P}{s}$ and $\push{Q}{s}$ their respective pushforward measures by the score function $s: \gX \times \gY \rightarrow \R$. Assume $\push{P}{s}$ is absolutely continuous with respect to the Lebesgue measure with density $p_{\push{P}{s}}(s_c)$. Further let $\push{Q^{\downarrow}}{s}$ and $\push{Q^{\uparrow}}{s}$ be such that $\push{Q^{\uparrow}}{s} \dominates \push{Q}{s} \dominates \push{Q^{\downarrow}}{s}$. Then, we have that
    \begin{align}
        \nonumber \Delta_{P, Q} &\leq \frac{1}{2} \int p_{\push{P}{s}}(s_c) \bigg(\left|F_{\push{P}{s}}(s_c) - F_{\push{Q^\uparrow}{s}}(s_c) \right| + \left|F_{\push{P}{s}}(s_c) - F_{\push{Q^\downarrow}{s}}(s_c) \right| \\ & \hskip0.525\textwidth + F_{\push{Q^{\downarrow}}{s}}(s_c) - F_{\push{Q^{\uparrow}}{s}}(s_c)\bigg) ds_c \label{eq:unlabeled-density-weighted} \\
        & \leq \frac{1}{2}\Big( \sup_{s_c \in \R}p_{\push{P}{s}}(s_c) \Big) \bigg(W_1(\push{P}{s}, \push{ Q^{\uparrow}}{s}) + W_1(\push{P}{s}, \push{ Q^{\downarrow}}{s}) + \E_{\push{ Q^{\uparrow}}{s}}[S] - \E_{\push{Q^{\downarrow}}{s}}[S]\bigg) \label{eq:unlabeled-w1-bound}.
    \end{align}
\end{theorem}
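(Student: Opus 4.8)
The plan is to start from the density-weighted bound of Theorem~\ref{theo:tcgap}, $\Delta_{P,Q} \le \int_\R \push{P}{s}(s_c)\,\bigl|F_P(s_c) - F_{\push{Q}{s}}(s_c)\bigr|\,ds_c$, and to replace the single term that depends on the unknown test distribution, $\bigl|F_P(s_c) - F_{\push{Q}{s}}(s_c)\bigr|$, by a quantity that depends on $\push{Q}{s}$ only through the sandwiching pair $\push{Q^{\uparrow}}{s}, \push{Q^{\downarrow}}{s}$. The workhorse is an elementary pointwise inequality: for a fixed $y \in \R$ and any $x$ with $a \le x \le b$,
\[
  |y - x| \;\le\; \tfrac{1}{2}\bigl(|y-a| + |y-b| + (b-a)\bigr).
\]
This holds because $x \mapsto |y-x|$ is convex, so its maximum over $[a,b]$ is attained at an endpoint, giving $|y-x| \le \max(|y-a|,|y-b|) = \tfrac12\bigl(|y-a|+|y-b|+\bigl||y-a|-|y-b|\bigr|\bigr)$; the reverse triangle inequality then bounds $\bigl||y-a|-|y-b|\bigr| \le |a-b| = b-a$.

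I would then apply this with $y = F_P(s_c)$, $a = F_{\push{Q^{\uparrow}}{s}}(s_c)$, and $b = F_{\push{Q^{\downarrow}}{s}}(s_c)$. This is legitimate precisely because the hypothesis $\push{Q^{\uparrow}}{s} \dominates \push{Q}{s} \dominates \push{Q^{\downarrow}}{s}$ means $F_{\push{Q^{\uparrow}}{s}}(s_c) \le F_{\push{Q}{s}}(s_c) \le F_{\push{Q^{\downarrow}}{s}}(s_c)$ for every $s_c$, so $b \ge a$ and the point being sandwiched is indeed $F_{\push{Q}{s}}(s_c)$. Integrating both sides against $\push{P}{s}(s_c)\,ds_c$ and chaining with Theorem~\ref{theo:tcgap} yields (\ref{eq:unlabeled-density-weighted}) directly.

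For the second inequality (\ref{eq:unlabeled-w1-bound}), I would pull the density out of the integral using $\push{P}{s}(s_c) \le \max_{s_c \in \R}\push{P}{s}(s_c)$ and identify the three resulting integrals. By the one-dimensional CDF formula (\ref{eq:w1_cdf}), $\int_\R \bigl|F_P(s_c) - F_{\push{Q^{\uparrow}}{s}}(s_c)\bigr|\,ds_c = W_1(\push{P}{s}, \push{Q^{\uparrow}}{s})$, and similarly for $\push{Q^{\downarrow}}{s}$; while the tail-integral (layer-cake) representation of the mean, $\E[S] = \int_0^\infty (1-F(s))\,ds - \int_{-\infty}^0 F(s)\,ds$, applied to $\push{Q^{\uparrow}}{s}$ and $\push{Q^{\downarrow}}{s}$ gives $\int_\R \bigl(F_{\push{Q^{\downarrow}}{s}}(s_c) - F_{\push{Q^{\uparrow}}{s}}(s_c)\bigr)\,ds_c = \E_{\push{Q^{\uparrow}}{s}}[S] - \E_{\push{Q^{\downarrow}}{s}}[S]$, a nonnegative quantity since $\push{Q^{\uparrow}}{s}$ stochastically dominates $\push{Q^{\downarrow}}{s}$. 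Substituting these identities gives (\ref{eq:unlabeled-w1-bound}). The only genuine obstacle is stating the pointwise inequality with the correct signs and absolute values; the rest is a mechanical combination of Theorem~\ref{theo:tcgap} with the standard one-dimensional Wasserstein identities, modulo checking that the relevant integrals are finite (guaranteed by the finite-first-moment assumption underlying $W_1$), which legitimizes splitting $\int_\R$ and the layer-cake step.
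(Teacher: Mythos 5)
Your proof is correct and follows essentially the same route as the paper: the paper's argument for~(\ref{eq:unlabeled-density-weighted}) sums two applications of the triangle inequality $\bigl|F_P - F_Q\bigr| \le \bigl|F_P - F_{Q^\uparrow}\bigr| + \bigl|F_Q - F_{Q^\uparrow}\bigr|$ and $\bigl|F_P - F_Q\bigr| \le \bigl|F_P - F_{Q^\downarrow}\bigr| + \bigl|F_Q - F_{Q^\downarrow}\bigr|$ under the integral and then drops the absolute values via stochastic dominance, which is exactly the integrated form of your pointwise convexity inequality $|y-x|\le\tfrac12(|y-a|+|y-b|+(b-a))$. For~(\ref{eq:unlabeled-w1-bound}) you pull out $\max \push{P}{s}$ and use the layer-cake identity to recover $\E_{\push{Q^\uparrow}{s}}[S] - \E_{\push{Q^\downarrow}{s}}[S]$, which is the first of the two proofs the paper itself offers (the second being a triangle inequality on $W_1$ combined with the cited stochastic-dominance lemma, which encodes the same layer-cake computation).
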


Theorem~\ref{theo:unlabeled_tcgap} tells us that we can upper bound the total coverage gap between the calibration distribution $P$ and an unknown test distribution $Q$, if we can somehow find two auxiliary distributions over the test scores $\push{Q}{s}$, such that $\push{Q^{\uparrow}}{s} \dominates \push{Q}{s} \dominates \push{Q^{\downarrow}}{s}$. 
Fortunately, in the classification setting, we typically evaluate the scores of all possible classes (e.g., by computing the probability of all classes with a softmax activation). Thus, although the true score $s(x, y)$ is not observed, we know it must be contained in the set $\vs(x) = \{s(x, y') : y' \in \gY \}$. We can then use a set of $m$ unlabeled samples from $Q$ and their corresponding scores $\{\vs(x_i)\}_{i=1}^m$ to construct empirical distributions $\push{\hat Q_m^{\uparrow}}{s}$ and $\push{\hat Q_m^{\downarrow}}{s}$ with the required stochastic dominance relation to the unknown $\push{\hat Q_m}{s}.$
A natural solution is to take the minimum and maximum scores of each instance $x_i$ to get the following empirical distributions
\begin{equation} \label{eq:q_min_q_max}
    \push{\hat Q^{\min}_m}{s} := \frac{1}{m}\sum_{i=1}^{m} \delta_{\min \vs(x_i)} \qquad \push{\hat Q^{\max}_m}{s} := \frac{1}{m}\sum_{i=1}^{m} \delta_{\max \vs(x_i)},
\end{equation}
with $\ind{\cdot}$ the indicator function and $\delta_z$ the delta function at a value $z$. It is easy to see the empirical distributions obey $\push{\hat Q^{\max}_m}{s} \dominates \push{\hat Q}{s} \dominates \push{\hat Q^{\min}_m}{s}$ as needed. 

In the common setting where predictions come from a classifier $f$ that outputs class probabilities, a practical alternative we found effective is to construct the auxiliary distributions $\push{\hat Q^{U}_m}{s}$ and $\push{\hat Q^{f}_m}{s}$ as
\begin{equation} \label{eq:q_u_q_f}
    \push{\hat Q^{U}_m}{s} := \frac{1}{m}\sum_{i=1}^{m} \delta_{s(x_i, y'_i)}, y'_i \sim U(Y) \qquad \push{\hat Q^{f}_m}{s} := \frac{1}{m}\sum_{i=1}^{m} \delta_{s(x_i, y'_i)}, y'_i \sim Q_f(Y|x_i),
\end{equation}
where we take the score of a random label $j$ sampled either from a uniform distribution $U(Y)$ or from $Q_f(Y|X),$ the conditional distribution given by model $f.$ The motivation here is that $\push{\hat Q^{U}_m}{s}$ captures the scenario where the model is uninformative of the correct label, while $\push{\hat Q^{f}_m}{s}$ reflects the scenario in which the model perfectly captures the true distribution $Q(Y|X)$. Although we cannot guarantee $\push{\hat Q^{U}_m}{s} \dominates \push{\hat Q}{s} \dominates \push{\hat Q^{f}_m}{s}$, we empirically observe this relation to hold in most cases (see Fig.~\ref{fig:emp_cdfs}).

Naturally, the tightness of the upper bounds in Theorem~\ref{theo:unlabeled_tcgap} depend heavily on how close $\push{Q^{\downarrow}}{s}$ and $\push{Q^{\uparrow}}{s}$ are to the unknown $\push{Q}{s}$. In the absence of prior knowledge about the nature of the distribution shift, the best we can do is rely on the general auxiliary distributions described above, which may yield relatively loose bounds. Nevertheless, the bounds constructed using $(\push{\hat Q^{\min}_m}{s}, \push{\hat Q^{\max}_m}{s})$ or $(\push{\hat Q^f}{s}, \push{\hat Q^U}{s})$ serve as effective optimization objectives for reducing the coverage gap in practice, through a reweighting of the calibration samples, as we explain in the following section.

Before proceeding, we note that upper bounds on the coverage gap can also be derived for restricted ranges of the miscoverage rate $\alpha$. In Appendix~\ref{app:restricted_tcgap}, we present a bound for $\alpha$ ranging between $\alpha^-$ and $\alpha^+$, with $0 \leq \alpha^- \leq \alpha^+ \leq 1$. Additionally, Appendix~\ref{app:specific_alpha} provides a bound for a fixed miscoverage rate $\alpha$, denoted $\Delta_{P,Q}(\alpha)$. While these bounds are less effective as optimization objectives, they offer useful theoretical insights and are detailed in Appendix~\ref{app:theo}.

\begin{figure}
\centering
\begin{minipage}[t]{.46\textwidth}
 \centering
    \includegraphics[width=\linewidth]{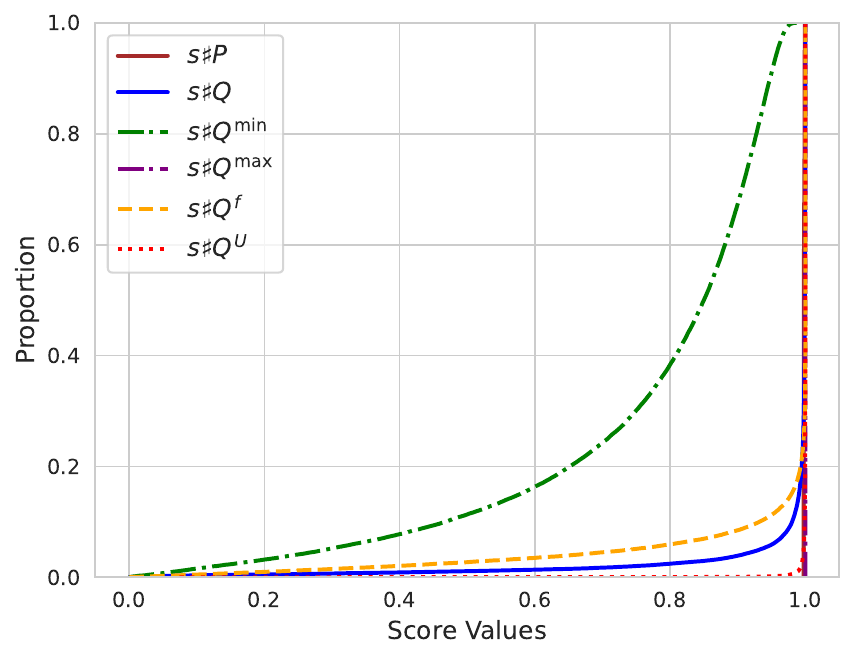}
    \vspace{-0.5cm}
    \caption{Empirical CDFs of nonconformity scores in ImageNet-C Gaussian noise under the calibration $\push{\hat P}{s}$, test $\push{\hat Q}{s}$, and auxiliary distributions. We can visually verify $\push{\hat Q^{\max}}{s} \dominates \push{\hat Q}{s} \dominates \push{\hat Q^{\min}}{s}$ and $\push{\hat Q^{U}}{s} \dominates \push{\hat Q}{s} \dominates \push{\hat  Q^{f}}{s}$.
    }
    \label{fig:emp_cdfs}
\end{minipage}\qquad
\begin{minipage}[t]{.46\textwidth}
  \centering
    \includegraphics[width=\linewidth]{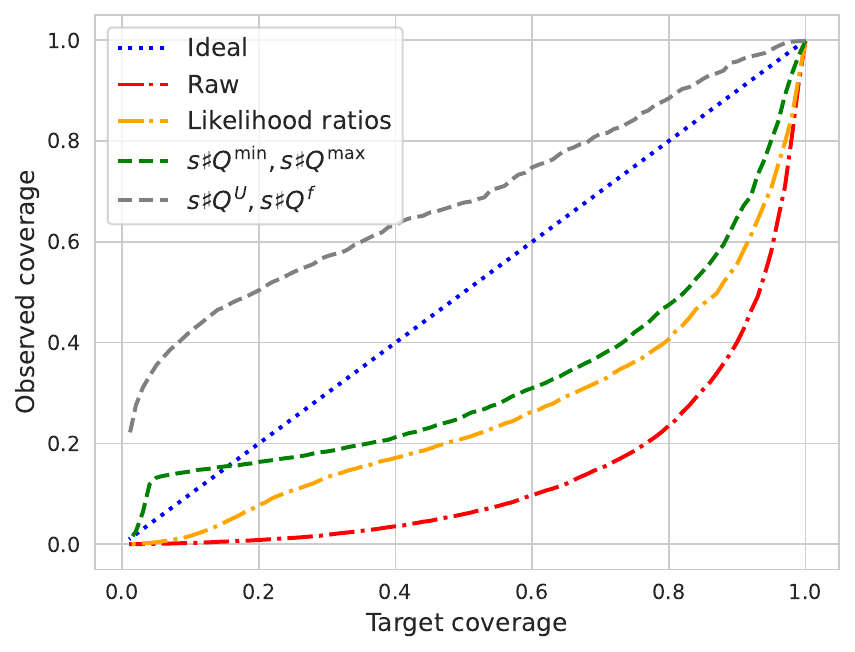}
    \vspace{-0.5cm}
    \caption{
        Total coverage gap in ImageNet-C Fog with weights learned via likelihood ratio estimation (orange), optimal transport with $(\push{\hat Q^{\min}}{s}$, $\push{\hat Q^{\max}}{s})$ in green, and $(\push{\hat Q^{f}}{s}$, $\push{\hat Q^{U}}{s})$ in gray.
    }
    \label{fig:emp_coverage}
\end{minipage}
\end{figure}

\section{Learning} \label{sec:learning}
In the conformal prediction literature, it is common to address non-exchangeability by reweighing the calibration points \citep{tibshirani_conformal_2019,barber2023conformal}. In practice, this implies that the quantile of the scores is computed on a weighted empirical distribution of the calibration scores $\push{P^\vw_n}{s}$ with the following empirical CDF
\begin{equation} \label{eq:weighted_cal_data}
    \push{\hat P^\vw_n}{s} = \sum_{i=1}^n w_i \delta_{s(x_i, y_i)},
\end{equation}
$w_i \geq 0$ are properly normalized weights associated with calibration samples $(x_i, y_i)$. In the case of covariate shifts, \citet{tibshirani_conformal_2019} show that we can recover proper coverage by setting the weights in (\ref{eq:weighted_cal_data}) proportionally to the likelihood ratio, i.e., $w_i \propto \nicefrac{dQ(x_i)}{dP(x_i)}$. \citet{barber2023conformal} also rely on a weighted empirical distribution in the form of (\ref{eq:weighted_cal_data}) but assume the weights to be fixed based on some prior knowledge of the likely deviations from exchangeability.

Motivated by these ideas, we propose instead to learn the distribution $\push{P^\vw_n}{s}$ directly by minimizing the upper bounds of Theorem~\ref{theo:unlabeled_tcgap} with respect to its weights $\vw$. We replace the Wasserstein and CDF distances in these bounds with their empirical counterparts (see Appendix~\ref{app:from_samples} for details), enabling optimization from samples. Specifically, we assume access to $n$ labeled samples $\{(x_i, y_i)\}_{i=1}^n$ from $P$ and $m$ unlabeled samples $\{x_i\}_{i=n+1}^{n+m}$ from the test distribution $Q$. These are used to construct $\push{\hat Q^\downarrow}{s}$ and $\push{\hat Q^\uparrow}{s}$. As discussed in Section~\ref{sec:theoretical_results}, two alternatives we consider are to take the pair $(\push{\hat Q^{\min}}{s}, \push{\hat Q^{\max}}{s})$ as in (\ref{eq:q_min_q_max}) or $(\push{\hat Q^f}{s}, \push{\hat Q^U}{s})$ as in (\ref{eq:q_u_q_f}). However, other constructions, potentially leveraging prior information about the distribution shift or the application domain, are possible, as long as $\push{\hat Q^{\uparrow}}{s} \dominates \push{\hat Q}{s} \dominates \push{\hat Q^{\downarrow}}{s}$. We note that, although using unlabeled samples from the test distribution is uncommon in CP, in many cases it is easy to collect such samples in practice.

Evaluating the bound of Theorem~\ref{theo:unlabeled_tcgap} 
admits an efficient exact solution for empirical distributions: it suffices to sort the samples ($n$ from $P$ and $m$ from $Q$) to compute the difference between the empirical CDFs. 
Crucially, when computing weighted empirical CDFs as in (\ref{eq:weighted_cal_data}), the weights $\vw$ only come into play after the score values are sorted, and thus the operation is trivially differentiable with respect to $\vw$, with no relaxation needed. 
Finally, we estimate the density of $p_{\push{P}{s}}$ by fitting a Gaussian kernel density estimator (KDE) to the calibration scores. It is easy to fit Gaussian KDEs to weighted samples, and the estimated density is also differentiable with respect to the weights.

Having established how to evaluate the bounds efficiently and differentiate through the weighting, we now turn to how these weights are parameterized. We consider two strategies:
\begin{itemize}
    \item \textbf{Free-form weights}. We directly optimize a set of unnormalized weights $\{\tilde w_i\}_{i=1}^n$, each one tied to a specific calibration point $(x_i, y_i)$. After optimization, these weights are normalized to form the weighted empirical distribution in (\ref{eq:weighted_cal_data}). This method is simple and effective, offering maximum flexibility for a fixed calibration set, but it remains restricted to that set.
    \item \textbf{Learnable weight function}. Alternatively, we learn a parametric function
    $w_\theta:\R\to\R_{\ge 0}$ with $\tilde w_i = w_\theta\big(s(x_i,y_i)\big)$
    where $\theta$ are the function parameters (e.g., a small neural network). Unlike the free-form approach, this formulation allows computing weights for additional calibration points or test candidates, thereby recovering the standard weighted split-CP setting and aligning with the importance-weighting principle of \citet{tibshirani_conformal_2019}.
\end{itemize}

Both parametrizations involve trade-offs. We focus most of our analysis on free-form weights because they align naturally with our bounds-based objectives, require only simple differentiable operations, and avoid extra modeling assumptions. This makes them stable and data-efficient in the small-sample regime, which is our primary concern. Nonetheless, these weights are tied to the specific calibration samples used during training, so the same set must be retained for calibration. This introduces dependencies among calibration points and breaks exchangeability with the test set. Under distribution shift, exchangeability is already compromised, so this violation is less critical. In this setting, the focus naturally shifts from preserving exchangeability to mitigating its effects, which is exactly what we achieve by optimizing calibration weights directly. Weight functions, by contrast, offer a more general solution: by mapping scores to weights through a parametric model, they can assign weights to unseen calibration points and recover the standard weighted split-CP setting. This flexibility comes at a cost: training the model requires additional labeled samples from $P$ and careful specification of its architecture. Despite these differences, both approaches achieve comparable empirical performance (see experiments in Section~\ref{sec:experiments} and Appendix~\ref{app:experiments}).

See Algorithm~\ref{alg:ours} for a sketch of how we optimize the total coverage gap from Theorem~\ref{theo:unlabeled_tcgap}. A more complete algorithm, including how this optimization fits into split CP is given in Algorithm~\ref{alg:ours-end2end}, in Appendix~\ref{app:experiments}. In practice, during optimization unnormalized weights $\{\tilde w_i\}$ are mapped to normalized weights via a softmax, regardless of whether $\tilde w_i$ come from a learnable vector or a weight function $w_\theta$. This normalization ensures differentiability and proper scaling before computing the empirical distribution used to evaluate the 1-Wasserstein or weighted-CDF bounds and fit the KDE. The computational cost remains the same in both cases: evaluating the 1-Wasserstein distance requires $\gO((m+n)\log(m+n))$ for sorting, while fitting a Gaussian KDE on $n$ calibration samples costs $\gO(k \cdot n)$, where $k$ is the number of evaluation points.

\begin{algorithm}[t]
\caption{Learning Weights for Non-exchangeable Conformal Prediction via Optimal Transport}
\label{alg:ours}
\begin{algorithmic}
    \STATE \textbf{Input:}
    \STATE \hspace{1em} $n$ labeled samples $\{(x_i, y_i)\}_{i=1}^n$ from $P$
    \STATE \hspace{1em} $m$ unlabeled samples $\{x_j\}_{j=n+1}^{n+m}$ from $Q$
    \STATE \hspace{1em} score function $s$
    \STATE Initialize unnormalized weights $\tilde \vw = \{\tilde w_i\}_{i=1}^n$ or weight function $w_\theta$
    \STATE Compute calibration scores $\{s(x_i,y_i)\}_{i=1}^n$
    \STATE Compute test score vectors $\{\vs(x_j)\}_{j=n+1}^{n+m}$ \hfill {\small // includes all candidate labels $\vs(x)=\{s(x,y):y \in \gY\}$}
    \REPEAT
        \STATE Construct $\hat{Q}^{\downarrow}$ and $\hat{Q}^{\uparrow}$ from $\{s(x_j)\}$ \hfill {\small // e.g., $(\min,\max)$ or $(f,U)$}
        \STATE Compute normalized weights: $\vw = \mathrm{softmax}(\tilde \vw)$
        \STATE Fit KDE to $\{s(x_i,y_i)\}_{i=1}^n$ with weights $\vw$
        \STATE Update $\tilde \vw$ or $w_\theta$ to minimize (\ref{eq:unlabeled-density-weighted}) or (\ref{eq:unlabeled-w1-bound}) \hfill {\small // weighted-CDF or 1-Wasserstein bound}
    \UNTIL convergence
\end{algorithmic}
\end{algorithm}

\paragraph{Regression setting} In principle, our upper bound to the total coverage gap in Theorem~\ref{theo:unlabeled_tcgap} is directly applicable to regression tasks. The only caveat is that the true score $s(X_t, Y_t)$ might no longer be restricted to a finite set of known values, as in the classification setting, and designing $\push{Q^\downarrow}{s}$ and $\push{Q^\uparrow}{s}$ is more challenging. One can always construct these auxiliary distributions based on some prior information about the task or the underlying distribution shift, but more generally it is possible to leverage the regression-as-classification framework \cite{guhaconformal}, and again use $(\push{\hat Q^{\min}}{s}$, $\push{\hat Q^{\max}}{s})$ or $(\push{\hat Q^{f}}{s}$, $\push{\hat Q^{U}}{s}),$ as we do successfully in the experiment described in Section~\ref{sec:exp_reg}.
\section{Related Work} \label{sec:related_work}
Several works have studied conformal prediction in the non-exchangeable setting, especially in the context of time series, where the exchangeability assumption is violated by the very autoregressive nature of these problems \citep{xu2021conformal,xu2023conformal,jensen2022ensemble,zaffran2022adaptive,oliveira2024split, chernozhukov2018exact,gibbs2021adaptive,gibbs2024conformal}. Closer to our work, \citet{tibshirani_conformal_2019} and \citet{barber2023conformal} have also proposed to mitigate the coverage gap by reweighing the calibration samples. However, in \cite{tibshirani_conformal_2019} their weights only address covariate shifts and correspond to the unknown likelihood ratio $\nicefrac{dQ(x)}{dP(x)}$, which is hard to estimate in practice, especially under severe distribution shifts and the density chasm problem \citep{rhodes2020telescoping}. In our experiments, our methods compare favorably to learned likelihood ratios as proposed in \citep{tibshirani_conformal_2019}, attesting to the difficulty of learning accurate ratios.

To our knowledge, our methods and the reweighing scheme of \citep{barber2023conformal} are the only capable of tackling arbitrary distribution shifts in split conformal prediction. Still, their approach involves data-independent weights that must be designed a priori using some prior knowledge about the underlying distribution shift, whereas we learn appropriate weights directly from a few unlabeled samples from the test distribution. With the exception of the work of \citep{podkopaev2021distribution}, where label shift is also tackled via likelihood ratios in a similar fashion to \citep{tibshirani_conformal_2019}, most other works focus on covariate shift \citep{lei2021conformal,jin2023sensitivity,kasa2024adapting}. Among these, \citep{yang2024doubly,qiu2023prediction,gibbs2023conformal,kiyani2024length} are notable for tackling covariate shifts by approximating conditional coverage guarantees, i.e., by approximately satisfying $\P(Y_t \in \gC(X_t) | X_t) \geq 1- \alpha$. These achieve impressive results but are computationally expensive or limited to specific types of covariate shifts. 

Of special note is the work of \citep{gibbs2023conformal}, which proposes to adapt the conformal threshold for each test point, providing conditional coverage guarantees if the distribution shift comes from a known function class. In our experiments, where it is not clear how to define such function class, their method---implemented in its most general form via radial basis function (RBF) kernels---produced larger prediction sets than our methods at a much larger computational cost at test time. We also compare against entropy scaled conformal prediction or ECP \citep{kasa2024adapting}. ECP consists in dividing the threshold, i.e., $\quant(\gS_c)$, by the $(1{-}\alpha)$ quantile of the entropy over class predictions for the test points, with the intuition that high entropy (indicating high uncertainty) will decrease the threshold, leading to larger prediction sets. While this heuristic proved effective for covariate shift, the improvements in coverage were modest in the context of label shift. In contrast, our methods demonstrate greater robustness to different types of shift (see Table~\ref{tab:imagenetc_cov_big}).

Finally, our methods are closely related to the concurrent work of \citep{xu2025wassersteinregularized}, which also explores the relationship between the coverage gap and the $W_1(P,Q)$. However, their results are derived through a different approach and bound $\Delta_{P,Q}(\alpha)$ for any $\alpha$. Unfortunately, their bound does not depend on $\alpha$, and is thus loose for most target coverage rates. Moreover, in practice, their methods are only applicable to distribution shifts where the test distribution is a mixture of different calibration distributions. In contrast, our bound from Theorem~\ref{theo:unlabeled_tcgap} can be applied to any type distribution shift.  
\section{Experiments} \label{sec:experiments}
In this section, we describe and analyze a set of experiments designed to evaluate and validate our methods. In each of them, we have two sets of samples, $\gD_P$ distributed according to some calibration distribution $P$ and $\gD_Q$ according to some test distribution $Q$, with $P$ differing from $Q$ via some form of distribution shift. We divide $\gD_Q$ into two, with $\gD_Q^{(1)}$ reserved for fitting a density ratio estimator or learning the weights in our method as in Algorithm~\ref{alg:ours}, and $\gD_Q^{(2)}$ used for testing. When no pretrained model is available, we also split $\gD_P$, using $\gD_P^{(1)}$ for training a model and $\gD_P^{(2)}$ for calibration in split CP. Main experiments use 300 samples for $\gD_P^{(2)}$ and $\gD_Q^{(1)}$, with $\gD_P^{(2)}$ extended to 600 for weight functions (split evenly for fitting and calibration). Sample size effects are analyzed in Appendix~\ref{app:experiments}.

For regression tasks, we optimize the 1-Wasserstein variant of our bounds (\ref{eq:unlabeled-w1-bound}), while for image classification tasks, we adopt the weighted-CDF formulation (\ref{eq:unlabeled-density-weighted}). These choices reflect empirical findings; each variant performs best in its respective domain, as discussed in Appendix~\ref{app:design}. In all cases, we define nonconformity scores as one minus the probabilities assigned by the model.

\subsection{Regression setting with synthetic data} \label{sec:exp_reg}
We start with the synthetic data experiment proposed in \citep{yang2024doubly}, where the ground truth likelihood ratios are known. We use the regression-as-classification method of \citet{guhaconformal}, splitting the output space into 50 equally spaced bins. In Figure~\ref{fig:regression}, our methods significantly enhance coverage in most cases, with no notable difference between the two variants. In this low-dimensional setting, learning the likelihood ratios also proved effective, albeit with a slight tendency to under-cover. In contrast, our method exhibited a mild bias toward over-coverage. See Appendix~\ref{app:experiments} for experimental details.

\begin{figure}
    \centering
    \includegraphics[width=0.925\linewidth]{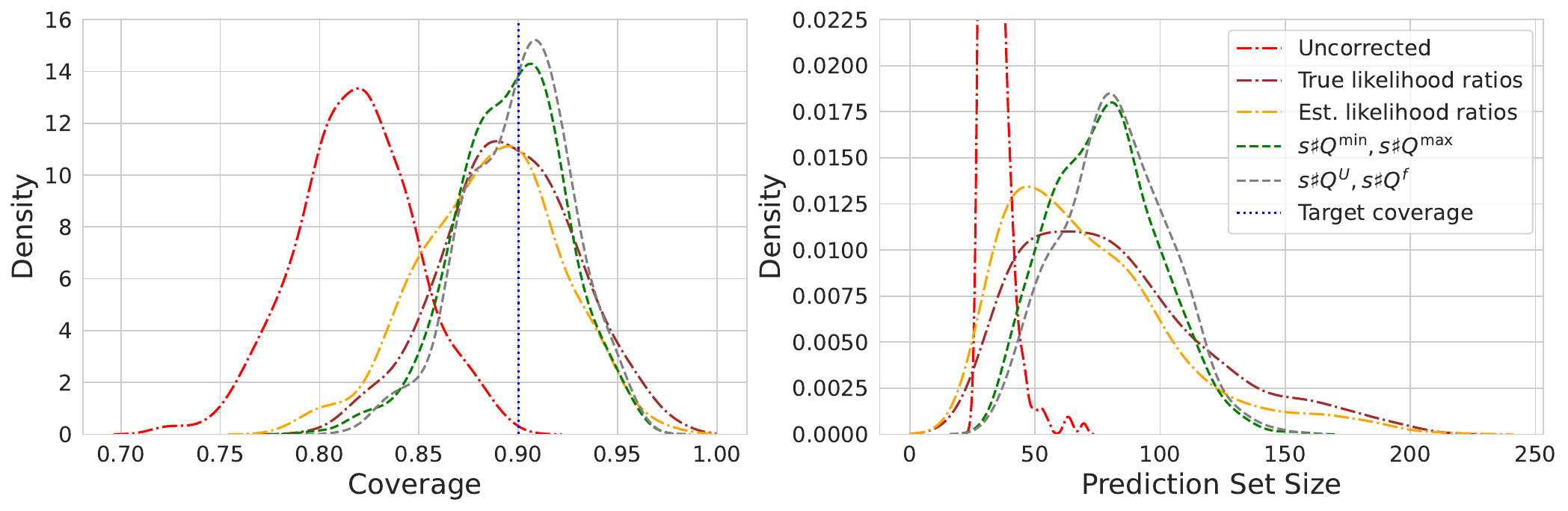}
    \caption{Distribution of coverage and prediction set sizes for the synthetic regression task across 500 simulations and target coverage rate of $90\%$ (blue vertical line). For ease of visualization, we plot the density estimated with a KDE fit to the 500 observations.}
    \label{fig:regression}
\end{figure} 

\subsection{Image classification} 
\paragraph{Imagenet-C}\label{sec:imagenetc}
We use the established ImageNet-C \citep{imagenetc} dataset to test our methods under the covariate and label shift settings. ImageNet-C contains a total of 15 covariate shifts at different severity levels, from $1$ (least severe) to $5$ (most severe), but no label shift. Since our methods apply to any type of distribution shift, we also simulate label shift in ImageNet-C. The details can be found in Appendix~\ref{app:experiments}. In Table~\ref{tab:imageclassification} (and its extended version in Appendix~\ref{app:experiments}) we observe a drastic drop in coverage for the uncorrected scores; from the target coverage of $90\%$, we drop to an average of $\sim80\%$ for severity level 1, and to an average of $\sim30\%$ for severity level 5. By introducing weighting to the calibration set, our OT methods improves coverage across the board, achieving similar coverage with and without label shift. In contrast, learned likelihood ratios provided modest improvements in coverage and the ECP method of \citep{kasa2024adapting} only produced competitive results in the absence of label shift. Finally, the method of \citep{gibbs2023conformal} produced good results in terms of coverage in ImageNet-C but at the cost of excessive large prediction sets (see Section~\ref{subsec:discussion} for a discussion on prediction set sizes). 

\paragraph{iWildCam}
We further use iWildCam~\citep{beery2020iwildcam} as one more dataset that contains natural distribution shifts. We can see in Table~\ref{tab:imageclassification} that the distribution shift incurs roughly a $10\%$ drop in coverage when not correcting the scores. While likelihood ratio weighting produce only modest improvements over the uncorrected scores, both of our OT settings improve coverage. The $(\min, \max)$ setting increased coverage by more than $7\%$ on average, while the $(f, U)$ setting improved coverage by almost $10\%$. Finally, the ECP method \citep{kasa2024adapting} got almost perfect coverage, even under label shift. Interestingly, the method of \citep{gibbs2023conformal} actually hurt coverage. This could be either because the distribution shift in iWildCam is not well captured by the class of shifts considered in their method (given by RBF kernels in this experiment) or due to severe class imbalance in iWildCam, which might hamper the optimization.

Notably, the results were consistent across parametrization choices, with both the free-form and weight-function variants yielding very similar performance overall. The only meaningful differences emerged under relatively mild distribution shifts: the weight-function approach performed better on iWildCam, while the free-form variant showed stronger results on ImageNet-C at severity level one.

\begin{table*}[!t]\centering
    \small
    \caption{
        \textbf{Average coverage and prediction set size on image classification tasks, with and without label shift}. 
        Results for uncorrected distributions, calibrating and testing on $Q$ (Oracle), likelihood ratios (LR), the methods of \citet{kasa2024adapting} and \citet{gibbs2023conformal}, and our methods with weighted-CDF objective (\ref{eq:unlabeled-density-weighted}), including $({\min}, {\max})$ and $(f, U)$ variants, and free-form (FF) and weight function (WF) parametrizations. The target coverage is set to 90\%. Extended version in Table~\ref{tab:imagenetc_cov_big}.
    }
    \label{tab:imageclassification}
    \resizebox{\linewidth}{!}{
        \begin{tabular}{ll cg cg cg cg}
            \toprule
            & &\multicolumn{2}{c}{iWildCam} &\multicolumn{2}{c}{ImageNet-C Sev.~1} &\multicolumn{2}{c}{ImageNet-C Sev.~3} &\multicolumn{2}{c}{ImageNet-C Sev.~5}  \\\cmidrule{3-10}
            & & Cov. & Size & Cov. & Size & Cov. & Size & Cov. & Size \\\midrule
            \multirow{9}{*}{\rotatebox[origin=c]{90}{no label shift}} & Uncorrected & $78.2_{\pm3.0}$ & $22.1_{\pm4.4}$ & $78.7_{\pm6.0}$ & $2.7_{\pm0.7}$ & $58.7_{\pm14.3}$ & $3.3_{\pm1.2}$ & $29.8_{\pm18.9}$ & $3.3_{\pm1.5}$   \\
            & Oracle  & $89.7_{\pm1.4}$ & $50.5_{\pm5.3}$ & $89.8_{\pm1.6}$ & $10.7_{\pm7.2}$ & $89.9_{\pm1.7}$ & $80.1_{\pm71.4}$ & $89.7_{\pm1.9}$ & $338.6_{\pm190.9}$   \\
            & LR  & $79.0_{\pm2.6}$ & $23.3_{\pm4.8}$ & $84.1_{\pm4.9}$ & $5.2_{\pm3.8}$ & $71.3_{\pm11.9}$ & $12.3_{\pm13.1}$ & $47.4_{\pm20.2}$ & $27.7_{\pm37.0}$   \\
            & \citeauthor{kasa2024adapting} & $95.2_{\pm1.2}$ & $79.7_{\pm9.0}$ & $96.6_{\pm1.1}$ & $42.4_{\pm19.2}$ & $93.7_{\pm3.2}$ & $119.3_{\pm65.4}$ & $86.1_{\pm8.4}$ & $282.0_{\pm159.1}$   \\
            & \citeauthor{gibbs2023conformal} & $67.7_{\pm6.0}$ & $109.6_{\pm6.3}$ & $88.9_{\pm2.8}$ & $548.5_{\pm36.7}$ & $85.6_{\pm3.6}$ & $635.7_{\pm66.3}$ & $84.9_{\pm5.6}$ & $754.4_{\pm102.7}$   \\
            \cmidrule{2-10}
            & FF $({\min}, {\max})$ & $85.2_{\pm4.1}$ & $37.5_{\pm10.9}$ & $91.4_{\pm3.9}$ & $15.1_{\pm9.7}$ & $88.2_{\pm7.7}$ & $63.9_{\pm41.3}$ & $71.3_{\pm17.0}$ & $128.5_{\pm93.0}$   \\
            & WF $({\min}, {\max})$ & $89.8_{\pm2.9}$ & $51.7_{\pm11.1}$ & $94.6_{\pm6.3}$ & $34.7_{\pm20.6}$ & $87.0_{\pm14.1}$ & $69.7_{\pm46.6}$ & $71.5_{\pm17.1}$ & $130.5_{\pm94.1}$   \\
            & FF $(f, U)$ & $88.2_{\pm3.4}$ & $46.2_{\pm11.9}$ & $93.0_{\pm3.0}$ & $19.2_{\pm11.4}$ & $90.3_{\pm5.9}$ & $79.8_{\pm48.9}$ & $79.5_{\pm12.1}$ & $205.7_{\pm164.4}$ \\
            & WF $(f, U)$ & $88.9_{\pm4.6}$ & $49.7_{\pm14.1}$ & $95.7_{\pm2.2}$ & $36.6_{\pm20.1}$ & $90.1_{\pm6.2}$ & $79.7_{\pm50.1}$ & $78.7_{\pm13.1}$ & $200.4_{\pm164.2}$ \\
            \midrule
            \multirow{9}{*}{\rotatebox[origin=c]{90}{with label shift}} & Uncorrected & $79.2_{\pm7.1}$ & $20.9_{\pm5.4}$ & $79.2_{\pm8.2}$ & $2.7_{\pm0.8}$ & $59.2_{\pm15.7}$ & $3.2_{\pm1.2}$ & $29.8_{\pm20.1}$ & $3.3_{\pm1.5}$   \\
            & Oracle  & $90.2_{\pm4.3}$ & $43.7_{\pm20.7}$ & $90.3_{\pm4.5}$ & $15.0_{\pm19.9}$ & $90.2_{\pm3.8}$ & $88.1_{\pm87.4}$ & $90.5_{\pm4.0}$ & $349.0_{\pm220.0}$   \\
            & LR  & $81.5_{\pm5.7}$ & $24.6_{\pm4.4}$ & $84.5_{\pm7.1}$ & $6.1_{\pm5.5}$ & $72.5_{\pm12.9}$ & $14.3_{\pm15.8}$ & $47.3_{\pm21.2}$ & $27.7_{\pm33.5}$   \\
            & \citeauthor{kasa2024adapting} & $88.6_{\pm6.3}$ & $35.9_{\pm17.1}$ & $79.2_{\pm8.1}$ & $2.7_{\pm0.8}$ & $59.7_{\pm15.2}$ & $3.4_{\pm1.4}$ & $31.2_{\pm19.9}$ & $4.1_{\pm2.3}$   \\
            & \citeauthor{gibbs2023conformal} & $38.0_{\pm15.6}$ & $44.9_{\pm23.5}$ & $88.6_{\pm3.8}$ & $552.3_{\pm46.0}$ & $85.5_{\pm4.6}$ & $638.6_{\pm70.2}$ & $84.8_{\pm5.8}$ & $753.5_{\pm100.4}$   \\
            \cmidrule{2-10}
            & FF $({\min}, {\max})$ & $83.1_{\pm5.8}$ & $22.1_{\pm12.6}$ & $91.0_{\pm5.7}$ & $14.3_{\pm10.2}$ & $88.4_{\pm8.6}$ & $62.6_{\pm41.4}$ & $72.0_{\pm18.8}$ & $127.9_{\pm94.4}$   \\
            & WF $({\min}, {\max})$ & $91.3_{\pm4.3}$ & $52.2_{\pm15.2}$ & $93.7_{\pm9.3}$ & $33.1_{\pm21.5}$ & $87.3_{\pm12.8}$ & $69.4_{\pm47.3}$ & $70.4_{\pm19.2}$ & $129.2_{\pm96.3}$   \\
            & FF $(f, U)$ & $88.2_{\pm4.7}$ & $33.5_{\pm12.9}$ & $93.5_{\pm3.6}$ & $20.7_{\pm14.6}$ & $90.3_{\pm6.8}$ & $78.9_{\pm51.7}$ & $79.8_{\pm13.9}$ & $206.0_{\pm168.3}$   \\
            & WF $(f, U)$ & $91.0_{\pm3.8}$ & $49.8_{\pm10.9}$ & $95.1_{\pm7.1}$ & $36.3_{\pm21.8}$ & $90.4_{\pm7.0}$ & $80.0_{\pm51.4}$ & $79.2_{\pm14.6}$ & $201.3_{\pm168.6}$ \\
            \bottomrule
        \end{tabular}
    }
\end{table*}

\subsection{Discussion} \label{subsec:discussion}
\paragraph{Prediction set sizes.} Our methods, like most other approaches to non-exchangeable CP, including the likelihood ratios proposed by \citet{tibshirani_conformal_2019}, and the weights introduced by \citet{barber2023conformal}, do not alter the observed test scores, thereby preserving the ranking of classes under the shifted distribution. Consequently, similar to these other works, our analysis focuses on coverage. The underlying CP algorithm (split CP) remains unchanged, and the variance in prediction set sizes is attributed to miscoverage. The best achievable performance mirrors what would be obtained if labeled samples from $Q$ were available; undercoverage results in smaller-than-optimal prediction sets, while overcoverage leads to larger-than-optimal sets. The only exception to this is the method of \citeauthor{gibbs2023conformal}, which aims for conditional coverage and thus changes the conformal threshold for each test point. This decouples coverage and prediction set size, but in our experiments 
their methods produced larger prediction sets than other methods despite getting close to the target coverage.

\paragraph{Bound variants.} In Theorems~\ref{theo:tcgap} and~\ref{theo:unlabeled_tcgap}, we have two flavors of upper bounds: one expressed in the terms of a weighted distance of CDFs, and another using the 1-Wasserstein distance. The former is always tighter and, likely for this reason, has shown superior performance in image classification tasks. Conversely, the 1-Wasserstein bounds, while generally looser, establish a more natural connection to optimal transport theory and has shown better empirical performance in regression tasks. This may be due to the weighting by $p_{\push{P}{s}}(s_c)$ in (\ref{eq:density-weighted}) and (\ref{eq:unlabeled-density-weighted}), which might complicate optimization.

\paragraph{Number of samples.} Our methods are effective across varying numbers of labeled samples from $P$ and unlabeled samples from $Q$. As shown in Appendix~\ref{app:influence_number_of_samples}, coverage improves with more samples as expected, but meaningful gains are observed even with as few as 30 samples from each distribution.

\paragraph{Limitations.} 
The tightness of the unlabeled upper bounds in Theorem~\ref{theo:unlabeled_tcgap} depends on the auxiliary distributions $\push{Q^\downarrow}{s}$ and $\push{Q^\uparrow}{s}$. When these closely approximate $\push{Q}{s}$, the resulting bounds are tight, and optimizing either (\ref{eq:unlabeled-density-weighted}) or (\ref{eq:unlabeled-w1-bound}) is predictably highly effective in reducing the coverage gap.
However, in the practically interesting setting of a general and unknown distribution shift we consider, the available choices for $\push{Q^\downarrow}{s}$ and $\push{Q^\uparrow}{s}$ are likely less informative and yield a necessarily looser bound. Having said that, the bounds computed with $(\push{\hat Q^{\min}}{s}$, $\push{\hat Q^{\max}}{s})$ and $(\push{\hat Q^{f}}{s}$, $\push{\hat Q^{U}}{s})$ still perform surprisingly well in reducing the coverage gap when used for reweighing the calibration data, demonstrating their practical value and broad applicability.
Nevertheless, care must be taken: in cases of minimal or no distribution shift, $\push{P}{s}$ may already offer better coverage than the solution to our objective. It is therefore advisable to first assess the presence of a distribution shift, potentially using unlabeled samples from $Q$, before applying our methods. We explore this issue further in Appendix~\ref{app:experiments}.
\section{Conclusion} \label{sec:conclusion}
In this work, we employ optimal transport theory to study the effect of distribution shifts on conformal prediction. Specifically, we derive upper bounds on the total coverage gap induced by a shift from the calibration distribution $P$ to the test distribution $Q$, expressed in terms of (weighted) CDF and Wasserstein distances. Recognizing that labeled examples from $Q$ are often unavailable in practice, we extend our analysis by leveraging the structure inherent in the nonconformity scores of the unlabeled test data. To this end, we construct auxiliary distributions $\push{\hat Q^\downarrow}{s}$ and $\push{\hat Q^\uparrow}{s}$, which enable label-free bounds of the coverage gap. Furthermore, we utilize these bounds as optimization objectives to learn importance weights over the calibration data. Empirically, our approach significantly reduces the coverage gap across a range of distribution shift settings.

\bibliography{bibliography}
\bibliographystyle{icml2025}


\appendix
\newpage
\section*{Broader Impact}
This work introduces new methods to enhance the coverage of conformal prediction under distribution shifts, which frequently occur in real-world applications. We believe our contributions will have a positive impact, encouraging practitioners to adopt uncertainty quantification techniques like conformal prediction, provided the underlying guarantees are well understood.

\section{Proofs and Additional Theoretical Results} \label{app:theo}
In this section, we provide detailed proofs of our new upper bounds to the coverage gap as well as extra theoretical results. We start by providing the proofs of Theorems~\ref{theo:tcgap} and \ref{theo:unlabeled_tcgap}.
\subsection{Bound to the Total Coverage Gap}
\begin{reptheorem}{theo:tcgap}
Let $P$ and $Q$ be probability measures on $\gX \times \gY$ with $\push{P}{s}$ and $\push{Q}{s}$ their respective pushforward measures by a score function $s: \gX \times \gY \rightarrow \R$. Assume $\push{P}{s}$ is absolutely continuous with respect to the Lebesgue measure with density $p_{\push{P}{s}}(s_c)$. Then the total coverage gap can be upper bounded as follows
\begin{align}
    \Delta_{P,Q} &\leq \int_\R p_{\push{P}{s}}(s_c) \left| F_{\push{P}{s}}(s_c) {-} F_{\push{Q}{s}}(s_c) \right|  ds_c \tag{\ref{eq:density-weighted}} \\ 
    &\leq \Big( \sup_{s_c \in \R}p_{\push{P}{s}}(s_c) \Big) W_1(\push{P}{s}, \push{Q}{s}). \tag{\ref{eq:w1-bound}}
\end{align}
\end{reptheorem}
\begin{proof}
\begin{flalign}
    \nonumber \Delta_{P,Q} &= \int_0^1 \biggr| \E_{\gS_c \sim \push{P^n}{s}} \left[ \E_{S_t \sim \push{P}{s}} [\ind{S_t \leq \quant(\gS_c)}] \right] \\ & \hspace{5cm} - \E_{\gS_c \sim \push{P^n}{s}} \left[ \E_{S_t \sim \push{Q}{s}} [\ind{S_t \leq \quant(\gS_c)}] \right] \biggr| d\alpha \\
    &= \int_0^1 \left|  \E_{\gS_c \sim \push{P^n}{s}} \left[ F_{\push{P}{s}}(\quant(\gS_c)) - F_{\push{Q}{s}}(\quant(\gS_c)) \right] \right| d\alpha \\
    &\leq \int_0^1\E_{\gS_c \sim \push{P^n}{s}} \left[  \left|  F_{\push{P}{s}}(\quant(\gS_c)) - F_{\push{Q}{s}}(\quant(\gS_c))  \right|  \right] d\alpha \label{eq:jensen} \\
    &= \E_{\gS_c \sim \push{P^n}{s}}\left[ \sum_{i=1}^n \frac{1}{n} \left| F_{\push{P}{s}}(s_{c}^{(i)}) {-} F_{\push{Q}{s}}(s_{c}^{(i)}) \right| \right] \label{eq:empirical} \\
    &= \E_{\gS_c \sim \push{P^n}{s}}\left[ \left| F_{\push{P}{s}}(S_c) {-} F_{\push{Q}{s}}(S_c) \right| \right] \label{eq:sample_mean} \\
    &= \int_\R p_{\push{P}{s}}(s_c) \left| F_{\push{P}{s}}(s_c) {-} F_{\push{Q}{s}}(s_c) \right|  ds_c  \\
    &\leq \left[\sup_{s_c\in \R}p_{\push{P}{s}}(s_c)\right] \int_\R \left| F_{\push{P}{s}}(s_c) {-} F_{\push{Q}{s}}(s_c) \right|  ds_c \label{eq:pull_max} \\
    &= \left[\sup_{s_c\in \R} p_{\push{P}{s}}(s_c) \right] W_1(\push{P}{s}, \push{Q}{s}). \label{eq:w1-bound_2}
\end{flalign}
where the first inequality in (\ref{eq:jensen}) is due to Jensen's inequality and (\ref{eq:empirical}) holds because the empirical quantile $\quant(\gS_c)$ must evaluate to one of the $n$ values in $\gS_c = \{s_{c}^{(1)}, \ldots, s_{c}^{(n)}\}$, each of which takes $\frac{1}{n}$ of the $[0,1]$ range. In (\ref{eq:empirical}), we have the expectation of the sample mean, which equals the expectation of the population as in (\ref{eq:sample_mean}). Lastly, (\ref{eq:pull_max}) holds because $p_{\push{P}{s}}(s_c)$ is a density and thus non-negative everywhere, and (\ref{eq:w1-bound_2}) follows directly from the definition of the 1-Wasserstein distance, as in (\ref{eq:w1_cdf}).
\end{proof}

\begin{remark}[On the expectation under a weighted measure]
The bound in Theorem~\ref{theo:tcgap} extends to any weighted calibration measure $P_\rho$ with density $\rho(s)p_{\push{P}{s}}(s)$, where $\rho : \R \to [0,\infty)$ satisfies
\begin{equation*}
    \int_{\R} \rho(s)p_{\push{P}{s}}(s) \, ds = 1.
\end{equation*}
In this case, the total coverage gap is upper bounded by
\begin{equation*}
\Delta_{P_\rho,Q} \le \int_{\R} \rho(s)p_{\push{P}{s}}(s)\,\big|F_{\push{P_\rho}{s}}(s) - F_{\push{Q}{s}}(s)\big| \, ds
\le \Big( \sup_{s \in \R} \rho(s)p_{\push{P}{s}}(s) \Big) W_1(\push{P_\rho}{s}, \push{Q}{s}).
\end{equation*}
The proof follows identically by replacing $p_{\push{P}{s}}$ with $\rho(s)p_{\push{P}{s}}(s)$ in the argument.
\end{remark}

\begin{remark}[On practical weighting]
In experiments (Section~4), weights are applied at the sample level, forming a weighted empirical measure rather than a continuous density. These weights are not globally normalized; instead, normalization is enforced through a softmax or similar constraint during optimization. Our theoretical result assumes a normalized weighting function $\rho$, ensuring $P_\rho$ is a probability measure. This is a stronger condition than what is used in practice, but the empirical approach approximates this normalization. Moreover, the bound holds for the empirical distribution defined by a calibration dataset as shown in Proposition~\ref{prop:weighted_bound} below.
\end{remark}

\begin{proposition}[Empirical weighted bound] \label{prop:weighted_bound}
Let $\push{\hat{P}^\vw_n}{s} = \sum_{i=1}^n w_i \,\delta_{s_c^{(i)}}$ be the weighted empirical measure over
calibration scores $\{s^{(i)}\}_{i=1}^n$, with weights $w_i \ge 0$ and $\sum_{i=1}^n w_i = 1$.
Let $F_{\push{\hat{P}^\vw_n}{s}}$ and $F_{\push{Q}{s}}$ denote the CDFs of $\push{\hat{P}^\vw_n}{s}$ and an arbitrary test distribution over scores $\push{Q}{s}$, respectively. Define the weighted quantile function
$\quant^\vw(\alpha) := F_{\push{\hat{P}^\vw_n}{s}}^{-1}(\alpha)$ and the empirical weighted coverage gap
\begin{equation*}
\widehat{\Delta}_{w,Q}
:= \int_0^1 \left| \E_{\gS_c \sim \push{\hat{P}^\vw_n}{s}} \left[ \E_{S_t \sim \push{P}{s}} \big[\ind{S_t \le \quant^\vw(\alpha)}\big]
- \E_{S_t \sim \push{Q}{s}} \big[\ind{S_t \le \quant^\vw(\alpha)}\big] \right] \right| \, d\alpha.
\end{equation*}
Then
\begin{equation*}
\widehat{\Delta}_{w,Q} \;\le\; \sum_{i=1}^n w_i \,\big| F_{\push{\hat{P}^\vw_n}{s}}(s_c^{(i)}) - F_{\push{Q}{s}}(s_c^{(i)}) \big|.
\end{equation*}
\end{proposition}

\begin{proof}
By definition and the tower property,
\begin{equation*}
\widehat{\Delta}_{w,Q}
= \int_0^1 \Big| \E_{\gS_c \sim \push{\hat{P}^\vw_n}{s}} \big[
F_{\push{\hat{P}^\vw_n}{s}}(\quant^\vw(\alpha)) - F_{\push{Q}{s}}(\quant^\vw(\alpha))\big] \Big| \, d\alpha.
\end{equation*}
Applying Jensen's inequality (absolute value is convex),
\begin{equation*}
\widehat{\Delta}_{w,Q}
\le \E_{\gS_c \sim \push{\hat{P}^\vw_n}{s}} \left[\int_0^1
\big| F_{\push{\hat{P}^\vw_n}{s}}(\quant^\vw(\alpha)) - F_{\push{Q}{s}}(\quant^\vw(\alpha)) \big| \, d\alpha \right].
\end{equation*}
Since $\push{\hat{P}^\vw_n}{s}$ is a discrete distribution with atoms $\{s_c^{(i)}\}_{i=1}^n$ and masses $\{w_i\}_{i=1}^n$,
its quantile map $\alpha \mapsto \quant^\vw(\alpha)$ takes the value $s_c^{(i)}$ on an interval of
length exactly $w_i$. Therefore, for any fixed realization of $\gS_c$,
\begin{equation*}
\int_0^1 \big| F_{\push{\hat{P}^\vw_n}{s}}(\quant^\vw(\alpha)) - F_{\push{Q}{s}}(\quant^\vw(\alpha)) \big| \, d\alpha
= \sum_{i=1}^n w_i \, \big| F_{\push{\hat{P}^\vw_n}{s}}(s_c^{(i)}) - F_{\push{Q}{s}}(s_c^{(i)}) \big|.
\end{equation*}
Taking expectation with respect to $\gS_c \sim \push{\hat{P}^\vw_n}{s}$ does not change the right-hand side,
which is deterministic given $\hat{P}^\vw_n$, and the claim follows.
\end{proof}

\subsection{Unlabeled Bound to the Total Coverage Gap}

Before proving Theorem~\ref{theo:unlabeled_tcgap}, we begin by recalling the concept of stochastic dominance, which plays a key role in the argument. Specifically, if $A$ and $B$ are two probability distributions on $\R$, we say $A$ dominates $B$, denoted $A \dominates B$, if $F_{A}(t) \leq F_{B}(t)$ for all $t \in \R$. This notion is especially useful in our context, as will be made clear in the derivations. In particular, stochastic dominance also simplifies the computation of the 1-Wasserstein distance, as captured by the following well-known result.

\begin{lemma}[\citet{de20211}] \label{lemma:stochastic_dominance}
    Let $A$ and $B$ be two probability distributions on $\R$, with $A \dominates B$, then
    \begin{equation*}
        W_1(A, B) = \E_{A}[X] - \E_{B}[X]
    \end{equation*}
\end{lemma}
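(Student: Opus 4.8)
The plan is to read the identity straight off the one-dimensional expression of $W_1$ in terms of cumulative distribution functions, \eqref{eq:w1_cdf}, combined with the tail (``layer-cake'') formula for the mean. Write $F_A$ and $F_B$ for the CDFs of $A$ and $B$. By definition, $A \dominates B$ is exactly the pointwise inequality $F_A(z) \le F_B(z)$ for every $z \in \R$, so the absolute value in \eqref{eq:w1_cdf} can be removed with a fixed sign:
\[
    W_1(A,B) = \int_\R \bigl| F_A(z) - F_B(z) \bigr|\, dz = \int_\R \bigl( F_B(z) - F_A(z) \bigr)\, dz ,
\]
an integral of a nonnegative integrand, hence a well-defined element of $[0,\infty]$.

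Next I would recall that for a real random variable $X$ with finite first moment and CDF $F$ one has $\E[X] = \int_0^\infty \bigl(1 - F(z)\bigr)\, dz - \int_{-\infty}^0 F(z)\, dz$. Applying this to $A$ and to $B$ — both lie in $\gP_1(\R)$, so all four integrals are finite — and subtracting, the constant terms cancel and the split at the origin recombines the two half-lines into a single integral over $\R$:
\[
    \E_{X\sim A}[X] - \E_{X\sim B}[X]
    = \int_0^\infty \bigl(F_B(z) - F_A(z)\bigr)\, dz + \int_{-\infty}^{0} \bigl(F_B(z) - F_A(z)\bigr)\, dz
    = \int_\R \bigl(F_B(z) - F_A(z)\bigr)\, dz .
\]
Comparing the two displays gives $W_1(A,B) = \E_{X\sim A}[X] - \E_{X\sim B}[X]$, which is the asserted formula (reading $A(X)$, $B(X)$ as the identity evaluated at a draw from each law). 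An equivalent, slicker route I would at least mention is the comonotone coupling: in dimension one the optimal $W_1$ coupling is $(F_A^{-1}(U), F_B^{-1}(U))$ for $U \sim \mathrm{Unif}(0,1)$, and $A \dominates B$ is equivalent to $F_A^{-1}(u) \ge F_B^{-1}(u)$ for all $u \in (0,1)$, so $W_1(A,B) = \int_0^1 \bigl(F_A^{-1}(u) - F_B^{-1}(u)\bigr)\, du = \E_{X\sim A}[X] - \E_{X\sim B}[X]$ immediately.

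The one delicate point — and the step I would be most careful about — is the handling of improper integrals: $\int_\R F_A$ and $\int_\R F_B$ diverge individually, so one must keep the difference together (or split each integral at the origin as above) and justify the rearrangement. This is precisely where the finite-first-moment hypothesis $A, B \in \gP_1(\R)$ is used, and the sign condition $F_B - F_A \ge 0$ furnished by $A \dominates B$ makes the interchange harmless, with monotone convergence / Tonelli applicable without further comment. Everything else is routine bookkeeping; no continuity, atomlessness, or compact support of $A$ and $B$ is required.
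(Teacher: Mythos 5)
Your proof is correct, and on both routes. The paper itself does not prove this lemma --- it simply cites \citet{de20211} --- so there is no in-paper argument to compare against; your derivation fills in the detail the paper leaves implicit. The CDF route via \eqref{eq:w1_cdf} plus the tail formula for the mean is exactly the standard argument, and your care about splitting the improper integrals at the origin (rather than writing $\int_\R F_A$ and $\int_\R F_B$ separately, which diverge) is the right thing to flag; the finite-first-moment hypothesis is indeed what makes each of the four half-line integrals finite so the rearrangement is legitimate, and nonnegativity of $F_B - F_A$ under $A \dominates B$ lets Tonelli handle the rest. The quantile-function route you sketch is equivalent and arguably cleaner since it avoids the improper-integral bookkeeping entirely: $W_1(A,B) = \int_0^1 |F_A^{-1}(u) - F_B^{-1}(u)|\,du$, and $A \dominates B$ is equivalent to $F_A^{-1} \ge F_B^{-1}$ pointwise, so the absolute value drops and the integral is $\E_A[X] - \E_B[X]$ directly. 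One small remark worth making explicit: the lemma as printed writes $\E_{X\sim A}[A(X)]$ and $\E_{X\sim B}[B(X)]$, which is a notational slip for $\E_{X\sim A}[X]$ and $\E_{X\sim B}[X]$ --- you read it correctly, and the way the lemma is invoked in the proof of Theorem~\ref{theo:unlabeled_tcgap} (as $\E_{\push{Q^{\uparrow}}{s}}[S] - \E_{\push{Q}{s}}[S]$, etc.) confirms that intent.
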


\begin{reptheorem}{theo:unlabeled_tcgap}
    Let $P$ and $Q$ be two probability measures on $\gX \times \gY$ with $\push{P}{s}$ and $\push{Q}{s}$ their respective pushforward measures by the score function $s: \gX \times \gY \rightarrow \R$. Assume $\push{P}{s}$ is absolutely continuous with respect to the Lebesgue measure with density $p_{\push{P}{s}}(s_c)$. Further let $\push{Q^{\downarrow}_m}{s}$ and $\push{Q^{\uparrow}_m}{s}$ be such that $\push{Q^{\uparrow}_m}{s} \dominates \push{Q}{s} \dominates \push{Q^{\downarrow}_m}{s}$. Then, we have that
    \begin{align}
        \nonumber \Delta_{P, Q} &\leq \frac{1}{2} \int p_{\push{P}{s}}(s_c) \bigg(\left|F_{\push{P}{s}}(s_c) - F_{\push{Q^{\uparrow}_m}{s}}(s_c) \right| + \left|F_{\push{P}{s}}(s_c) - F_{\push{Q^{\downarrow}_m}{s}}(s_c) \right| \\ & \hskip0.525\textwidth + F_{\push{Q^{\downarrow}_m}{s}}(s_c) - F_{\push{Q^{\uparrow}_m}{s}}(s_c)\bigg) ds_c \tag{\ref{eq:unlabeled-density-weighted}} \\
        & \leq \frac{1}{2} \left[\sup_{s_c\in \R} p_{\push{P}{s}}(s_c)\right] \bigg(W_1(\push{P}{s}, \push{ Q^{\uparrow}_m}{s}) + W_1(\push{P}{s}, \push{ Q^{\downarrow}_m}{s}) + \E_{\push{ Q^{\uparrow}_m}{s}}[S] - \E_{\push{Q^{\downarrow}_m}{s}}[S]\bigg) \tag{\ref{eq:unlabeled-w1-bound}}.
    \end{align}
\end{reptheorem}
\begin{proof}[Proof of (\ref{eq:unlabeled-density-weighted})]
    We start from (\ref{eq:density-weighted}) and apply the triangle inequality twice to get
    \begin{align*}
        \Delta_{P,Q} &\leq \int p_{\push{P}{s}}(s_c) \left(\left|F_{\push{P}{s}}(s_c) - F_{\push{Q^{\uparrow}_m}{s}}(s_c) \right| + \left| F_{\push{Q}{s}}(s_c) - F_{\push{Q^{\uparrow}_m}{s}}(s_c) \right| \right) ds_c \\
        \Delta_{P,Q} &\leq \int p_{\push{P}{s}}(s_c) \left(\left|F_{\push{P}{s}}(s_c) - F_{\push{Q^{\downarrow}_m}{s}}(s_c) \right| + \left| F_{\push{Q}{s}}(s_c) - F_{\push{Q^{\downarrow}_m}{s}}(s_c) \right| \right) ds_c
    \end{align*}
    Since all values in these inequalities are non-negative, we can add them up to get
    \begin{align*}
        \Delta_{P,Q} &\leq \frac{1}{2} \int p_{\push{P}{s}}(s_c) \bigg(\left|F_{\push{P}{s}}(s_c) - F_{\push{Q^{\uparrow}_m}{s}}(s_c) \right| + \left|F_{\push{P}{s}}(s_c) - F_{\push{Q^{\downarrow}_m}{s}}(s_c) \right|  \\ &\hskip0.37\textwidth  + \left| F_{\push{Q}{s}}(s_c) - F_{\push{Q^{\uparrow}_m}{s}}(s_c) \right| + \left| F_{\push{Q}{s}}(s_c) - F_{\push{Q^{\downarrow}_m}{s}}(s_c) \right| \bigg) ds_c \\
        &= \frac{1}{2} \int p_{\push{P}{s}}(s_c) \bigg(\left|F_{\push{P}{s}}(s_c) - F_{\push{Q^{\uparrow}_m}{s}}(s_c) \right| + \left|F_{\push{P}{s}}(s_c) - F_{\push{Q^{\downarrow}_m}{s}}(s_c) \right|  \\ &\hskip0.41\textwidth +  F_{\push{Q}{s}}(s_c) - F_{\push{Q^{\uparrow}_m}{s}}(s_c)  + F_{\push{Q^{\downarrow}_m}{s}}(s_c) - F_{\push{Q}{s}}(s_c) \bigg) ds_c  \\
        &= \frac{1}{2} \int p_{\push{P}{s}}(s_c) \bigg(\left|F_{\push{P}{s}}(s_c) - F_{\push{Q^{\uparrow}_m}{s}}(s_c) \right| + \left|F_{\push{P}{s}}(s_c) - F_{\push{Q^{\downarrow}_m}{s}}(s_c) \right| \\ &\hskip0.61\textwidth + F_{\push{Q^{\downarrow}_m}{s}}(s_c) - F_{\push{Q^{\uparrow}_m}{s}}(s_c)\bigg) ds_c,
    \end{align*}    
    where the first equality holds because of the stochastic dominance relationship, which tells us that $F_{\push{Q^{\uparrow}_m}{s}}(t) \leq F_{\push{Q}{s}}(t)$ and $F_{\push{Q}{s}}(t) \leq F_{\push{Q^{\downarrow}_m}{s}}(t)$ for all $t \in \R$.
\end{proof}

\begin{proof}[Proof of (\ref{eq:unlabeled-w1-bound})]
    The result follows from (\ref{eq:unlabeled-density-weighted}) by taking the maximum of $\push{P}{s}$ out of the integral as in the proof of Theorem~\ref{theo:tcgap}. Alternatively, we can also prove (\ref{eq:unlabeled-w1-bound}) directly from (\ref{eq:w1-bound}) in Theorem~\ref{theo:tcgap}. For that, it suffices to show $W_1(\push{P}{s}, \push{Q}{s})$ is upper bounded by the term in parentheses in (\ref{eq:unlabeled-w1-bound}) and the proof immediately follows from Theorem~\ref{theo:tcgap}.
    We start by applying the triangle inequality twice to get
    \begin{align*}
        W_1(\push{P}{s}, \push{Q}{s}) &\leq W_1(\push{P}{s}, \push{Q^{\uparrow}_m}{s}) + W_1(\push{Q}{s}, \push{Q^{\uparrow}_m}{s}) \\
        W_1(\push{P}{s}, \push{Q}{s}) &\leq W_1(\push{P}{s}, \push{Q^{\downarrow}_m}{s}) + W_1(\push{Q}{s}, \push{Q^{\downarrow}_m}{s}).
    \end{align*}
    Since all terms are non-negative, we can sum both inequalities, which gives us
    \begin{multline} \label{eq:sum_inequalities}
        2 W_1(\push{P}{s}, \push{Q}{s}) \leq W_1(\push{P}{s}, \push{Q^{\uparrow}_m}{s}) + W_1(\push{P}{s}, \push{Q^{\downarrow}_m}{s}) +W_1(\push{Q}{s}, \push{Q^{\uparrow}_m}{s}) + W_1(\push{Q}{s}, \push{Q^{\downarrow}_m}{s}).
    \end{multline}
    Using $\push{Q^{\uparrow}_m}{s} \succcurlyeq \push{Q}{s} \succcurlyeq \push{Q^{\downarrow}_m}{s}$, we have from Lemma~\ref{lemma:stochastic_dominance} that
    \begin{align*}
        W_1(\push{Q}{s}, \push{Q^{\uparrow}_m}{s}) = \E_{\push{Q^{\uparrow}_m}{s}}[S] - \E_{\push{Q}{s}}[S] \quad \text{and} \quad W_1(\push{Q}{s}, \push{Q^{\downarrow}_m}{s}) =\E_{\push{Q}{s}}[S] - \E_{\push{Q^{\downarrow}_m}{s}}[S],
    \end{align*}
    and by summing both equalities, the unknown expectations $\E_{\push{Q}{s}}[S]$ cancel out, giving us
    \begin{equation} \label{eq:dominance_applied}
        W_1(\push{Q}{s}, \push{Q^{\uparrow}_m}{s}) + W_1(\push{Q}{s}, \push{Q^{\downarrow}_m}{s}) =  \E_{\push{Q^{\uparrow}_m}{s}}[S] - \E_{\push{Q^{\downarrow}_m}{s}}[S].
    \end{equation}
    Finally, it suffices to plug \eqref{eq:dominance_applied} into \eqref{eq:sum_inequalities} and the proof follows directly from Theorem~\ref{theo:tcgap}.
\end{proof}

\begin{remark}
    It is interesting to note that, if $\push{Q^{\uparrow}_m}{s} \dominates \push{P}{s} \dominates \push{\hat Q^{\downarrow}_m}{s}$, the upper bound simplifies to 
    \begin{equation*}
        \Delta_{P,Q} \leq \int p_{\push{P}{s}}(s_c) \bigg(F_{\push{Q^{\downarrow}_m}{s}}(s_c) - F_{\push{Q^{\uparrow}_m}{s}}(s_c)\bigg) ds_c.
    \end{equation*}
\end{remark}

\subsection{Estimating the Upper Bounds to the Total Coverage Gap from Samples} \label{app:from_samples}
In practice, we often do not have direct access to the distributions themselves and have to rely only on samples. Next, we show how to estimate our bounds from samples by leveraging the Dvoretzky–Kiefer–Wolfowitz (DKW) inequality \cite{dvoretzky1956asymptotic} to get finite-sample guarantees.
We now proceed to show how the upper bounds from Theorem~\ref{theo:tcgap} can be estimated from samples.

\begin{theorem} \label{theo:empirical}
    Let $P$ and $Q$ be two probability measures on $\gX \times \gY$ with $\push{P}{s}$ and $\push{Q}{s}$ their respective pushforward measures by the score function $s: \gX \times \gY \rightarrow \R$. Let $\push{\hat P_n}{s}$ and $\push{\hat Q}{s}$ denote their empirical distributions constructed from $n$ and $m$ samples, respectively. Then, we have with probability at least $1 - 2d$ that
    \begin{align*}
        \Delta_{P,Q} &\leq \int p_{\push{P}{s}}(s_c) \left|F_{\push{\hat P_n}{s}}(s_c) - F_{\push{\hat Q}{s}}(s_c) \right|ds_c + \sqrt{\frac{\log(2/d)}{2n}} + \sqrt{\frac{\log(2/d)}{2m}} \\
        &\leq \Big( \sup_{s_c \in \R}p_{\push{P}{s}}(s_c) \Big) W_1(\push{\hat P_n}{s}, \push{\hat Q}{s}) + \sqrt{\frac{\log(2/d)}{2n}} + \sqrt{\frac{\log(2/d)}{2m}}.
    \end{align*}
\end{theorem}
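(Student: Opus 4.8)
The plan is to start from the density-weighted bound in Theorem~\ref{theo:tcgap}, namely $\Delta_{P,Q} \leq \int \push{P}{s}(s_c)\, | F_{\push{P}{s}}(s_c) - F_{\push{Q}{s}}(s_c)|\, ds_c$, and replace the true CDFs $F_{\push{P}{s}}$ and $F_{\push{Q}{s}}$ by their empirical counterparts $F_{\push{\hat P}{s}}$ and $F_{\push{\hat Q}{s}}$ at the cost of additive error terms. The tool for controlling that error uniformly in $s_c$ is the Dvoretzky--Kiefer--Wolfowitz inequality: with probability at least $1-d$, $\sup_{t \in \R} |F_{\push{\hat P}{s}}(t) - F_{\push{P}{s}}(t)| \leq \sqrt{\log(2/d)/(2n)}$, and similarly for $Q$ with $m$ samples and its own failure probability $d$.

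First I would apply the triangle inequality pointwise: for every $s_c$,
\begin{equation*}
  \left| F_{\push{P}{s}}(s_c) - F_{\push{Q}{s}}(s_c) \right| \leq \left| F_{\push{\hat P}{s}}(s_c) - F_{\push{\hat Q}{s}}(s_c) \right| + \left| F_{\push{P}{s}}(s_c) - F_{\push{\hat P}{s}}(s_c) \right| + \left| F_{\push{Q}{s}}(s_c) - F_{\push{\hat Q}{s}}(s_c) \right|.
\end{equation*}
On the event (of probability at least $1-2d$ by a union bound) where both DKW bounds hold, the last two terms are bounded uniformly by $\sqrt{\log(2/d)/(2n)}$ and $\sqrt{\log(2/d)/(2m)}$ respectively. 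Then I would multiply through by $\push{P}{s}(s_c) \geq 0$ and integrate; since $\int \push{P}{s}(s_c)\, ds_c = 1$, the two constants pass through the integral unchanged, yielding the first displayed inequality of Theorem~\ref{theo:empirical}. For the second inequality, I would pull $\max_{s_c} \push{P}{s}(s_c)$ out of the integral of $|F_{\push{\hat P}{s}} - F_{\push{\hat Q}{s}}|$ exactly as in the proof of Theorem~\ref{theo:tcgap} (step~\eqref{eq:pull_max}), and recognize the remaining integral as $W_1(\push{\hat P}{s}, \push{\hat Q}{s})$ via the one-dimensional CDF formula~\eqref{eq:w1_cdf}; the additive DKW terms are untouched by this step.

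The only subtlety — and the place I would be most careful — is keeping the probabilistic bookkeeping clean: the DKW inequality is applied to the \emph{empirical} CDF deviations, and the bound in the theorem is stated with $\push{P}{s}(s_c)$ (the true density) as the integrating weight and $\max_{s_c} \push{P}{s}(s_c)$ as the constant, so no estimation error is incurred on those factors, only on the CDF differences. The union bound over the two independent DKW events gives the claimed confidence $1-2d$. Everything else is the same chain of inequalities already established for Theorem~\ref{theo:tcgap}, so no new obstacle arises beyond correctly inserting the two DKW slack terms and not double-counting the failure probability.
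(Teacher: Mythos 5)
Your proof is correct and follows essentially the same route as the paper: two applications of the triangle inequality to introduce the empirical CDFs, the two-sided DKW inequality together with a union bound to reach confidence $1-2d$, integration against $\push{P}{s}(s_c)$ (which integrates to one so the slack terms pass through unchanged), and finally pulling out $\max_{s_c}\push{P}{s}(s_c)$ to recover the $W_1$ version. The only cosmetic difference is that you apply the triangle inequality pointwise in $s_c$ before integrating, whereas the paper writes the triangle inequality directly on the integral expressions; the two presentations are equivalent.
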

\begin{proof}
    We start by applying the triangle inequality twice to get
    \begin{align*}
        \Delta_{P,Q} &\leq \int p_{\push{P}{s}}(s_c) \left|F_{\push{P}{s}}(s_c) - F_{\push{\hat Q}{s}}(s_c) \right|ds_c + \int p_{\push{P}{s}}(s_c) \left|F_{\push{Q}{s}}(s_c) - F_{\push{\hat Q}{s}}(s_c) \right|ds_c \\
        &\leq \int p_{\push{P}{s}}(s_c) \left|F_{\push{\hat P_n}{s}}(s_c) - F_{\push{\hat Q}{s}}(s_c) \right|ds_c \\ & \quad + \int p_{\push{P}{s}}(s_c) \left|F_{\push{P}{s}}(s_c) - F_{\push{\hat P_n}{s}}(s_c)\right|ds_c + \int p_{\push{P}{s}}(s_c) \left|F_{\push{Q}{s}}(s_c) - F_{\push{\hat Q}{s}}(s_c) \right| ds_c
    \end{align*}
    From here, we get the weighted CDF version of the bound by applying the DKW inequality to the last two terms. This gives us that, with probability at least $1 - 2d$
    \begin{align*}
        \Delta_{P, Q} & \leq \int p_{\push{P}{s}}(s_c) \left|F_{\push{\hat P_n}{s}}(s_c) - F_{\push{\hat Q}{s}}(s_c) \right|ds_c  \\ &\quad +  \int p_{\push{P}{s}}(s_c) \sqrt{\frac{\log(2/d)}{2n}} ds_c + \int p_{\push{P}{s}}(s_c) \sqrt{\frac{\log(2/d)}{2m}} ds_c \\
         &= \int p_{\push{P}{s}}(s_c) \left|F_{\push{\hat P_n}{s}}(s_c) - F_{\push{\hat Q}{s}}(s_c) \right|ds_c + \sqrt{\frac{\log(2/d)}{2n}} + \sqrt{\frac{\log(2/d)}{2m}}.
    \end{align*}
    The last equality holds since the DKW correction can be pulled outside of the integral and the integrals then sum to one due to $p_{\push{P}{s}}(s_c)$ being a probability density. Finally, once more we can use the fact that $p_{\push{P}{s}}(s_c)$ is non-negative everywhere to get
    \begin{align*}
        \int p_{\push{P}{s}}(s_c) \left|F_{\push{\hat P_n}{s}}(s_c) - F_{\push{\hat Q}{s}}(s_c) \right|ds_c \leq \Big( \sup_{s_c \in \R}p_{\push{P}{s}}(s_c) \Big) W_1(\push{\hat P_n}{s}, \push{\hat Q}{s}),
    \end{align*}
    which gives us the bound expressed in terms of the 1-Wasserstein distance.
\end{proof}

\begin{remark}[Extension to unlabeled bound]
    The upper bounds in Theorem~\ref{theo:unlabeled_tcgap} can also be estimated from samples in a similar manner. In fact, to get the unlabeled version of Theorem~\ref{theo:empirical},
    it suffices to construct two auxiliary empirical distributions such that $\push{\hat Q^{\uparrow}_m}{s} \dominates \push{\hat Q}{s} \dominates \push{\hat Q^{\downarrow}_m}{s}$ and follow the same arguments used to derive Theorem~\ref{theo:unlabeled_tcgap} from Theorem~\ref{theo:tcgap}.
\end{remark}

\begin{remark}
    The DKW inequality only applies to i.i.d.~samples. Therefore, to compute the bound after having optimized the weights in $\push{\hat P_n^\vw}{s}$, we first resample $n_\vw$ samples from this weighted distribution, where $n_\vw = \nicefrac{1}{\sum w_i^2}$ is the effective sample size of $\push{\hat P_n^\vw}{s}$. We then evaluate the bound in Theorem~\ref{theo:empirical} using these new $n_\vw$ samples and replacing $n$ with $n_\vw$.
\end{remark}

\subsection{Restricted Total Coverage Gap} \label{app:restricted_tcgap}
Thus far, we have discussed the total coverage gap, $\Delta_{P,Q}$, which considers miscoverage rates over the full range $[0,1]$ and underpins the main results of this paper, as well as the coverage gap for specific miscoverage rates, $\Delta_{P,Q}(\alpha)$, introduced in Appendix~\ref{app:specific_alpha}. 

In some scenarios, interest may lie in a restricted range of miscoverage rates rather than the entire interval $[0,1]$. To accommodate this, the definition can be extended to a range $[\alpha^{-}, \alpha^{+}]$ with $0 \leq \alpha^{-} \leq \alpha^{+} \leq 1$ such that
\begin{equation}
    \Delta_{P,Q}(\alpha^{-}, \alpha^{+}) := \int_{\alpha^{-}}^{\alpha^{+}} \frac{\Delta_{P,Q}(\alpha)}{\alpha^{+} - \alpha^{-}} d\alpha
\end{equation}

For the restricted coverage above, we have the following result.

\begin{proposition}[Restricted total coverage gap] \label{prop:res_cov}
Let $P$ and $Q$ be probability measures on $\mathcal{X}\times\mathcal{Y}$, and let $s:\mathcal{X}\times\mathcal{Y}\to\R$ be a measurable score function with pushforward measures $s_\# P$ and $s_\# Q$.
Assume $s_\# P$ is absolutely continuous with respect to the Lebesgue measure on $\R$ with density $p_{s_\# P}$ and CDF $F_{s_\# P}$.
For $0\le \alpha^- \le \alpha^+ \le 1$, define
\begin{equation*}
    \Delta_{P,Q}(\alpha^-,\alpha^+)
    \;:=\; \frac{1}{\alpha^+ - \alpha^-}\int_{\alpha^-}^{\alpha^+} 
    \Delta_{P,Q}(\alpha)\, d\alpha,
\end{equation*}
Then $\Delta_{P,Q}(\alpha^-,\alpha^+)$ is upper bounded by
\begin{equation}
\label{eq:A4-weighted-CDF}
    \frac{1}{\alpha^+ -\alpha^-} \int_{\R} p_{s_\# P}(s_c)\; \left| F_{s_\# P}(s_c) - F_{s_\# Q}(s_c) \right|\; \mathbf{1} \left\{ s_c \in \left[ F^{-1}_{s_\# P}(\alpha^-),\,F^{-1}_{s_\# P}(\alpha^+)\right] \right\}\, ds_c .
\end{equation}
\end{proposition}
\begin{proof}
The proof is close to that of Theorem~\ref{theo:tcgap}, following similar steps.
By definition and the Jensen's inequality, we know that

\begin{align*}
\Delta_{P,Q}(\alpha)
&= \biggr| \E_{\gS_c \sim \push{P^n}{s}} \left[ F_{\push{P}{s}}(\quant(\gS_c)) - F_{\push{Q}{s}}(\quant(\gS_c)) \right] \biggr| \\
&\leq \E_{\gS_c \sim \push{P^n}{s}} \left[ \biggr| F_{\push{P}{s}}(\quant(\gS_c)) - F_{\push{Q}{s}}(\quant(\gS_c)) \biggr| \right] 
\end{align*}

Averaging over $\alpha\in[\alpha^-,\alpha^+]$ and applying Fubini to swap the order of integration,
\begin{equation*}
\Delta_{P,Q}(\alpha^-,\alpha^+) \leq
\E_{\gS_c\sim s_\# P} \left[\frac{1}{\alpha^+-\alpha^-} \int_{\alpha^-}^{\alpha^+} \left|F_{s_\# P}(\quant(\gS_c))-F_{s_\# Q}(\quant(\gS_c))\right| \, d\alpha \right]
\end{equation*}

Now fix $\gS_c=\{s_c^{(1)},\dots,s_c^{(n)}\}$ sorted increasingly. Over the full range $[0,1]$, each calibration score occupies an interval of length $1/n$ in the quantile map. Restricting to $[\alpha^-,\alpha^+]$ simply zeroes out scores whose CDF lies outside this range. Thus
\begin{multline*}
    \int_{\alpha^-}^{\alpha^+}\left|F_{s_\#P}(\quant(\gS_c))-F_{s_\#Q}(\quant(\gS_c))\right|\,d\alpha
    = \\ \sum_{i=1}^n \frac{1}{n}\,\ind{F_{s_\#P}(s_c^{(i)})\in[\alpha^-,\alpha^+]}\, \left| F_{s_\#P}(s_c^{(i)})-F_{s_\#Q}(s_c^{(i)}) \right|.
\end{multline*}

Note that this is the same argument of (\ref{eq:empirical}) in the proof of Theorem~\ref{theo:tcgap}, but here we take extra care to restrict the range to $[\alpha^-,\alpha^+]$. Plugging back we have
\begin{equation*}
\Delta_{P,Q}(\alpha^-,\alpha^+) \leq
\E_{\gS_c\sim s_\# P} \left[\frac{1}{n}\sum_{j=1}^n
\ind{F_{s_\#P}(s_c^{(j)})\in[\alpha^-,\alpha^+]} \, \left|F_{s_\#P}(s_c^{(j)})-F_{s_\#Q}(s_c^{(j)})\right| \right].
\end{equation*}

Since the calibration data point is identically distributed, the expectation of the sample mean equals the population mean:
\begin{equation*}
\E_{\gS_c\sim s_\# P} \Bigg[\frac{1}{n}\sum_{j=1}^n h(s_c^{(j)})\Bigg] = \E_{\gS_c\sim s_\# P} [h(S)],
\end{equation*}
with $h(s)=\ind{F_{s_\#P}(s_c^{(i)})\in[\alpha^-,\alpha^+]}\, \left| F_{s_\#P}(s_c^{(i)})-F_{s_\#Q}(s_c^{(i)}) \right|$. Therefore,
\begin{equation*}
\Delta_{P,Q}(\alpha^-,\alpha^+) \le
\frac{1}{\alpha^+-\alpha^-}\int_{\R} p_{s_\#P}(s)\,\left|F_{s_\#P}(s)-F_{s_\#Q}(s)\right|\,
\ind{F_{s_\#P}(s_c^{(i)})\in[\alpha^-,\alpha^+]} \,ds,
\end{equation*}
which is the desired bound.
\end{proof}

\begin{remark}[On Wasserstein relaxations] It is also possible to connect the result of Proposition~\ref{prop:res_cov} to the 1-Wasserstein distance. Since $\ind{\cdot}\le 1$, a loose relaxation of (\ref{eq:A4-weighted-CDF}) gives
\begin{align*}
    \Delta_{P,Q}(\alpha^-,\alpha^+) & \leq \frac{1}{\alpha^+-\alpha^-} \Big( \sup_{s_c \in \R}p_{\push{P}{s}}(s_c) \Big) \, \int_{\R} \big| F_{s_\# P}(s)-F_{s_\# Q}(s) \big|\, ds \\ &= \frac{1}{\alpha^+-\alpha^-} \Big( \sup_{s_c \in \R}p_{\push{P}{s}}(s_c) \Big)\, W_1(s_\# P, s_\# Q),
\end{align*}
which may be overly conservative in practice.
\end{remark}

While one could optimize the upper bound in Proposition~\ref{prop:res_cov} directly, preliminary experiments show that this approach yields only marginal improvements in coverage. We see two likely reasons for this. First, optimizing a bound restricted to a specific coverage range may be inherently more challenging; for instance, the pointwise bound for a fixed $\alpha$ in Appendix~\ref{app:specific_alpha} also failed to deliver better empirical performance. Second, the total coverage gap already provides a strong and well-behaved objective, leaving little room for alternative formulations to offer significant gains. Nevertheless, these more targeted objectives remain an interesting direction for future work, particularly in applications where coverage guarantees over a narrow range of $\alpha$ are critical.

\subsection{Upper Bound to the Coverage Gap for a Specific Target Miscoverage Rate} \label{app:specific_alpha}
As discussed in the main paper, similar techniques can also be employed to derive an upper bound on $\Delta_{P,Q}(\alpha)$, the coverage gap corresponding to a given miscoverage rate $\alpha$. This result is formalized in Theorem~\ref{theo:specific_alpha_gap}, which also outlines how it can be estimated from samples using the DKW inequality.
\begin{theorem}\label{theo:specific_alpha_gap}
    Let $P$ and $Q$ be two probability measures on $\gX \times \gY$ with $\push{P}{s}$ and $\push{Q}{s}$ their respective pushforward measures by the score function $s: \gX \times \gY \rightarrow \R$. Let $\push{\hat P_n}{s}$ and $\push{\hat Q_m}{s}$ denote their empirical distributions constructed from $n$ and $m$ samples, respectively. 
    Further, let $\push{\hat Q^{\downarrow}_m}{s}$ and $\push{\hat Q^{\uparrow}_m}{s}$ be such that $\push{\hat Q^{\uparrow}_m}{s} \dominates \push{\hat Q_m}{s} \dominates \push{\hat Q^{\downarrow}_m}{s}$. Then, we have with probability at least $1-2d$ that
    \begin{multline*}
        \Delta_{P,Q}(\alpha) \leq \frac{1}{2} \E_{\gS_c \sim \push{P^n}{s}} \biggr[ \bigg| F_{\push{\hat P_n}{s}}(\quant(\gS_c)) - F_{\push{\hat Q^{\downarrow}_m}{s}}(\quant(\gS_c)) \bigg| \\ + \bigg| F_{\push{\hat P_n}{s}}(\quant(\gS_c)) - F_{\push{\hat Q^{\uparrow}_m}{s}}(\quant(\gS_c)) \bigg| + \bigg| F_{\push{\hat Q^{\downarrow}_m}{s}}(\quant(\gS_c)) - F_{\push{\hat Q^{\uparrow}_m}{s}}(\quant(\gS_c)) \bigg| \biggr] \\ + \sqrt{\frac{\log(2/d)}{2n}} + \sqrt{\frac{\log(2/d)}{2m}}
    \end{multline*}
\end{theorem}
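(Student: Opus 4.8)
The plan is to replay the derivation of Theorem~\ref{theo:unlabeled_tcgap} at the level of a single miscoverage rate $\alpha$ — that is, without the outer $\int_0^1 d\alpha$ — and then pass from population to empirical CDFs using the DKW inequality. First I would start from the two opening lines of the proof of Theorem~\ref{theo:tcgap} applied at a fixed $\alpha$, namely $\Delta_{P,Q}(\alpha) = \bigl| \E_{\gS_c \sim \push{P}{s}}[ F_{\push{P}{s}}(\quant(\gS_c)) - F_{\push{Q}{s}}(\quant(\gS_c)) ] \bigr|$, and apply Jensen's inequality to move the absolute value inside, obtaining $\Delta_{P,Q}(\alpha) \le \E_{\gS_c \sim \push{P}{s}}\bigl[ \bigl| F_{\push{P}{s}}(\quant(\gS_c)) - F_{\push{Q}{s}}(\quant(\gS_c)) \bigr| \bigr]$. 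This is exactly the integrand of the total-coverage-gap bound, so the whole task reduces to bounding the inner absolute CDF difference evaluated at the random threshold $\quant(\gS_c)$.

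Next, for a fixed threshold value $t$ (to be instantiated at $t = \quant(\gS_c)$ afterwards), I would insert the empirical CDFs via the triangle inequality, $|F_{\push{P}{s}}(t) - F_{\push{Q}{s}}(t)| \le |F_{\push{\hat P}{s}}(t) - F_{\push{\hat Q}{s}}(t)| + |F_{\push{P}{s}}(t) - F_{\push{\hat P}{s}}(t)| + |F_{\push{Q}{s}}(t) - F_{\push{\hat Q}{s}}(t)|$, and then treat the first term exactly as in the proof of~\eqref{eq:unlabeled-density-weighted} in Theorem~\ref{theo:unlabeled_tcgap}: apply the triangle inequality once through $F_{\push{\hat Q^{\uparrow}}{s}}$ and once through $F_{\push{\hat Q^{\downarrow}}{s}}$, average the two resulting inequalities, and invoke the \emph{empirical} dominance hypothesis $\push{\hat Q^{\uparrow}}{s} \dominates \push{\hat Q}{s} \dominates \push{\hat Q^{\downarrow}}{s}$ to drop the absolute values on the two $F_{\push{\hat Q}{s}}(t)$ terms; these then cancel, leaving only $F_{\push{\hat Q^{\downarrow}}{s}}(t) - F_{\push{\hat Q^{\uparrow}}{s}}(t)$, which is nonnegative by dominance and hence equals $|F_{\push{\hat Q^{\downarrow}}{s}}(t) - F_{\push{\hat Q^{\uparrow}}{s}}(t)|$. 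This reproduces the bracketed expression in the statement with empirical CDFs throughout.

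The remaining two remainder terms, $|F_{\push{P}{s}}(t) - F_{\push{\hat P}{s}}(t)|$ and $|F_{\push{Q}{s}}(t) - F_{\push{\hat Q}{s}}(t)|$, I would control with the DKW inequality, which bounds the supremum over $t$ of the first by $\sqrt{\log(2/d)/(2n)}$ and of the second by $\sqrt{\log(2/d)/(2m)}$, each with probability at least $1-d$; a union bound gives both simultaneously with probability at least $1-2d$. Since these bounds are uniform in $t$, they remain valid after the substitution $t = \quant(\gS_c)$ for every realization of $\gS_c$, so I can take $\E_{\gS_c \sim \push{P}{s}}$ of the assembled pointwise inequality — the two DKW constants are deterministic and factor out of the expectation — and land on exactly the claimed bound.

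I expect the only genuinely delicate step to be the probabilistic bookkeeping: making precise which randomness the $1-2d$ event is over (the $n$ calibration and $m$ test samples defining $\push{\hat P}{s}$ and $\push{\hat Q}{s}$) and how it interacts with the threshold $\quant(\gS_c)$ inside the outer expectation — one clean route is to let $\gS_c$ be exactly the $n$-sample calibration set underlying $\push{\hat P}{s}$, apply DKW conditionally on it, and then average. I would also flag that, unlike in the main text where $(\push{\hat Q^{\min}}{s},\push{\hat Q^{\max}}{s})$ provably satisfy the dominance relation, here $\push{\hat Q^{\uparrow}}{s}\dominates\push{\hat Q}{s}\dominates\push{\hat Q^{\downarrow}}{s}$ is a hypothesis of the theorem, used precisely at the cancellation step above. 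Everything else — Jensen, repeated triangle inequalities, and pulling constants out of an expectation — is routine.
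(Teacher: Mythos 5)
Your proposal is correct and matches the paper's proof essentially step for step: Jensen's inequality at fixed $\alpha$, two rounds of the triangle inequality to insert the empirical CDFs $F_{\push{\hat P}{s}}$ and $F_{\push{\hat Q}{s}}$, DKW plus a union bound for the two remainder terms, and a final pair of triangle inequalities through $F_{\push{\hat Q^{\uparrow}}{s}}$ and $F_{\push{\hat Q^{\downarrow}}{s}}$ averaged and simplified via the empirical stochastic-dominance hypothesis so the $F_{\push{\hat Q}{s}}$ terms cancel. The only superficial difference is the order of operations (the paper applies DKW before introducing the auxiliary distributions, you do it after), which does not affect the argument; your remark about the interplay between the DKW event and the randomness of $\gS_c$ is a fair point that the paper handles by using the uniformity of the DKW bound in $t$.
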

\begin{proof}
    Per definition the coverage gap for a specific target miscoverage rate $\alpha$ is given by 
    \begin{align*} 
        \Delta_{P,Q}(\alpha) &:= \big| P(S_t \leq \quant(S_c)) - Q(S_t \leq \quant(S_c)) \big |  \\
        \nonumber &= \biggr|  \E_{S_t \sim \push{P}{s}} \left[ \E_{\gS_c \sim \push{P^n}{s}}  \left[\ind{S_t \leq \quant(S_c)}\right] \right]    \\ 
        & \hspace{5cm} - \E_{S_t \sim \push{Q}{s}} \left[ \E_{\gS_c \sim \push{P^n}{s}}  \left[\ind{S_t \leq \quant(S_c)}\right] \right] \biggr|, \\
        &= \biggr| \E_{\gS_c \sim \push{P^n}{s}} \left[ F_{\push{P}{s}}(\quant(\gS_c)) - F_{\push{Q}{s}}(\quant(\gS_c)) \right] \biggr| \\
        &\leq \E_{\gS_c \sim \push{P^n}{s}} \left[ \biggr| F_{\push{P}{s}}(\quant(\gS_c)) - F_{\push{Q}{s}}(\quant(\gS_c)) \biggr| \right] 
\end{align*}
where the last inequality follows from Jensen's inequality. At this point we introduce the empirical distributions $\push{\hat P_n}{s}$ and $\push{\hat Q_m}{s}$ by applying the triangle inequality twice.
\begin{align*}
    \Delta_{P,Q}(\alpha) &\leq \E_{\gS_c \sim \push{P^n}{s}} \biggr[ \bigg| F_{\push{\hat P_n}{s}}(\quant(\gS_c)) - F_{\push{Q}{s}}(\quant(\gS_c)) \bigg| \\
    & \hspace{6cm} + \bigg| F_{\push{P}{s}}(\quant(\gS_c)) - F_{\push{\hat P_n}{s}}(\quant(\gS_c)) \bigg| \biggr] \\
    &\leq \E_{\gS_c \sim \push{P^n}{s}} \biggr[ \bigg| F_{\push{\hat P_n}{s}}(\quant(\gS_c)) - F_{\push{\hat Q_m}{s}}(\quant(\gS_c)) \bigg| \\ 
    & \hspace{0.5cm} + \bigg| F_{\push{P}{s}}(\quant(\gS_c)) - F_{\push{\hat P_n}{s}}(\quant(\gS_c)) \bigg| + \bigg| F_{\push{Q}{s}}(\quant(\gS_c)) - F_{\push{\hat Q_m}{s}}(\quant(\gS_c)) \bigg|\biggr]
\end{align*}
We then apply the DKW inequality to get with probability $1-2d$
\begin{align*}
    \Delta_{P,Q}(\alpha) &\leq \E_{\gS_c \sim \push{P^n}{s}} \biggr[ \bigg| F_{\push{\hat P_n}{s}}(\quant(\gS_c)) - F_{\push{\hat Q_m}{s}}(\quant(\gS_c)) \bigg|\biggr] + \sqrt{\frac{\log(2/d)}{2n}} + \sqrt{\frac{\log(2/d)}{2m}}.
\end{align*}
Finally, we introduce the two auxiliary distributions $\push{Q^{\uparrow}_m}{s} \dominates \push{Q}{s} \dominates \push{Q^{\downarrow}_m}{s}$ by once more applying the triangle inequality twice and summing the inequalities to get the final result.
\end{proof}

Unfortunately, preliminary experiments indicate that the bound presented in Theorem~\ref{theo:specific_alpha_gap} is loose and of limited practical utility, unless the auxiliary distributions are close to the true score distribution under $Q$. Further research is required to derive meaningful bounds for specific values of $\alpha$ in the absence of labeled data. Nevertheless, if the goal is to enhance coverage for a particular $\alpha$, one can directly optimize the bound in Theorem~\ref{theo:specific_alpha_gap}. Following prior work \cite{bellotti2021optimized,stutz2022learning,correia2024information}, this can be achieved by introducing differentiable relaxations in the computation of quantiles and empirical CDFs, thereby enabling gradient-based optimization with respect to the weights in $\push{\hat P_n^\vw}{s}$. However, this approach has proven less effective than optimizing upper bounds on the total coverage gap.

\newpage
\section{Applicability, Limitations, and Extensions}

\paragraph{Prior-Knowledge-Based Sandwiching Design.}
Our bounds in Theorem~\ref{theo:unlabeled_tcgap} rely on auxiliary distributions $(\push{Q_{\downarrow}}{s}, \push{Q_{\uparrow}}{s})$ that stochastically dominate the unknown test score distribution $\push{Q}{s}$. In the absence of prior knowledge, we use uninformed constructions such as $(\min,\max)$ or $(f,U)$, which work well empirically but may yield loose bounds. A natural extension is to exploit domain-specific or structural prior information to design tighter sandwiching distributions.

For example, if labels are organized in a hierarchy (e.g., ImageNet superclasses), side information can constrain the feasible set of candidate labels for each test point. This allows constructing $\push{Q_{\downarrow}}{s}$ and $\push{Q_{\uparrow}}{s}$ by selecting the most plausible and least plausible labels within that subset, leading to provably tighter bounds. Similarly, if the distribution shift is known to be bounded (e.g., perturbations within an $\ell_\infty$ ball), optimization procedures can identify feasible score ranges that respect these constraints. The smaller the feasible set, the closer the auxiliary distributions approximate $\push{Q}{s}$, improving both theoretical guarantees and empirical performance.

Exploring these strategies (hierarchical constraints, perturbation models, or other structured priors) represents a promising direction for future work, as it bridges the gap between general-purpose bounds and application-specific robustness.

\paragraph{Robustness to Misspecification.}
Our bounds in Theorem~3.3 assume that the auxiliary distributions $(\push{Q_{\downarrow}}{s}, \push{Q_{\uparrow}}{s})$ satisfy the stochastic dominance relationship $\push{Q_{\uparrow}}{s} \succeq \push{Q}{s} \succeq \push{Q_{\downarrow}}{s}$. This condition is guaranteed for the $(\min,\max)$ construction and was observed to hold in many experiments for $(f,U)$, which uses scores derived from the model’s predicted distribution $Q_f(Y|X)$ and from a uniform distribution over labels. However, $(f,U)$ does not always satisfy this assumption.

Coverage improvements in these cases can be explained by the fact that the optimization objective remains effective whenever the auxiliary distributions help move the calibration score distribution closer to the test score distribution. This alignment, even if imperfect, can still reduce the coverage gap. However, this is an important caveat: if the auxiliary distributions fail to capture the nature of the shift, optimization may bias the calibration distribution in the wrong direction and worsen coverage. Handling this risk requires care.

Future work should systematically study these failure modes and develop safeguards. Promising directions include diagnostics to detect dominance violations, adaptive refinement of auxiliary distributions based on empirical checks, and regularization strategies to prevent extreme deviations when auxiliary distributions are poorly aligned with the test distribution.

\paragraph{Ambiguous Ground Truth.}
Ambiguous ground truth arises in settings where each instance may correspond to multiple plausible labels with associated probabilities, such as in fine-grained classification or scenarios with inherent uncertainty. This problem has recently attracted attention in the conformal prediction literature \cite{stutz2023conformal,caprio2025conformalized}. 

Our method operates directly on nonconformity scores, which are typically unidimensional, without imposing any assumptions on how the scores are constructed. This property makes it naturally compatible with most CP techniques, including scenarios involving ambiguous ground truth. For example, following \citet{stutz2023conformal}, one can define a score function as a weighted average of class-specific scores under a plausibility vector $\lambda \in \Delta_K$, i.e.,
\begin{equation*}
    s'(x,\lambda) := \sum_{k=1}^K \lambda_k \, s(x,y_k).
\end{equation*}

Once such a score is defined, our approach can learn weights over the calibration data and compute a threshold that adapts under distribution shift, just as in the standard setting.

The main challenge lies in constructing auxiliary distributions for ambiguous ground truth. Instead of working with a discrete set of labels, we must consider distributions over the simplex, which complicates the design of $(\push{Q_{\downarrow}}{s}, \push{Q_{\uparrow}}{s})$. While a min–max construction remains possible, it may be overly conservative, as the resulting auxiliary distributions could be too far apart to yield tight bounds. Future work could explore more informative strategies for building auxiliary distributions in this setting, potentially leveraging prior knowledge or structural constraints on the plausibility vectors.
\newpage
\section{Extra experimental details and results} \label{app:experiments}
In this section, we present additional details about our experimental setup and supplementary results. We begin by outlining how our methods fit within the split conformal prediction framework in Section~\ref{app:split-cp} and specifically in Algorithm~\ref{alg:ours-end2end}. Next, we detail the baseline methods in Section~\ref{app:baselines}, followed by a description of the datasets used in Section~\ref{app:datasets}. Finally, in Section~\ref{app:design}, we discuss key design choices and ablation studies that may offer valuable insights for future research.

Before proceeding, we comment on a few technical details. We note that the code was implemented in Python 3 using PyTorch \citep{paszke2017automatic} and all experiments were conducted on a single commercial NVIDIA GPU with 12 GB of memory.

\bigskip
\begin{algorithm}[H]
   \caption{End-to-end Non-exchangeable CP with Optimal Transport (Split CP)}
   \label{alg:ours-end2end}
\begin{algorithmic}
   \STATE {\bfseries Input}:
   \STATE \hspace{12pt} $n$ labeled samples $\{(x_i, y_i)\}_{i=1}^n$ from $P$
   \STATE \hspace{12pt} $m$ unlabeled samples $\{x_j\}_{j=n+1}^{n+m}$ from $Q$
   \STATE \hspace{12pt} target miscoverage $\alpha \in (0,1)$
   \STATE \hspace{12pt} score function $s$
   \STATE Initialize unnormalized weights $\tilde \vw = \{\tilde w_i\}_{i=1}^n$ or weight function $w_\theta$
   \STATE Compute scores $\{s(x_i, y_i)\}_{i=1}^n$
   \STATE Compute score vectors $\{\vs(x_j)\}_{j=n+1}^{n+m}$ \hfill {\small // $\vs(x)=\{s(x,y):y \in \gY\}$}
   \REPEAT
      \STATE Construct $\push{\hat Q^{\downarrow}}{s}$ and $\push{\hat Q^{\uparrow}}{s}$ from $\{\vs(x_j)\}_{j=n+1}^{n+m}$ \hfill {\small // e.g., $(\min,\max)$ or $(f,U)$}
      \STATE Fit KDE to $\{s(x_i, y_i)\}_{i=1}^n$ with weights $\vw$
      \STATE Update $\tilde \vw$ or $w_\theta$ to minimize either (\ref{eq:unlabeled-density-weighted}) or (\ref{eq:unlabeled-w1-bound}) \hfill {\small // weighted-CDF or 1-Wasserstein bound}
   \UNTIL{convergence or max steps}
   \STATE {\bfseries Weighted normalization}: Compute normalized weights $\vw$ \hfill {\small // accounting for weight of test point}
   \STATE {\bfseries Weighted threshold}: $q_{1-\alpha} \leftarrow Q^{\vw}_\alpha\big(\{s(x_i, y_i)\}_{i=1}^n\big)$
   \STATE {\bfseries Prediction sets}: for each $x_t$, set $C(x_t) \leftarrow \{\,y \in \gY \;:\; s(x_t,y) \le q_{1-\alpha}\,\}$
   \STATE {\bfseries Output}: $\{C(x_t)\}_{t=1}^T$ (if $\{x_t\}$ provided) and learned weights $\vw$
\end{algorithmic}
\end{algorithm}

\subsection{Split Conformal Prediction Procedure}
\label{app:split-cp}

We follow a standard split conformal prediction framework, with the main difference being that we introduce weights over the calibration nonconformity scores to better align their empirical distribution with that of test scores. Concretely, we construct prediction sets by thresholding on the nonconformity scores given by one minus the model-assigned probabilities. However, our methods are agnostic to the choice of score function and could be applied to other approaches like APS \cite{romano2020classification}. We outline the complete split CP algorithm we use, including  weight optimization with our bounds, in Algorithm~\ref{alg:ours-end2end}.

\subsubsection{Weighting of Test Samples}
In standard split conformal prediction, a test data point is implicitly assigned a weight of $\nicefrac{1}{n+1}$, preserving symmetry with the calibration set. Extensions to non-exchangeable settings like \cite{tibshirani_conformal_2019,barber2023conformal} also address this issue explicitly. \citet{tibshirani_conformal_2019} propose assigning the test point a weight proportional to its likelihood ratio under the shifted distribution (see details in Section~\ref{sec:lr}), while \citet{barber2023conformal} fix the unnormalized weight of the test point to 1, reflecting the fact that it already comes from the target distribution.

\paragraph{Free-form weights} We adopt a similar convention of \cite{barber2023conformal} for free-form weights, assigning test points the unnormalized weight $\tilde w_{n+1} = 1$. Intuitively, importance weights are meant to correct for distribution mismatch, and the test point is already drawn from the target distribution, which justifies unit weights. In that case, we have the following weighted empirical distribution $\push{P_n^\vw}{s}$ 
\begin{equation*}
    \push{\hat P^\vw_n}{s} = \sum_{i=1}^n w_i \delta_{s(x_i, y_i)} \quad \text{with normalized weights} \quad w_i = \frac{\tilde w_i}{1 + \sum_{j=1}^{n} \tilde w_j}.
\end{equation*}

\paragraph{Weight Function} When using a weight function, we adopt a similar strategy to that of \citet{tibshirani_conformal_2019}. Unlike their setting, where the weight function maps inputs $x$ to weights, our function maps nonconformity scores to weights. Consequently, for each candidate label $y \in \gY$, we obtain a distinct test weight $w_\theta(s(x_{n+1}, y))$. Computing all these weights can be expensive when $|\gY|$ is large. To mitigate this cost, we approximate the test weight by taking the maximum over the score range $[s_{\min}, s_{\max}]$ to get a conservative upper bound for the conformal threshold:
\begin{equation*}
\tilde w_{n+1} = \max_{s \in [s_{\min}, s_{\max}]} w_\theta(s).
\end{equation*}
In our classification experiments, we define nonconformity scores as one minus the probabilities assigned by the model, with $[s_{\min}, s_{\max}]=[0,1]$. For calibration points, we compute unnormalized weights directly as $\tilde w_i = w_\theta(s(x_i, y_i))$ for $i \in \{0, \ldots, n\}$. Finally, the weighted empirical distribution $\push{P_n^\vw}{s}$ is given by
\begin{equation*}
    \push{\hat P^\vw_n}{s} = \sum_{i=1}^n w_i \delta_{s(x_i, y_i)} \quad \text{with normalized weights} \quad w_i = \frac{\tilde w_i}{\sum_{j=1}^{n+1} \tilde w_j}.
\end{equation*}

\subsection{Other Implementation Details}

In all experiments, free-form weights are randomly initialized from a uniform distribution in $[0, 1]$ but mapped to log space for stability. When learning a weight function, we implement it as a small multi-layer perceptron (MLP) applied directly to scalars representing nonconformity scores. In all cases, the architecture is a simple MLP with shape $1 \to 256 \to 16 \to 8 \to 1$ and ReLU activations followed a tempered $\tanh$ output to bound log-weights in $[-20,20]$. Optimization follows the exact same procedure for both parametrizations. In particular, for the image classification tasks, we use Adam with learning rate $10^{-3}$ for all datasets, varying the number of steps from $1000$ to $5000$ steps depending on shift severity; see Section~\ref{app:datasets} for exact details. 

\subsection{Baselines} \label{app:baselines}
\subsubsection{``Oracle''}
We use the term ``oracle'' to describe the marginal coverage and expected prediction set size achieved when both calibration and testing are performed on samples from $Q$. This setup guarantees the desired coverage and, in our context where the test scores remain fixed, represents the best possible outcome. We only show these results as a reference for what we would get if we knew the true $\push{Q}{s}$, highlighting that the prediction set sizes are considerably larger only because the model is less accurate, as it was trained on samples from $P$ and not $Q$.

\subsubsection{Likelihood Ratios} \label{sec:lr}
\citet{tibshirani_conformal_2019} also proposed to reweight calibration points from $P$ and apply split CP using a weighted distribution of calibration scores $\push{\hat P_n^\vw}{s} = \frac{1}{n}\sum_{i=1}^n w_i \delta_{s(x_i, y_i)}$. Since they only address covariate shifts, it is easy to show the optimal unnormalized weights are given by likelihood ratios of the form $\nicefrac{dQ(x_i)}{dP(x_i)}$. Unfortunately, learning accurate likelihood ratios is known to be challenging, especially when the two distributions are far apart. In our experiments, we applied the telescopic density ratio estimation approach of \citep{rhodes2020telescoping}, which we found useful in the context of more severe shifts. In all cases, learning likelihood ratios directly on the input space $\gX$ proved challenging and we found more success when operating on the space of scores, i.e., fitting the density ratio estimator to map vector of scores $\vs(x_i)$ to (approximate) ratios $\tilde w_i \approx \nicefrac{dQ(x_i)}{dP(x_i)}$. The weights $w_i$ are then recovered by normalizing the likelihood ratios over the calibration set and the test point in question. That is, at test time, we must first evaluate the density ratio estimator to get $\tilde w_{n+1} \approx \nicefrac{dQ(x_{n+1})}{dP(x_{n+1})}$ and then compute normalized weights as
\begin{equation*}
    w_i = \frac{\tilde w_i}{\sum_{j=1}^{n+1} \tilde w_j}.
\end{equation*}

For the large datasets, ImageNet-C and iWildcam, our density ratio estimator was given by a neural network with two hidden layers and ten bridges. In this context, each bridge predicts the density ratio between two intermediary distributions defined by a mixture of samples from $P$ and $Q$. In our experiments, we constructed the intermediary distributions via linear combinations as detailed in \citep{rhodes2020telescoping}. We train the density ratio estimator for ten epochs with a learning rate of $1e^{-3}$ and weight decay of $1e^{-3}$ to avoid overfitting. In the regression task, we also used an MLP with two hidden layers and the same learning rate of $1e^{-3}$ but forewent the telescopic approach (no bridges) as it did not prove useful. Regarding the architectures, we used ReLU activations and kept the hidden size constant and equal to the input size, i.e., 1000 for ImageNet-C, 182 for iWildCam, and 4 for the regression task.

\subsubsection{Entropy scaled Conformal Prediction}
We also compare our methods to Entropy scaled Conformal Prediction (ECP) proposed by \citep{kasa2024adapting}, which constructs prediction sets as 
\begin{align*}
    u_{\gD_Q^{(2)}} &= \quant\left(\{h(\vf(X_i))\}_{i=1}^n\right) \\
    \gC_{Kasa}(X_t) &= \left\{y \in \gY : s(X_t, y) \cdot \max(1, u_{\gD_Q^{(2)}}) \leq \quant\left(\gS_c \right) \right\}, 
\end{align*}
where $\vf(X_i) =\{f(X_i, y') : y' \in \gY \}$ is the set of probabilities assigned to each class by the underlying predictor $f$, and $h(\vf(X_i)) = -\sum_{y \in \gY} f(X_i, y)\log(f(X_i, y))$ is the entropy of this set of probabilities.
Their method is designed to work with test-time adaption methods \citep{ttt, liu2021ttt++}, which adapt the model $f$ in a stream of test samples. However, it can also be applied to our setting where the model is kept fixed and a set of test samples $\gD_Q^{(2)}$ is observed all at once, as formulated above.

\subsubsection{Conformal Prediction With Conditional Guarantees}
\citet{gibbs2023conformal} proposed a method that, under a prespecified function class of covariate shifts, guarantees conditional coverage, i.e., ensuring the prediction set contains the true label for every test point $X_t$
\begin{equation*}
    \P(Y_t \in \gC(X_t) | X_t) \geq 1- \alpha.
\end{equation*}
Their approach essentially changes the conformal threshold for each new test point by learning a function $\hat g_{s(X_t, y)}$ as follows
\begin{align} \label{eq:gibbs}
    \nonumber \hat g_{S} & = \argmin_{g \in \gF} \frac{1}{n+1} \sum_{i=1}^n \ell_\alpha(g(X_i), S_i) + \frac{1}{n+1}\ell_\alpha(g(X_t), S) \\
    \gC_{Gibbs}(X_t) &= \left\{y \in \gY : s(X_t, y) \leq \hat g_{s(X_t, y)}(X_t) \right\}, 
\end{align}
where $\gF$ is the function class of distribution shifts of interest, $\ell_\alpha$ is the pinball loss with target quantile level $\alpha$, and $S \in \R$ refers to the unknown nonconformity score of $X_t$. By computing a new threshold per test point and providing conditional guarantees, their method is inherently more powerful than ours. However, this extra power comes with extra limitations:
\begin{itemize}
    \item \textbf{Function class $\gF$ is typically unknown}. Conditional guarantees are impossible in the most general case of an arbitrary infinite dimensional class \citep{vovk2012conditional,foygel2021limits}. Therefore, we must constrain ourselves to a prespecified class of functions, which requires precise knowledge about the types of distribution shift we expect in practice. In experiments with large datasets like ImageNet-C or iWildCam, it is not clear how to define $\gF$ effectively.
    \item \textbf{Computational cost at test time}. \citet{gibbs2023conformal} propose an efficient algorithm to optimize (\ref{eq:gibbs}) that leverages the monotonicity of quantile regression to avoid evaluating each possible test score for $X_t$. Yet, this procedure still significantly increases latency at test time, and in our hardware, it took 1.5 seconds per test point when using 300 calibration points, and 30 seconds when using 1000 calibration points. Due to this extra computational cost, we only used 300 calibration points when applying the method of \citep{gibbs2023conformal}.
    \item \textbf{Prediction set size}. In practice, we observed that prediction sets produced by the method of \citep{gibbs2023conformal} to be significantly larger than optimal. This is in part due to the stronger conditional guarantee but might reduce the usefulness of the prediction sets in practice.
\end{itemize}

In our experiments, we used the official implementation available at \href{https://github.com/jjcherian/conditional-conformal}{github.com/jjcherian/conditional-conformal}. Similarly to the likelihood ratio baseline described above, we applied their method to the vector of scores instead of the input space. Since the function class corresponding to the type of distribution shifts observed in ImageNet-C and iWildCam are hard to define in practice, we applied the most general approach using radial basis function (RBF) kernels with hyperparameters $\gamma=12.5$ and $\lambda=0.005$ (see the official implementation for details) for ImageNet-C and iWildCam experiments, since we found this to work best in preliminary experiments.

\subsection{Datasets} \label{app:datasets}
\subsubsection{Regression - Synthetic Data}
For the toy regression task we adopt a setting similar to the one proposed in \citep{yang2024doubly}, where we have a regression problem with 4-dimensional input variable $X \sim \N(0, I_4)$, and target variable given by $Y= 210 + 27.4 X_1 + 13.7 X_2 + 13.7 X_3 + 13.7 X_4 + \epsilon$, with $\epsilon \sim \N(0, 1)$. We will refer to this distribution on $\gX \times \gY$ as the unshifted distribution $P$. We then induce a covariate shift to get a new distribution $Q$ via exponential tilting by resampling the data with weights $w_{\text{tilt}}(x) = \exp(-1 x_1 + 0.5 x_2 - 0.25 x_3 - 0.1 x_4).$ This automatically gives us ground-truth likelihood ratios, since $w_{\text{tilt}}(x) = \nicefrac{dQ(x)}{dP(x)}$ by design. Note that these tilting weights should still be normalized over the calibration set to define a proper empirical distribution we can apply conformal prediction over. 

For the experiment depicted in Figure~\ref{fig:regression}, we repeat 500 simulations, each time sampling new datasets $\gD_P^{(1)}$, $\gD_P^{(2)}$, $\gD_Q^{(1)}$, and $\gD_Q^{(2)}$, each with 1000 samples. We use the regression-as-classification method of \citet{guhaconformal}, splitting the output space into 50 equally spaced bins. For the predictor, we train a multilayer perceptron (MLP) with a single hidden layer of 256 units and ReLU activations. For the density ratio estimator, we use an MLP with one hidden layer with 32 neurons, which we train to distinguish samples from $\gD_P^{(2)}$ and $\gD_Q^{(1)}$ by minimizing the logistic loss as common in density estimation tasks. For our methods, we only considered the free-form parametrization for this experiment. We repeat the optimization process in Algorithm~\ref{alg:ours} for 10 steps, each time updating all the weights using Adam \citep{kingma2014adam} with a learning rate of $0.1$.

\subsubsection{ImageNet-C}
In all cases, the underlying classifier is the pretrained ResNet-50 from torchvision \citep{torchvision2016}. To simulate label shift, as in previous work \citep{garg2023rlsbench}, we do so by resampling data points (without replacement) according to a new label marginal $Q(Y) \sim \text{Dir}(\vc)$ where the concentration parameters $\vc$ are given by $c_k = P(Y=k) * \gamma$. The parameter $\gamma$ controls the intensity of the label shift, with lower values of $\gamma$ producing more skewed label marginals. However, since we sample without replacement, low values of $\gamma$ lead to small sample sizes ($<50$) and, consequently, noisy results. For that reason, we set $\gamma=10$ which yields a significant label shift while producing sample sizes of around 300 samples.

ImageNet-C \citep{imagenetc} comprises 15 different types of corruption applied on top of the original validation dataset of ImageNet \citep{russakovsky2015imagenet}, which contains 50K samples. We repeat each experiment with 10 random seeds. As explained in Section~\ref{sec:exp_reg}, each time we randomly split the corrupted data $\gD_Q$ into two: $\gD_Q^{(1)}$ used to learn the weights or density ratio model, and $\gD_Q^{(2)}$ for testing. Since we use a pretrained model as classifier, we do not need to reserve a subset of the clean data $\gD_P$ (the original validation dataset) for training the model. However, one thing to note is that in ImageNet-C $\gD_Q$ is constructed using the same images in $\gD_P$. Thus, we have to ensure the calibration dataset, which we denote $\gD_P^{(2)}$ to be consistent with Section~\ref{sec:exp_reg}, does not contain any of the images in either $\gD_Q^{(1)}$ or $\gD_Q^{(2)}$. This will affect the sizes of each of these sets, as described in the following sections. In all ImageNet-C experiments, unless explicitly stated otherwise, we construct calibration and test sets with the following sizes: calibration sets with $|\gD_P^{(2)}|=300$ and $|\gD_Q^{(1)}|=300$, and test sets with $|\gD_Q^{(2)}|=30000$.

The underlying classifier was a pretrained ResNet-50 available in Torchvision package \cite{torchvision2016}, which was kept fixed in all experiments. We only learn weighting scheme for the calibration data points, and the model as well as the nonconformity score function remain unchanged in all cases. In each optimization step and for both parametrizations, we backprop through all weights (no batching) using Adam with a learning rate of $1\mathrm{e}{-3}$ and $\beta = (0.9, 0.999)$. These hyperparameters were the same across all runs, with only the number of optimization steps varying: $1000$ steps for distribution shifts of severity 1, $3000$ steps for severity 3, and $5000$ steps for severity 5. We observed our methods to be fairly robust to these hyperparameters. One should only keep in mind that the more severe the shift, the longer the optimization or the larger the learning rate should be, as demonstrated in our approach.

In Table~\ref{tab:imagenetc_cov_big}, we present the empirical coverage obtained for each method and corruption in ImageNet-C, giving a more complete picture of the ImageNet-C results reported in Table~\ref{tab:imageclassification}.

\subsubsection{iWildCam} \label{app:influence_number_of_samples}
iWildCam involves images of animals from different camera traps that aim to monitor biodiversity loss. The distribution shift arises from the differences in the characteristics of the environment of each camera trap (e.g., changes in illumination, camera angle, background, etc.).
We use different subsets of camera traps for training, validation and testing, which induces a distribution shift. For the classifier, we train a ResNet-50 model from scratch on the training set. As for ImageNet-C, we repeat each experiment with 10 random seeds.

We experiment with two different settings of distribution shift: the natural one (i.e., differences in camera-traps between validation and testing) that already exists in the data, and a combination of the natural shift with a label shift induced by changing the marginal over the labels via a Dirichlet distribution in the same way as for the ImageNet-C dataset, also with $\gamma=10$. For this experiment, we also used Adam with a learning rate of $1\mathrm{e}{-3}$ and $\beta = (0.9, 0.999)$, running optimization for $1000$ steps. As for the ImageNet-C dataset, we use calibration sets with $|\gD_P^{(2)}|=300$ and $|\gD_Q^{(1)}|=300$ for the experiments reported in the main paper, with the test set composed of 10000 samples, i.e., $|\gD_Q^{(2)}|=10000$. However, we also use the iWildCam dataset to study the effect of variations in sample sizes in our methods, as explained in the next section.

\subsection{Further Discussion} \label{app:design}
In this section, we examine key design choices, including the impact of sample size and the performance differences arising from the selection of bound and auxiliary distributions. For clarity, the empirical results presented in this section focus exclusively on the free-form parametrization, i.e., we directly optimize the weights over the calibration scores.

\subsubsection{Influence of the number of samples from calibration and test distributions}
In Tables~\ref{tab:iwildcam_total_min_max} and~\ref{tab:iwildcam_total_random_model}, we illustrate how the final total coverage gap $\Delta_{P,Q}$ varies with $|\gD_P^{(2)}|$, the number of calibration samples from $P$, and $|\gD_Q^{(1)}|$, the number of unlabeled samples from $Q$. As anticipated, the method's performance improves with an increase in the number of available samples. Interestingly, the number of calibration samples from $P$ appears to be more crucial for performance. This is encouraging, as it suggests that collecting or waiting for a large number of unlabeled samples from the test distribution is unnecessary, with no significant gains observed beyond 1000 samples.

We observed a significant reduction in the total coverage gap in all cases where the number of calibration samples from $P$ was 100 or more. For smaller calibration samples, the observed change in coverage was minimal or even slightly detrimental, as in the case of $(\push{\hat Q_m^{\min}}{s}$, $\push{\hat Q_m^{\max}}{s})$ with $|\gD_P^{(2)}|= 30$ and $|\gD_Q^{(1)}|= 100$. This is likely because we do not have enough samples from $P$ to represent $\push{Q}{s}$ well enough via a weighted empirical distribution of the form $\push{\hat P_n^\vw}{s}$. More broadly, this underscores one of the limitations of our approach. The possibility of our method hurting coverage in some cases is not surprising, since we tackle the most general distribution shift case, with no prior information about the shift mechanics. In that setting, there is always a risk that optimizing our methods could negatively impact coverage. However, the results show a positive trend, indicating that these risks tend to diminish as the number of available samples increases.

\begin{table}
    \centering
    \caption{
        Total coverage gap on iWildCam \textcolor{black}{with ResNet-50} for varying number of calibration samples from $P$ ($|\gD_P^{(2)}|$) and $Q$ ($|\gD_Q^{(1)}|$) for our method with $(\push{Q^{\min}}{s}, \push{Q^{\max}}{s})$. We highlight in blue, the cases where total coverage improved by more than one standard deviation. We report mean and standard deviation across 10 random seeds. Lower is better.
    }
    \label{tab:iwildcam_total_min_max}
    \resizebox{\linewidth}{!}{
        \begin{tabular}{l| c | ccccc}
        \toprule
        \multicolumn{2}{c}{} &\multicolumn{5}{c}{Number of unlabeled samples from $Q$} \\\cmidrule{3-7}
        \# samples from $P$ & Uncorrected & $|\gD_Q^{(1)}|= 30$ & $|\gD_Q^{(1)}|= 100$ &  $|\gD_Q^{(1)}|= 300$ &  $|\gD_Q^{(1)}|= 1000$ & $|\gD_Q^{(1)}|= 3000 $ \\\midrule
        $|\gD_P^{(2)}| = 30$ & $0.119_{\pm0.046}$ & $0.117_{\pm0.048}$ & $0.122_{\pm0.044}$ & $0.115_{\pm0.035}$ & $0.118_{\pm0.039}$ & $0.119_{\pm0.045}$ \\
        $|\gD_P^{(2)}| = 100$ & $0.143_{\pm0.022}$ & $0.107_{\pm0.019}$ & $\color{Blue} 0.102_{\pm0.017}$ & $\color{Blue} 0.099_{\pm0.015}$ & $\color{Blue} 0.096_{\pm0.016}$ & $\color{Blue} 0.101_{\pm0.018}$ \\
        $|\gD_P^{(2)}| = 300$ & $0.145_{\pm0.014}$ & $\color{Blue} 0.091_{\pm0.017}$ & $\color{Blue} 0.085_{\pm0.009}$ & $\color{Blue} 0.084_{\pm0.010}$ & $\color{Blue} 0.083_{\pm0.010}$ & $\color{Blue} 0.085_{\pm0.010}$ \\
        $|\gD_P^{(2)}| = 1000$ & $0.139_{\pm0.010}$ & $\color{Blue} 0.085_{\pm0.015}$ & $\color{Blue} 0.078_{\pm0.006}$ & $\color{Blue} 0.072_{\pm0.005}$ & $\color{Blue} 0.072_{\pm0.004}$ & $\color{Blue} 0.073_{\pm0.006}$ \\
        $|\gD_P^{(2)}| = 3000$ & $0.139_{\pm0.005}$ & $\color{Blue} 0.082_{\pm0.015}$ & $\color{Blue} 0.075_{\pm0.008}$ & $\color{Blue} 0.070_{\pm0.007}$ & $\color{Blue} 0.069_{\pm0.005}$ & $\color{Blue} 0.070_{\pm0.006}$ \\
        \bottomrule
        \end{tabular}
    }
\end{table}

\begin{table}
    \centering
    \caption{
        Total coverage gap on iWildCam \textcolor{black}{with ResNet-50} for varying number of calibration samples from $P$ ($|\gD_P^{(2)}|$) and $Q$ ($|\gD_Q^{(1)}|$) for our method with $(\push{Q^{f}}{s}, \push{Q^{U}}{s})$. We report mean and standard deviation across 10 random seeds. Lower is better.
    }
    \label{tab:iwildcam_total_random_model}
    \resizebox{\linewidth}{!}{
        \begin{tabular}{l | c | ccccc}
        \toprule
        \multicolumn{2}{c}{} &\multicolumn{5}{c}{Number of unlabeled samples from $Q$} \\\cmidrule{3-7}
        \# samples from $P$ & Uncorrected & $|\gD_Q^{(1)}|= 30$ & $|\gD_Q^{(1)}|= 100$ &  $|\gD_Q^{(1)}|= 300$ &  $|\gD_Q^{(1)}|= 1000$ & $|\gD_Q^{(1)}|= 3000 $ \\\midrule
        $|\gD_P^{(2)}| = 30$ & $0.119_{\pm0.046}$ & $0.112_{\pm0.044}$ & $0.117_{\pm0.042}$ & $0.109_{\pm0.033}$ & $0.112_{\pm0.037}$ & $0.113_{\pm0.044}$ \\
        $|\gD_P^{(2)}| = 100$ & $0.143_{\pm0.022}$ & $\color{Blue} 0.097_{\pm0.021}$ & $\color{Blue} 0.094_{\pm0.018}$ & $\color{Blue} 0.090_{\pm0.014}$ & $\color{Blue} 0.086_{\pm0.014}$ & $\color{Blue} 0.091_{\pm0.018}$ \\
        $|\gD_P^{(2)}| = 300$ & $0.145_{\pm0.014}$ & $\color{Blue} 0.083_{\pm0.018}$ & $\color{Blue} 0.076_{\pm0.010}$ & $\color{Blue} 0.073_{\pm0.009}$ & $\color{Blue} 0.073_{\pm0.010}$ & $\color{Blue} 0.073_{\pm0.011}$ \\
        $|\gD_P^{(2)}| = 1000$ & $0.139_{\pm0.010}$ & $\color{Blue} 0.072_{\pm0.018}$ & $\color{Blue} 0.067_{\pm0.008}$ & $\color{Blue} 0.061_{\pm0.006}$ & $\color{Blue} 0.059_{\pm0.004}$ & $\color{Blue} 0.060_{\pm0.005}$ \\
        $|\gD_P^{(2)}| = 3000$ & $0.139_{\pm0.005}$ & $\color{Blue} 0.072_{\pm0.017}$ & $\color{Blue} 0.064_{\pm0.009}$ & $\color{Blue} 0.058_{\pm0.007}$ & $\color{Blue} 0.057_{\pm0.004}$ & $\color{Blue} 0.058_{\pm0.005}$ \\
        \bottomrule
        \end{tabular}
    }
\end{table}

\subsubsection{Choice of Bound}
Before applying our methods, two key decisions must be made. The first is whether to use the weighted CDF formulation in (\ref{eq:unlabeled-density-weighted}) or the 1-Wasserstein distance formulation in (\ref{eq:unlabeled-w1-bound}). The second involves selecting the appropriate pair of auxiliary distributions. To guide these choices, we evaluate the total coverage gap achieved after optimization under each configuration. The results for the image classification datasets are summarized in Table~\ref{tab:total_cov_gap}.

\begin{table*}[t] 
    \centering
    \small
    \caption{
        Total coverage gap on iWildcam and ImageNet-C with severity levels 1, 3 and 5 comparing optimization via the weighted-CDF (\ref{eq:unlabeled-density-weighted}) and the 1-Wasserstein (\ref{eq:unlabeled-w1-bound}) bounds with free-form parametrization. The classifier is given by a \textcolor{black}{ResNet-50} and we consider both pairs $(\push{Q^{\min}}{s}, \push{Q^{\max}}{s})$ and $(\push{Q^{f}}{s}, \push{Q^{U}}{s})$. For ImageNet-C we report the average across all 15 corruptions. Lower is better.
    }
    \label{tab:total_cov_gap}
    \resizebox{\linewidth}{!}{
        \begin{tabular}{ll c c c c}
            \toprule
            & & iWildCam & ImageNet-C Sev.~1 & ImageNet-C Sev.~3 & ImageNet-C Sev.~5 \\
            \midrule
            & Uncorrected & $0.132_{\pm0.016}$ & $0.141_{\pm0.048}$ & $0.267_{\pm0.081}$ & $0.388_{\pm0.076}$ \\
            \midrule
            \multirow{2}{*}{\makecell{weighted \\ CDF}} & $({\min}, {\max})$  & $0.084_{\pm0.010}$ & $0.098_{\pm0.033}$ & $0.171_{\pm0.060}$ & $0.281_{\pm0.076}$ \\
            & $(f, U)$   & $\bm {0.073}_{\pm0.009}$ & $0.059_{\pm0.023}$ & $\bm {0.102}_{\pm0.036}$ & $\bm  {0.173}_{\pm0.071}$ \\
            \midrule
            \multirow{2}{*}{{\makecell{1-Wasserstein}}} & $({\min}, {\max})$   & $0.125_{\pm0.005}$ & $0.069_{\pm0.023}$ & $0.196_{\pm0.062}$ & $0.337_{\pm0.093}$ \\
            & $(f, U)$   & $0.125_{\pm0.005}$ & $\bm {0.044}_{\pm0.020}$ & $0.139_{\pm0.066}$ & $0.310_{\pm0.108}$ \\
            \bottomrule
        \end{tabular}
    }
\end{table*}

\subsubsection*{Weighted CDF or 1-Wasserstein} 
It is clear from the theoretical results that the weighted CDF version of the bound is provably tighter than the 1-Wasserstein distance. Therefore, one should expect (\ref{eq:unlabeled-density-weighted}) to produce better results, and this seems to be the case for most image classification datasets, with the exception of ImageNet-C with severity level 1. In contrast, for regression tasks, we observed the opposite trend: the 1-Wasserstein distance outperformed the weighted CDF bound. As shown in Figure~\ref{fig:regression_comparison}, the weighted CDF bound led to overcoverage when paired with $(\push{\hat Q_m^{f}}{s}, \push{\hat Q_m^{U}}{s})$ and produced less consistent results when used with $(\push{\hat Q_m^{\min}}{s}, \push{\hat Q_m^{\max}}{s})$. We conjecture that this discrepancy arises from the increased complexity of optimizing the weighted CDF bound, which may account for the divergent empirical outcomes. Indeed, the task of finding the density of scores needed for the weighted CDF distance is more intricate and precision-sensitive than simply identifying its maximum. As a result, in certain cases, the weighted CDF bound may underperform relative to its 1-Wasserstein counterpart.

\subsubsection*{Choice of auxiliary distributions} 
The tightness and practical utility of our bounds are influenced by the choice of auxiliary distributions $(\push{Q^{\downarrow}}{s}, \push{Q^{\uparrow}}{s})$: the closer these are to the true distribution $\push{Q}{s}$, the tighter the resulting bounds. However, our bounds have proven effective for learning the weights of $\push{P^{\vw}}{s}$ even when using uninformed and widely applicable auxiliary pairs such as $(\push{Q^{\min}}{s}, \push{Q^{\max}}{s})$ and $(\push{Q^{f}}{s}, \push{Q^{U}}{s})$, which in general do not bound $\push{Q}{s}$ tightly. 

The performance of the two auxiliary distribution pairs was comparable in most cases, with $(\push{Q^{f}}{s}, \push{Q^{U}}{s})$ generally achieving better coverage, albeit with a tendency to overcover. As mentioned in the main paper, $(\push{Q^{f}}{s}, \push{Q^{U}}{s})$ is motivated by the observation that $\push{Q^{U}}{s}$ tends to produce nonconformity scores higher than those from the true distribution $\push{Q}{s}$—it corresponds to an uninformative model in which the correct label is independent of the model output—while $\push{Q^{f}}{s}$ tends to yield lower nonconformity scores, reflecting a perfect model where the true class is sampled according to the model-assigned probabilities. 

Beyond this theoretical motivation, there is also a practical reason for the better performance of $(\push{Q^{f}}{s}, \push{Q^{U}}{s})$, which relates to the specific form of the nonconformity scores used—namely, one minus the model-assigned probability for each class. Under this scoring scheme, the main difference between the two pairs arises from the contrast between $\push{Q^{f}}{s}$ and $\push{Q^{\min}}{s}$. Since most classes tend to receive relatively high nonconformity scores, $\push{Q^{U}}{s}$ and $\push{Q^{\max}}{s}$ are typically quite similar. On the other hand, unless the model is highly confident, $\push{Q^{f}}{s}$ and $\push{Q^{\min}}{s}$ can differ substantially. This explains why $\push{Q^{f}}{s}$ may be more effective in practice. In particular, if we expect the model to have low accuracy under $Q$, $\push{Q^{\min}}{s}$ becomes overly conservative and diverges significantly from the true $\push{Q}{s}$, a pattern clearly illustrated in Figure~\ref{fig:emp_cdfs}. Therefore, we can improve performance by biasing $\push{Q^\downarrow}{s}$ towards lower values, for instance, by sampling from the model, potentially with a high temperature. 

\subsubsection*{Computing the Bounds}
We compute the 1-Wasserstein version of the bound by estimating $\max_{s_c \in \R}p_{\push{P}{s}}(s_c)$ with a  Gaussian KDE and computing $W_1(\push{\hat P_n}{s}, \push{\hat Q_m}{s})$ analytically. However, for the weighted CDF version of the bound we have a couple of options. The first is to treat the bound as a expectation under $\push{P}{s}$, which can be approximated via the $n$ samples ${(X_i, Y_i)}_{i=1}^n$ we have from $P$
\begin{align} \label{eq:exp}
     \Delta_{P,Q} \leq \int p_{\push{P}{s}}(s_c) \left|F_{\push{\hat P_n}{s}}(s_c) - F_{\push{\hat Q_m}{s}}(s_c) \right| &= \E_{s_c \sim \push{P}{s}} \left[ \left|F_{\push{\hat P_n}{s}}(s_c) - F_{\push{\hat Q_m}{s}}(s_c) \right| \right] \\
    & \nonumber \approx \sum_{i=1}^n \frac{1}{n} \left|F_{\push{\hat P_n}{s}}(s(X_i, Y_i)) - F_{\push{\hat Q_m}{s}}(s(X_i, Y_i)) \right|.
\end{align}
This gives tight estimates and is computationally cheap but did not prove useful as an optimization objective for learning a weighting scheme. Alternatively, we could compute the upper bound by numerical integration, which works well for unidimensional data. Since our nonconformity scores are bounded in $[0,1]$, we use a grid of equally spaced $K$ points $s_k$ to get the following estimate
\begin{equation} \label{eq:grid}
    \Delta_{P,Q} \leq \int p_{\push{P}{s}}(s_c) \left|F_{\push{\hat P_n}{s}}(s_c) - F_{\push{\hat Q_m}{s}}(s_c) \right| 
    \\ \approx \sum_{k=1}^K \frac{p_{\push{P}{s}}(s_k)}{K} \left|F_{\push{\hat P_n}{s}}(s_k) - F_{\push{\hat Q_m}{s}}(s_k) \right|.
\end{equation}
In this case, we have to estimate the probability $\push{P}{s}(s_k)$ for each of the points in the grid. We do that with a Gaussian KDE, using reflection \citep{jones1993simple} to deal with the boundaries in $[0,1]$. This proved a better optimization objective, facilitating the learning of the weighted distribution $\push{\hat P_n^\vw}{s}$. Thus, when computing the weighted-CDF version of the bound we use the numerical integration method as in (\ref{eq:grid}) for both training and evaluation; see Table~\ref{tab:coverage_gaps} for an analysis of the tightness of the bound (\ref{eq:grid}) in ImageNet-C with severity 5. Note that in all cases, we can replace $\push{\hat P_n}{s}$ with $\push{\hat P_n^\vw}{s}$.

In terms of complexity, the bounds require computing either weighted CDF or 1-Wasserstein distances, which are tractable for unidimensional variables like nonconformity scores. In both cases, we need to compute the difference between the empirical CDFs, which has overall time complexity $\gO((m+n)\log(m+n))$. The complexity here is dominated by the sorting operation needed to compute the difference between the empirical CDFs, but fortunately this operation is applied only to the score values and not to the weights, and thus we need to compute it only once during optimization. Finally, with the exception of the bound in (\ref{eq:exp}), we also need to estimate the density $p_{\push{P}{s}}$. We do so via a Gaussian KDE defined on the $n$ samples from $\push{P}{s}$, which has cost $\gO(k \cdot n),$ where $k$ is the number of points the KDE is evaluated on, e.g.~the grid size in (\ref{eq:grid}). We set the KDE bandwidth using Scott’s rule \cite{scott2015multivariate} but scale it by a factor of 0.1 in classification tasks to improve resolution in the tails.

\begin{figure}
    \centering
    \includegraphics[width=0.9\linewidth]{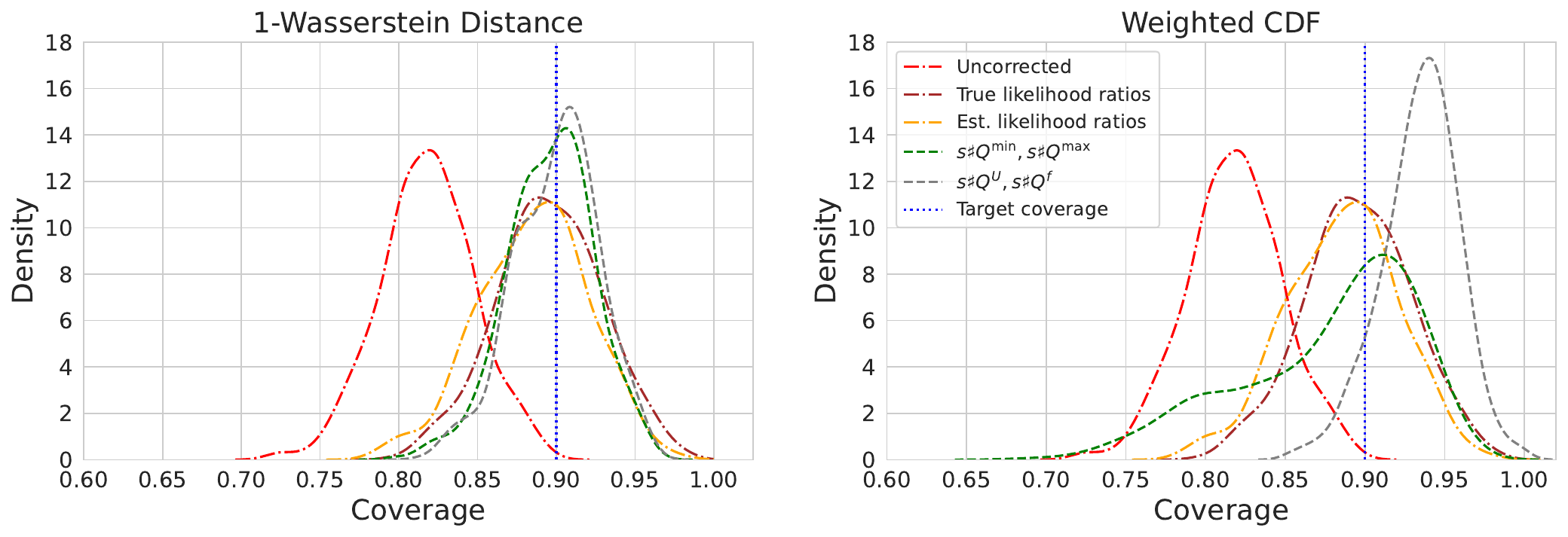}
    \caption{Distribution of coverage for the synthetic regression task across 500 simulations and target coverage rate of $90\%$ (blue vertical line). Results with the 1-Wasserstein distance formulation on the left and with the weighted CDF formulation on the right. The baselines remain the same in both plots. For ease of visualization, we plot the density estimated with a KDE fit to the 500 observations.}
    \label{fig:regression_comparison}
\end{figure} 

\subsubsection{Tightness of the Bounds}
Our upper bounds to the total coverage gap include terms that depend only on the auxiliary distributions, such as $\int_\R F_{\push{Q^{\downarrow}}{s}}(s_c) - F_{\push{Q^{\uparrow}}{s}}(s_c) ds_c$ in (\ref{eq:unlabeled-density-weighted}) and $\E_{\push{ Q^{\uparrow}}{s}}[S] - \E_{\push{Q^{\downarrow}}{s}}[S]$ in (\ref{eq:unlabeled-w1-bound}). Therefore, we cannot hope to have tight bounds, unless $\push{Q^{\uparrow}}{s}$ and $\push{Q^{\downarrow}}{s}$ are close to each other and sandwich $\push{Q}{s}$, i.e., satisfy the stochastic dominance relation $\push{Q^{\uparrow}}{s} \dominates \push{Q}{s} \dominates \push{Q^{\downarrow}}{s}$. To illustrate this point, we evaluate our upper bounds on ImageNet-C with severity level 5 in Table~\ref{tab:coverage_gaps}, where we can see the upper bounds constructed with the auxiliary distributions are relatively loose, as expected.

As discussed throughout the paper and demonstrated in the experiments, these bounds, although not tight, are still effective in mitigating the coverage gap by serving as a practical optimization objective for learning $\push{P^\vw}{s}$. Nevertheless, we conjecture that there is still room to improve the tightness of these bounds. We leave further improvements for future work, but one promising direction is to learn a transformation of the scores under $Q$ jointly with the weights of $\push{P^\vw}{s}$. Although this approach would still require auxiliary distributions to evaluate our bounds, it could yield tighter estimates, for instance, by reducing the gap between $\push{Q^{\min}}{s}$ and $\push{Q^{\max}}{s}$.

\begin{table}[ht]
    \centering
    \vspace{1cm}
    \caption{Upper bounds to the total coverage gap for ImageNet-C with severity level 5. For each pair of auxiliary distributions we consider, we have the total coverage gap $\Delta_{P^\vw, Q}$ after optimization, and the weighted-CDF upper bound computed with unlabeled samples and no DKW correction. We report mean and standard deviation across 10 random seeds. Lower is better. }
    \label{tab:coverage_gaps}
    \begin{tabular}{c | c c | c c c}
    Corruption & $\Delta_{P^\vw, Q}$ & $(\push{Q^{\min}}{s}, \push{Q^{\max}}{s})$ & $\Delta_{P^\vw, Q}$ & $(\push{Q^{f}}{s}, \push{Q^{U}}{s})$ \\
    \midrule
    Gauss & $0.377_{\pm0.026}$ & $0.445_{\pm0.014}$ & $0.264_{\pm0.033}$ & $0.291_{\pm0.019}$ \\
    Shot & $0.378_{\pm0.024}$ & $0.441_{\pm0.017}$ & $0.291_{\pm0.028}$ & $0.319_{\pm0.015}$ \\
    Impul & $0.378_{\pm0.026}$ & $0.450_{\pm0.013}$ & $0.274_{\pm0.027}$ & $0.306_{\pm0.012}$ \\
    Defoc & $0.258_{\pm0.030}$ & $0.328_{\pm0.022}$ & $0.094_{\pm0.013}$ & $0.113_{\pm0.018}$ \\
    Glass & $0.316_{\pm0.030}$ & $0.389_{\pm0.019}$ & $0.167_{\pm0.032}$ & $0.208_{\pm0.021}$ \\
    Motion & $0.291_{\pm0.028}$ & $0.354_{\pm0.019}$ & $0.168_{\pm0.023}$ & $0.209_{\pm0.013}$ \\
    Zoom & $0.252_{\pm0.027}$ & $0.306_{\pm0.024}$ & $0.153_{\pm0.027}$ & $0.190_{\pm0.030}$ \\
    Snow & $0.289_{\pm0.026}$ & $0.333_{\pm0.027}$ & $0.201_{\pm0.026}$ & $0.226_{\pm0.025}$ \\
    Frost & $0.254_{\pm0.026}$ & $0.298_{\pm0.025}$ & $0.162_{\pm0.023}$ & $0.189_{\pm0.021}$ \\
    Fog & $0.246_{\pm0.028}$ & $0.296_{\pm0.025}$ & $0.166_{\pm0.029}$ & $0.202_{\pm0.028}$ \\
    Bright & $0.095_{\pm0.009}$ & $0.098_{\pm0.011}$ & $0.056_{\pm0.009}$ & $0.060_{\pm0.009}$ \\
    Contr & $0.327_{\pm0.034}$ & $0.413_{\pm0.015}$ & $0.114_{\pm0.034}$ & $0.134_{\pm0.016}$ \\
    Elastic & $0.298_{\pm0.025}$ & $0.354_{\pm0.026}$ & $0.232_{\pm0.023}$ & $0.275_{\pm0.024}$ \\
    Pixel & $0.257_{\pm0.027}$ & $0.314_{\pm0.022}$ & $0.147_{\pm0.021}$ & $0.184_{\pm0.009}$ \\
    Jpeg & $0.196_{\pm0.023}$ & $0.230_{\pm0.030}$ & $0.106_{\pm0.013}$ & $0.119_{\pm0.017}$ \\
    \end{tabular}
\end{table}

\centering
\begin{sidewaystable}
    \begin{minipage}{1.1\paperheight}
    \caption{
    \small
        \textbf{Coverage on ImageNet-C with ResNet-50 with and without label shift (see Section~\ref{sec:imagenetc})}. Results for uncorrected distributions, calibrating and testing on samples \\ from $Q$ (Oracle), estimated likelihood ratios (LR), the methods of \citeauthor{kasa2024adapting} and \citeauthor{gibbs2023conformal}, and weights learned via our weighted CDF objective (\ref{eq:unlabeled-density-weighted}), including $({\min}, {\max})$ \\ and $(f, U)$ variants with free-form (FF) and weight function (WF) parametrizations. Target coverage of $90\%$. We report mean and standard  deviation across 10 random seeds.
    }
    \label{tab:imagenetc_cov_big}
    \resizebox{0.8125\paperheight}{!}{
        \begin{tabular}{llcccccccccccccccgg}
            \toprule
            & &\multicolumn{3}{c}{Noise} &\multicolumn{4}{c}{Blur} &\multicolumn{4}{c}{Weather} &\multicolumn{4}{c}{Digital} &\cellcolor[HTML]{FFFFFF} &\cellcolor[HTML]{FFFFFF} \\\cmidrule{3-17}
            & &Gauss &Shot &Impul &Defoc &Glass &Motion &Zoom &Snow &Frost &Fog &Bright &Contr &Elastic &Pixel &Jpeg &Avg. Cov. &Avg. Size \\\midrule
            \multirow{9}{*}{\rotatebox[origin=c]{90}{\makecell{Severity 1 \\ no label shift}}} & Uncorrected &$78.0_{\pm3.1}$&$76.8_{\pm3.2}$&$68.2_{\pm3.8}$&$78.4_{\pm3.5}$&$73.2_{\pm3.6}$&$82.2_{\pm2.8}$&$71.8_{\pm3.7}$&$73.2_{\pm3.3}$&$78.8_{\pm3.0}$&$80.3_{\pm3.2}$&$88.5_{\pm1.8}$&$82.6_{\pm2.9}$&$83.5_{\pm2.6}$&$81.7_{\pm2.7}$&$83.4_{\pm2.6}$&$78.7_{\pm6.0}$&$2.7_{\pm0.7}$ \\
            & Oracle &$89.7_{\pm1.7}$&$89.2_{\pm1.9}$&$89.7_{\pm1.6}$&$89.7_{\pm2.1}$&$89.8_{\pm1.5}$&$90.5_{\pm1.6}$&$90.1_{\pm1.4}$&$89.2_{\pm0.8}$&$89.7_{\pm2.1}$&$89.4_{\pm1.6}$&$90.1_{\pm1.2}$&$90.6_{\pm1.5}$&$90.4_{\pm1.3}$&$89.5_{\pm1.3}$&$89.7_{\pm1.7}$&$89.8_{\pm1.6}$ &$10.7_{\pm7.2}$ \\
            & LR &$82.2_{\pm3.2}$&$82.6_{\pm4.2}$&$77.3_{\pm5.7}$&$85.2_{\pm3.9}$&$81.2_{\pm4.3}$&$86.0_{\pm3.6}$&$81.3_{\pm4.3}$&$79.8_{\pm4.0}$&$84.0_{\pm4.2}$&$86.2_{\pm4.1}$&$89.5_{\pm2.2}$&$86.8_{\pm3.4}$&$87.3_{\pm3.5}$&$85.4_{\pm3.0}$&$86.5_{\pm3.2}$&$84.1_{\pm4.9}$&$5.2_{\pm3.8}$ \\
            & \citeauthor{kasa2024adapting} &$96.2_{\pm0.8}$&$96.1_{\pm0.8}$&$95.2_{\pm1.2}$&$97.5_{\pm0.7}$&$96.4_{\pm0.9}$&$97.3_{\pm0.7}$&$95.6_{\pm1.0}$&$95.3_{\pm1.0}$&$96.6_{\pm0.8}$&$97.1_{\pm0.7}$&$97.6_{\pm0.5}$&$97.4_{\pm0.6}$&$97.5_{\pm0.6}$&$96.6_{\pm0.7}$&$97.0_{\pm0.7}$&$96.6_{\pm1.1}$&$42.4_{\pm19.2}$ \\
            & \citeauthor{gibbs2023conformal} &$88.1_{\pm2.0}$&$87.8_{\pm1.8}$&$86.1_{\pm2.0}$&$89.8_{\pm1.8}$&$87.4_{\pm2.1}$&$90.6_{\pm2.3}$&$87.0_{\pm2.8}$&$85.4_{\pm3.0}$&$89.1_{\pm2.3}$&$89.1_{\pm1.9}$&$92.4_{\pm2.1}$&$89.7_{\pm2.6}$&$91.3_{\pm2.4}$&$89.0_{\pm1.7}$&$89.9_{\pm2.2}$&$88.9_{\pm2.8}$&$548.5_{\pm36.7}$ \\
            \cmidrule{2-19}
            & FF $({\min}, {\max})$ &$91.1_{\pm3.8}$&$89.5_{\pm5.1}$&$88.0_{\pm3.6}$&$92.9_{\pm3.0}$&$89.0_{\pm4.6}$&$92.9_{\pm2.9}$&$89.5_{\pm4.0}$&$88.9_{\pm3.4}$&$91.2_{\pm3.9}$&$92.8_{\pm3.7}$&$93.9_{\pm2.9}$&$93.5_{\pm2.3}$&$92.8_{\pm4.0}$&$91.7_{\pm3.4}$&$93.0_{\pm2.6}$&$91.4_{\pm3.9}$&$15.1_{\pm9.7}$ \\
            & WF $({\min}, {\max})$ &$93.7_{\pm4.4}$&$92.5_{\pm9.2}$&$92.6_{\pm3.4}$&$96.3_{\pm2.0}$&$94.5_{\pm2.4}$&$96.8_{\pm1.5}$&$93.6_{\pm2.9}$&$93.7_{\pm2.6}$&$95.7_{\pm1.9}$&$88.5_{\pm19.6}$&$95.4_{\pm5.6}$&$96.8_{\pm1.4}$&$96.9_{\pm1.5}$&$95.9_{\pm2.0}$&$96.5_{\pm1.5}$&$94.6_{\pm6.3}$&$34.7_{\pm20.6}$ \\
            & FF $(f, U)$ &$93.1_{\pm2.5}$&$91.1_{\pm3.0}$&$88.6_{\pm3.8}$&$94.2_{\pm2.2}$&$91.4_{\pm1.7}$&$94.5_{\pm2.6}$&$91.7_{\pm3.1}$&$90.6_{\pm3.2}$&$93.4_{\pm2.7}$&$94.3_{\pm2.1}$&$95.6_{\pm1.4}$&$94.4_{\pm1.6}$&$95.0_{\pm1.9}$&$93.6_{\pm2.0}$&$93.9_{\pm2.2}$&$93.0_{\pm3.0}$&$19.2_{\pm11.4}$ \\
            & WF $(f, U)$ &$95.3_{\pm2.0}$&$95.1_{\pm2.0}$&$93.2_{\pm2.6}$&$96.4_{\pm1.9}$&$94.7_{\pm2.5}$&$96.7_{\pm1.6}$&$94.3_{\pm2.2}$&$94.2_{\pm2.1}$&$95.8_{\pm1.9}$&$96.0_{\pm2.0}$&$97.5_{\pm0.9}$&$96.8_{\pm1.5}$&$96.9_{\pm1.5}$&$96.0_{\pm1.6}$&$96.6_{\pm1.6}$&$95.7_{\pm2.2}$&$36.6_{\pm20.1}$ \\
            \midrule
            \multirow{9}{*}{\rotatebox[origin=c]{90}{\makecell{Severity 1 \\ with label shift}}} & Uncorrected &$78.8_{\pm5.3}$&$77.4_{\pm6.2}$&$69.2_{\pm8.0}$&$78.9_{\pm7.3}$&$76.7_{\pm6.0}$&$83.8_{\pm5.6}$&$72.1_{\pm7.8}$&$72.5_{\pm9.5}$&$79.2_{\pm7.3}$&$78.7_{\pm4.5}$&$89.5_{\pm2.4}$&$82.9_{\pm8.0}$&$84.6_{\pm6.0}$&$80.0_{\pm7.2}$&$83.3_{\pm5.5}$&$79.2_{\pm8.2}$&$2.7_{\pm0.8}$ \\
            & Oracle &$89.0_{\pm3.7}$&$87.1_{\pm5.5}$&$91.7_{\pm4.7}$&$93.7_{\pm1.1}$&$90.5_{\pm4.2}$&$90.8_{\pm4.7}$&$90.4_{\pm3.3}$&$90.9_{\pm6.7}$&$89.0_{\pm3.3}$&$87.2_{\pm6.3}$&$89.9_{\pm2.7}$&$93.2_{\pm3.2}$&$90.1_{\pm5.5}$&$90.1_{\pm3.5}$&$90.4_{\pm3.8}$&$90.3_{\pm4.5}$&$15.0_{\pm19.9}$ \\
            & LR &$85.2_{\pm5.5}$&$84.8_{\pm4.4}$&$78.2_{\pm6.9}$&$82.9_{\pm8.7}$&$80.1_{\pm6.4}$&$88.0_{\pm6.6}$&$82.1_{\pm5.1}$&$79.2_{\pm9.8}$&$85.3_{\pm7.7}$&$85.8_{\pm5.8}$&$89.0_{\pm7.2}$&$89.3_{\pm3.7}$&$85.0_{\pm6.4}$&$84.6_{\pm7.5}$&$87.4_{\pm5.0}$&$84.5_{\pm7.1}$&$6.1_{\pm5.5}$ \\
            & \citeauthor{kasa2024adapting} &$78.8_{\pm5.3}$&$77.4_{\pm6.2}$&$69.3_{\pm7.9}$&$78.9_{\pm7.3}$&$77.0_{\pm5.9}$&$83.8_{\pm5.6}$&$72.1_{\pm7.8}$&$72.5_{\pm9.5}$&$79.2_{\pm7.3}$&$78.7_{\pm4.5}$&$89.5_{\pm2.4}$&$82.9_{\pm8.0}$&$84.6_{\pm6.0}$&$80.0_{\pm7.2}$&$83.3_{\pm5.5}$&$79.2_{\pm8.1}$&$2.7_{\pm0.8}$ \\
            & \citeauthor{gibbs2023conformal} &$87.8_{\pm2.6}$&$87.8_{\pm4.3}$&$84.6_{\pm2.2}$&$89.0_{\pm2.1}$&$87.9_{\pm3.2}$&$88.9_{\pm2.4}$&$85.7_{\pm4.2}$&$86.7_{\pm5.7}$&$89.4_{\pm4.3}$&$89.3_{\pm4.7}$&$91.9_{\pm2.9}$&$90.4_{\pm2.2}$&$89.2_{\pm3.4}$&$89.3_{\pm1.9}$&$90.9_{\pm4.3}$&$88.6_{\pm3.8}$&$552.3_{\pm46.0}$ \\
            \cmidrule{2-19}
            & FF $({\min}, {\max})$ &$90.1_{\pm3.6}$&$89.5_{\pm4.1}$&$85.7_{\pm9.5}$&$93.7_{\pm3.8}$&$87.9_{\pm7.2}$&$93.9_{\pm4.1}$&$87.8_{\pm5.8}$&$88.1_{\pm6.2}$&$91.0_{\pm5.3}$&$91.1_{\pm4.7}$&$94.1_{\pm3.3}$&$94.8_{\pm3.8}$&$94.5_{\pm3.3}$&$90.5_{\pm5.1}$&$91.8_{\pm5.8}$&$91.0_{\pm5.7}$&$14.3_{\pm10.2}$ \\
            & WF $({\min}, {\max})$ &$93.9_{\pm2.9}$&$94.8_{\pm2.6}$&$93.1_{\pm2.4}$&$95.1_{\pm3.2}$&$94.7_{\pm3.5}$&$96.7_{\pm1.6}$&$92.0_{\pm5.2}$&$92.8_{\pm4.6}$&$94.9_{\pm3.2}$&$92.4_{\pm11.5}$&$85.1_{\pm24.4}$&$90.2_{\pm21.0}$&$96.7_{\pm2.7}$&$96.0_{\pm2.1}$&$96.6_{\pm1.8}$&$93.7_{\pm9.3}$&$33.1_{\pm21.5}$ \\
            & FF $(f, U)$ &$92.3_{\pm3.2}$&$92.5_{\pm3.5}$&$89.9_{\pm4.8}$&$95.6_{\pm2.6}$&$92.9_{\pm3.6}$&$94.4_{\pm3.6}$&$91.1_{\pm5.2}$&$93.2_{\pm2.8}$&$92.2_{\pm2.5}$&$94.5_{\pm2.4}$&$95.3_{\pm3.1}$&$95.1_{\pm3.0}$&$96.1_{\pm2.0}$&$92.8_{\pm4.4}$&$94.4_{\pm2.4}$&$93.5_{\pm3.6}$&$20.7_{\pm14.6}$ \\
            & WF $(f, U)$ &$93.9_{\pm3.1}$&$95.8_{\pm2.5}$&$93.5_{\pm3.5}$&$96.9_{\pm3.9}$&$95.4_{\pm2.6}$&$97.0_{\pm2.4}$&$94.2_{\pm3.9}$&$93.6_{\pm2.9}$&$95.7_{\pm3.7}$&$95.8_{\pm3.0}$&$89.4_{\pm25.3}$&$96.3_{\pm4.0}$&$97.4_{\pm1.8}$&$95.5_{\pm2.7}$&$96.4_{\pm2.4}$&$95.1_{\pm7.1}$&$36.3_{\pm21.8}$ \\
            \midrule
            \multirow{9}{*}{\rotatebox[origin=c]{90}{\makecell{Severity 3 \\ no label shift}}} & Uncorrected &$50.3_{\pm4.6}$&$46.7_{\pm4.4}$&$46.9_{\pm4.7}$&$57.3_{\pm5.4}$&$30.3_{\pm4.2}$&$56.6_{\pm4.7}$&$54.2_{\pm4.6}$&$53.0_{\pm4.1}$&$48.9_{\pm4.3}$&$67.0_{\pm4.2}$&$85.7_{\pm2.3}$&$66.3_{\pm4.3}$&$73.4_{\pm3.2}$&$66.1_{\pm3.9}$&$78.3_{\pm3.1}$&$58.7_{\pm14.3}$&$3.3_{\pm1.2}$ \\
            & Oracle &$89.5_{\pm1.4}$&$89.5_{\pm2.4}$&$90.0_{\pm2.0}$&$90.0_{\pm1.8}$&$89.5_{\pm2.3}$&$90.7_{\pm1.7}$&$90.4_{\pm1.6}$&$89.3_{\pm0.9}$&$89.5_{\pm2.1}$&$89.6_{\pm1.3}$&$90.0_{\pm1.6}$&$89.8_{\pm1.5}$&$90.4_{\pm1.7}$&$89.7_{\pm1.3}$&$89.9_{\pm1.4}$&$89.9_{\pm1.7}$&$80.1_{\pm71.4}$ \\
            & LR &$64.2_{\pm7.4}$&$62.8_{\pm8.3}$&$63.7_{\pm9.1}$&$75.3_{\pm8.5}$&$48.2_{\pm10.3}$&$71.5_{\pm8.6}$&$69.8_{\pm7.6}$&$65.7_{\pm7.0}$&$63.8_{\pm8.1}$&$78.2_{\pm5.9}$&$87.4_{\pm2.7}$&$78.3_{\pm5.6}$&$80.8_{\pm4.7}$&$76.2_{\pm5.3}$&$82.9_{\pm4.2}$&$71.3_{\pm11.9}$&$12.3_{\pm13.1}$ \\
            & \citeauthor{kasa2024adapting} &$92.4_{\pm1.9}$&$91.6_{\pm2.1}$&$92.0_{\pm2.0}$&$97.2_{\pm0.9}$&$87.6_{\pm3.2}$&$94.4_{\pm1.5}$&$92.9_{\pm1.7}$&$90.2_{\pm2.1}$&$90.8_{\pm2.2}$&$95.4_{\pm1.2}$&$97.4_{\pm0.5}$&$96.3_{\pm1.1}$&$95.7_{\pm1.0}$&$94.5_{\pm1.3}$&$96.5_{\pm0.9}$&$93.7_{\pm3.2}$&$119.3_{\pm65.4}$ \\
            & \citeauthor{gibbs2023conformal} &$83.1_{\pm1.8}$&$81.7_{\pm2.3}$&$83.2_{\pm2.7}$&$89.3_{\pm1.8}$&$83.5_{\pm2.0}$&$87.3_{\pm2.5}$&$83.6_{\pm2.5}$&$82.2_{\pm3.0}$&$83.0_{\pm2.1}$&$85.7_{\pm2.1}$&$91.0_{\pm1.6}$&$86.3_{\pm2.7}$&$89.1_{\pm2.2}$&$85.5_{\pm2.1}$&$88.8_{\pm2.2}$&$85.6_{\pm3.6}$&$635.7_{\pm66.3}$ \\
            \cmidrule{2-19}
            & FF $({\min}, {\max})$ &$85.1_{\pm5.7}$&$83.3_{\pm6.7}$&$84.2_{\pm6.3}$&$92.5_{\pm4.3}$&$73.5_{\pm9.9}$&$89.3_{\pm4.5}$&$86.6_{\pm4.7}$&$83.6_{\pm4.8}$&$83.1_{\pm6.2}$&$92.3_{\pm3.2}$&$96.8_{\pm1.2}$&$92.9_{\pm3.2}$&$93.4_{\pm2.6}$&$91.2_{\pm3.6}$&$95.1_{\pm2.0}$&$88.2_{\pm7.7}$&$63.9_{\pm41.3}$ \\
            & WF $({\min}, {\max})$ &$85.7_{\pm6.4}$&$83.6_{\pm7.1}$&$84.6_{\pm6.8}$&$92.7_{\pm4.7}$&$74.6_{\pm9.5}$&$89.8_{\pm5.0}$&$87.3_{\pm5.5}$&$84.4_{\pm5.9}$&$83.4_{\pm6.6}$&$93.2_{\pm2.9}$&$97.2_{\pm1.2}$&$84.6_{\pm28.8}$&$94.1_{\pm2.6}$&$82.9_{\pm27.9}$&$86.7_{\pm27.8}$&$87.0_{\pm14.1}$&$69.7_{\pm46.6}$ \\
            & FF $(f, U)$ &$87.9_{\pm4.3}$&$86.1_{\pm4.9}$&$86.9_{\pm4.6}$&$95.0_{\pm2.6}$&$80.0_{\pm6.3}$&$91.3_{\pm3.3}$&$89.0_{\pm3.8}$&$86.0_{\pm4.1}$&$85.6_{\pm4.8}$&$93.4_{\pm2.7}$&$97.0_{\pm1.2}$&$94.1_{\pm2.7}$&$94.2_{\pm2.5}$&$92.1_{\pm3.1}$&$95.6_{\pm1.9}$&$90.3_{\pm5.9}$&$79.8_{\pm48.9}$ \\
            & WF $(f, U)$ &$87.4_{\pm5.1}$&$85.3_{\pm5.5}$&$86.4_{\pm5.4}$&$94.7_{\pm2.7}$&$79.7_{\pm6.4}$&$91.2_{\pm3.8}$&$88.8_{\pm4.4}$&$85.8_{\pm4.7}$&$85.3_{\pm5.4}$&$93.4_{\pm2.7}$&$97.1_{\pm1.4}$&$94.1_{\pm2.7}$&$94.5_{\pm2.2}$&$92.3_{\pm2.9}$&$95.6_{\pm2.0}$&$90.1_{\pm6.2}$&$79.7_{\pm50.1}$ \\
            \midrule
            \multirow{9}{*}{\rotatebox[origin=c]{90}{\makecell{Severity 3 \\ with label shift}}} & Uncorrected &$51.4_{\pm8.6}$&$48.3_{\pm11.1}$&$47.5_{\pm8.6}$&$57.0_{\pm8.0}$&$31.4_{\pm8.2}$&$58.6_{\pm7.3}$&$53.5_{\pm7.7}$&$54.1_{\pm6.9}$&$49.8_{\pm9.9}$&$67.3_{\pm5.8}$&$86.6_{\pm4.2}$&$65.4_{\pm10.8}$&$75.4_{\pm8.0}$&$64.8_{\pm11.6}$&$76.5_{\pm7.1}$&$59.2_{\pm15.7}$&$3.2_{\pm1.2}$ \\
            & Oracle &$89.3_{\pm4.0}$&$89.5_{\pm3.9}$&$92.9_{\pm2.6}$&$91.8_{\pm3.0}$&$89.1_{\pm3.5}$&$89.9_{\pm4.0}$&$89.1_{\pm4.3}$&$90.1_{\pm3.4}$&$89.9_{\pm2.9}$&$88.0_{\pm5.2}$&$88.8_{\pm3.3}$&$93.0_{\pm4.2}$&$91.2_{\pm4.3}$&$90.7_{\pm2.6}$&$90.4_{\pm4.2}$&$90.2_{\pm3.8}$&$88.1_{\pm87.4}$ \\
            & LR &$69.2_{\pm8.8}$&$64.3_{\pm12.0}$&$64.6_{\pm11.4}$&$78.0_{\pm8.6}$&$49.9_{\pm12.9}$&$70.4_{\pm9.5}$&$72.9_{\pm5.2}$&$67.6_{\pm12.5}$&$63.3_{\pm13.4}$&$78.5_{\pm6.5}$&$88.7_{\pm6.6}$&$81.2_{\pm4.8}$&$77.2_{\pm6.6}$&$77.3_{\pm5.3}$&$84.1_{\pm6.0}$&$72.5_{\pm12.9}$&$14.3_{\pm15.8}$ \\
            & \citeauthor{kasa2024adapting} &$51.9_{\pm8.6}$&$48.3_{\pm11.1}$&$48.9_{\pm8.0}$&$57.6_{\pm8.5}$&$34.5_{\pm7.0}$&$58.9_{\pm7.6}$&$54.1_{\pm8.0}$&$54.3_{\pm6.9}$&$50.0_{\pm10.0}$&$67.3_{\pm5.8}$&$86.6_{\pm4.2}$&$66.4_{\pm10.6}$&$75.4_{\pm8.0}$&$65.1_{\pm11.3}$&$76.5_{\pm7.1}$&$59.7_{\pm15.2}$&$3.4_{\pm1.4}$ \\
            & \citeauthor{gibbs2023conformal} &$82.7_{\pm2.8}$&$84.6_{\pm2.8}$&$80.7_{\pm3.0}$&$89.4_{\pm2.6}$&$82.9_{\pm3.4}$&$86.5_{\pm3.0}$&$82.9_{\pm3.9}$&$81.5_{\pm4.8}$&$84.4_{\pm3.4}$&$86.0_{\pm4.6}$&$91.0_{\pm2.7}$&$87.8_{\pm4.7}$&$88.3_{\pm3.2}$&$86.0_{\pm4.6}$&$87.9_{\pm4.0}$&$85.5_{\pm4.6}$&$638.6_{\pm70.2}$ \\
            \cmidrule{2-19}
            & FF $({\min}, {\max})$ &$85.6_{\pm4.7}$&$84.9_{\pm8.5}$&$83.5_{\pm9.2}$&$92.9_{\pm4.1}$&$74.9_{\pm13.3}$&$88.9_{\pm5.9}$&$86.1_{\pm7.2}$&$83.8_{\pm6.1}$&$82.9_{\pm8.3}$&$91.7_{\pm4.6}$&$97.2_{\pm2.1}$&$93.8_{\pm4.0}$&$94.2_{\pm3.4}$&$90.1_{\pm4.7}$&$95.3_{\pm3.8}$&$88.4_{\pm8.6}$&$62.6_{\pm41.4}$ \\
            & WF $({\min}, {\max})$ &$84.8_{\pm6.8}$&$85.1_{\pm8.1}$&$86.2_{\pm6.7}$&$92.7_{\pm4.7}$&$74.6_{\pm11.2}$&$90.0_{\pm6.8}$&$85.9_{\pm12.0}$&$81.5_{\pm7.7}$&$82.4_{\pm6.0}$&$92.8_{\pm5.4}$&$88.1_{\pm27.5}$&$93.0_{\pm5.6}$&$94.3_{\pm3.3}$&$82.4_{\pm27.9}$&$95.5_{\pm2.6}$&$87.3_{\pm12.8}$&$69.4_{\pm47.3}$ \\
            & FF $(f, U)$ &$87.5_{\pm3.3}$&$85.6_{\pm5.9}$&$87.7_{\pm7.4}$&$95.4_{\pm3.4}$&$77.3_{\pm8.1}$&$91.7_{\pm4.2}$&$88.0_{\pm5.1}$&$88.3_{\pm4.1}$&$84.7_{\pm4.6}$&$93.3_{\pm4.4}$&$97.7_{\pm1.8}$&$93.8_{\pm3.9}$&$95.6_{\pm1.7}$&$92.4_{\pm3.8}$&$95.6_{\pm1.8}$&$90.3_{\pm6.8}$&$78.9_{\pm51.7}$ \\
            & WF $(f, U)$ &$86.4_{\pm5.6}$&$87.7_{\pm5.5}$&$87.8_{\pm6.5}$&$93.8_{\pm8.9}$&$81.1_{\pm9.9}$&$92.4_{\pm3.9}$&$88.8_{\pm5.7}$&$86.9_{\pm6.1}$&$86.5_{\pm8.0}$&$93.6_{\pm3.4}$&$96.9_{\pm3.0}$&$92.6_{\pm7.2}$&$94.7_{\pm3.3}$&$92.0_{\pm2.7}$&$95.6_{\pm2.2}$&$90.4_{\pm7.0}$&$80.0_{\pm51.4}$ \\
            \midrule
            \multirow{9}{*}{\rotatebox[origin=c]{90}{\makecell{Severity 5 \\ no label shift}}} & Uncorrected &$6.9_{\pm1.8}$&$8.3_{\pm1.9}$&$6.5_{\pm1.8}$&$30.2_{\pm5.4}$&$18.4_{\pm3.7}$&$26.8_{\pm4.3}$&$38.0_{\pm4.5}$&$29.4_{\pm3.7}$&$37.6_{\pm4.1}$&$40.5_{\pm4.3}$&$78.0_{\pm3.2}$&$9.6_{\pm2.6}$&$30.3_{\pm3.8}$&$36.2_{\pm4.4}$&$50.9_{\pm4.7}$&$29.8_{\pm18.9}$&$3.3_{\pm1.5}$ \\
            & Oracle &$89.5_{\pm2.3}$&$89.2_{\pm2.1}$&$89.5_{\pm2.6}$&$89.9_{\pm2.0}$&$89.6_{\pm2.5}$&$90.4_{\pm1.0}$&$89.9_{\pm1.7}$&$89.3_{\pm1.6}$&$89.2_{\pm1.5}$&$90.3_{\pm2.4}$&$89.7_{\pm1.5}$&$89.0_{\pm1.4}$&$90.4_{\pm2.4}$&$89.9_{\pm1.6}$&$90.3_{\pm1.5}$&$89.7_{\pm1.9}$&$338.6_{\pm190.9}$ \\
            & LR &$21.5_{\pm11.4}$&$23.1_{\pm13.0}$&$20.9_{\pm12.6}$&$57.7_{\pm15.6}$&$40.1_{\pm13.6}$&$48.4_{\pm13.9}$&$57.3_{\pm10.2}$&$47.1_{\pm10.5}$&$54.3_{\pm9.7}$&$57.5_{\pm9.1}$&$82.9_{\pm4.3}$&$32.8_{\pm16.5}$&$45.9_{\pm8.1}$&$55.6_{\pm10.7}$&$66.5_{\pm7.7}$&$47.4_{\pm20.2}$&$27.7_{\pm37.0}$ \\
            & \citeauthor{kasa2024adapting} &$76.1_{\pm6.1}$&$71.1_{\pm6.1}$&$75.9_{\pm6.2}$&$96.1_{\pm1.6}$&$87.4_{\pm3.8}$&$87.2_{\pm3.0}$&$89.0_{\pm2.7}$&$82.6_{\pm3.7}$&$86.5_{\pm2.9}$&$88.2_{\pm2.8}$&$97.0_{\pm0.7}$&$93.4_{\pm3.3}$&$78.3_{\pm3.6}$&$89.2_{\pm2.8}$&$93.2_{\pm1.9}$&$86.1_{\pm8.4}$&$282.0_{\pm159.1}$ \\
            & \citeauthor{gibbs2023conformal} &$86.9_{\pm2.2}$&$83.4_{\pm2.9}$&$86.3_{\pm1.9}$&$92.8_{\pm1.7}$&$88.6_{\pm1.7}$&$85.8_{\pm1.9}$&$82.2_{\pm2.8}$&$78.6_{\pm2.4}$&$81.1_{\pm1.9}$&$79.3_{\pm2.6}$&$87.5_{\pm2.4}$&$95.2_{\pm1.1}$&$75.1_{\pm3.4}$&$85.5_{\pm3.5}$&$85.4_{\pm1.7}$&$84.9_{\pm5.6}$&$754.4_{\pm102.7}$ \\
            \cmidrule{2-19}
            & FF $({\min}, {\max})$ &$48.7_{\pm15.2}$&$46.9_{\pm13.5}$&$48.2_{\pm15.1}$&$84.3_{\pm9.7}$&$68.5_{\pm12.8}$&$73.9_{\pm10.1}$&$78.8_{\pm7.8}$&$70.1_{\pm9.6}$&$76.2_{\pm7.9}$&$79.1_{\pm7.9}$&$95.6_{\pm1.9}$&$69.0_{\pm13.6}$&$67.2_{\pm8.4}$&$76.8_{\pm8.6}$&$86.1_{\pm6.0}$&$71.3_{\pm17.0}$&$128.5_{\pm93.0}$ \\
            & WF $({\min}, {\max})$ &$49.8_{\pm14.8}$&$46.9_{\pm14.1}$&$48.3_{\pm15.8}$&$86.1_{\pm7.5}$&$69.2_{\pm12.3}$&$73.6_{\pm10.5}$&$78.8_{\pm8.2}$&$70.0_{\pm10.1}$&$76.9_{\pm7.7}$&$78.9_{\pm8.3}$&$95.7_{\pm2.1}$&$67.9_{\pm14.8}$&$67.1_{\pm8.8}$&$76.8_{\pm9.0}$&$85.9_{\pm6.3}$&$71.5_{\pm17.1}$&$130.5_{\pm94.1}$ \\
            & FF $(f, U)$ &$66.2_{\pm10.8}$&$61.7_{\pm11.2}$&$65.8_{\pm10.8}$&$92.9_{\pm4.5}$&$81.2_{\pm8.1}$&$79.9_{\pm5.6}$&$81.8_{\pm5.7}$&$74.9_{\pm6.6}$&$80.3_{\pm5.4}$&$81.5_{\pm6.1}$&$96.0_{\pm1.9}$&$88.5_{\pm10.9}$&$69.8_{\pm6.7}$&$83.0_{\pm4.9}$&$88.2_{\pm4.3}$&$79.5_{\pm12.1}$&$205.7_{\pm164.4}$ \\
            & WF $(f, U)$ &$66.4_{\pm10.8}$&$57.2_{\pm11.3}$&$63.1_{\pm11.8}$&$93.3_{\pm3.9}$&$79.9_{\pm8.2}$&$79.6_{\pm7.1}$&$82.3_{\pm5.4}$&$73.3_{\pm7.9}$&$79.2_{\pm6.1}$&$81.1_{\pm6.8}$&$96.2_{\pm1.6}$&$90.6_{\pm9.7}$&$69.5_{\pm7.5}$&$80.7_{\pm6.5}$&$87.7_{\pm5.0}$&$78.7_{\pm13.1}$&$200.4_{\pm164.2}$ \\
            \midrule
            \multirow{9}{*}{\rotatebox[origin=c]{90}{\makecell{Severity 5 \\ with label shift}}} & Uncorrected  &$5.0_{\pm2.5}$&$8.6_{\pm6.0}$&$5.7_{\pm2.3}$&$29.2_{\pm6.5}$&$19.8_{\pm6.5}$&$26.9_{\pm7.4}$&$36.5_{\pm8.6}$&$33.1_{\pm7.8}$&$37.9_{\pm11.5}$&$42.9_{\pm7.8}$&$79.4_{\pm5.4}$&$10.4_{\pm6.3}$&$29.0_{\pm9.6}$&$35.5_{\pm11.1}$&$48.1_{\pm8.4}$&$29.8_{\pm20.1}$&$3.3_{\pm1.5}$ \\
            & Oracle &$87.4_{\pm4.0}$&$93.0_{\pm2.8}$&$91.0_{\pm3.5}$&$92.6_{\pm2.9}$&$88.8_{\pm6.2}$&$91.3_{\pm3.4}$&$89.3_{\pm4.2}$&$90.6_{\pm2.7}$&$89.3_{\pm4.7}$&$88.6_{\pm3.8}$&$91.2_{\pm2.8}$&$90.7_{\pm3.5}$&$90.7_{\pm5.3}$&$92.1_{\pm2.8}$&$91.0_{\pm3.5}$&$90.5_{\pm4.0}$&$349.0_{\pm220.0}$ \\
            & LR &$18.1_{\pm9.1}$&$23.1_{\pm11.8}$&$21.8_{\pm14.8}$&$58.9_{\pm14.6}$&$39.1_{\pm16.2}$&$46.2_{\pm15.7}$&$61.7_{\pm6.9}$&$49.2_{\pm14.3}$&$55.0_{\pm14.5}$&$58.8_{\pm8.9}$&$83.0_{\pm8.5}$&$33.3_{\pm16.8}$&$41.7_{\pm10.0}$&$53.6_{\pm10.1}$&$66.3_{\pm9.4}$&$47.3_{\pm21.2}$&$27.7_{\pm33.5}$ \\
            & \citeauthor{kasa2024adapting} &$6.2_{\pm4.0}$&$9.0_{\pm6.1}$&$7.4_{\pm3.2}$&$32.3_{\pm8.4}$&$23.3_{\pm7.4}$&$28.4_{\pm9.1}$&$38.6_{\pm9.0}$&$33.6_{\pm7.7}$&$38.3_{\pm11.6}$&$43.2_{\pm7.6}$&$79.4_{\pm5.4}$&$13.4_{\pm6.8}$&$29.1_{\pm9.4}$&$37.4_{\pm10.7}$&$49.1_{\pm8.4}$&$31.2_{\pm19.9}$&$4.1_{\pm2.3}$ \\
            & \citeauthor{gibbs2023conformal} &$87.0_{\pm2.8}$&$83.8_{\pm3.8}$&$85.8_{\pm3.9}$&$91.8_{\pm1.7}$&$88.1_{\pm2.4}$&$85.5_{\pm3.5}$&$81.0_{\pm3.9}$&$77.3_{\pm5.0}$&$82.0_{\pm4.1}$&$81.2_{\pm4.9}$&$88.3_{\pm3.4}$&$93.8_{\pm2.4}$&$77.0_{\pm3.3}$&$84.2_{\pm6.1}$&$85.0_{\pm3.2}$&$84.8_{\pm5.8}$&$753.5_{\pm100.4}$\\
            \cmidrule{2-19}
            & FF $({\min}, {\max})$ &$47.6_{\pm18.1}$&$50.1_{\pm19.2}$&$48.0_{\pm23.8}$&$86.3_{\pm8.2}$&$71.4_{\pm15.6}$&$74.9_{\pm9.8}$&$78.1_{\pm9.0}$&$73.0_{\pm11.2}$&$76.4_{\pm12.3}$&$81.0_{\pm8.2}$&$95.5_{\pm2.7}$&$68.4_{\pm16.7}$&$64.7_{\pm13.9}$&$79.2_{\pm9.1}$&$84.7_{\pm5.8}$&$72.0_{\pm18.8}$&$127.9_{\pm94.4}$ \\
            & WF $({\min}, {\max})$ &$48.8_{\pm16.0}$&$47.0_{\pm18.0}$&$51.5_{\pm16.9}$&$81.9_{\pm14.7}$&$70.5_{\pm14.4}$&$72.8_{\pm13.4}$&$81.4_{\pm10.6}$&$64.1_{\pm13.2}$&$76.1_{\pm8.7}$&$81.2_{\pm10.7}$&$87.3_{\pm27.2}$&$66.4_{\pm16.6}$&$63.6_{\pm14.2}$&$77.4_{\pm11.5}$&$85.2_{\pm9.4}$&$70.4_{\pm19.2}$&$129.2_{\pm96.3}$ \\
            & FF $(f, U)$ &$61.5_{\pm16.6}$&$60.3_{\pm13.1}$&$66.1_{\pm18.5}$&$92.9_{\pm5.0}$&$77.2_{\pm10.7}$&$80.3_{\pm6.5}$&$81.8_{\pm8.4}$&$78.4_{\pm3.4}$&$80.1_{\pm4.9}$&$82.7_{\pm6.9}$&$96.7_{\pm1.6}$&$90.3_{\pm9.6}$&$75.4_{\pm10.4}$&$84.3_{\pm5.1}$&$89.3_{\pm4.7}$&$79.8_{\pm13.9}$&$206.0_{\pm168.3}$ \\
            & WF $(f, U)$ &$62.1_{\pm14.3}$&$55.6_{\pm16.4}$&$66.0_{\pm6.9}$&$90.7_{\pm15.5}$&$83.3_{\pm11.3}$&$81.1_{\pm8.8}$&$80.4_{\pm9.0}$&$74.1_{\pm9.4}$&$80.3_{\pm10.8}$&$82.0_{\pm6.1}$&$95.3_{\pm4.7}$&$91.6_{\pm8.8}$&$73.0_{\pm11.0}$&$81.8_{\pm6.2}$&$90.0_{\pm3.6}$&$79.2_{\pm14.6}$&$201.3_{\pm168.6}$ \\
            \bottomrule
        \end{tabular}
    }
    \end{minipage}
\end{sidewaystable}


\end{document}